\newtheorem{theorem}{Theorem}
\newtheorem{definition}{Definition}
\newenvironment{proof}
{\noindent {\textbf{Proof:}~ }}
{\hfill\rule{2mm}{2mm} \vspace{\parskip} }
\begin{document}

\title {\bf The Complexity of Manipulating $k$-Approval Elections \thanks{Supported in part by NSF grant IIS-0713061. This work has also appeared in the $3^{rd}$ International Conference on Agents and Artificial Intelligence \cite{Lin11}}
}
\author{
Andrew Lin\\
\small Department of Computer Science\\[-0.8ex]
\small Rochester Institute of Technology\\[-0.8ex]
\small Rochester, NY  14623, USA\\[-0.8ex]
\small \texttt{apl8378@cs.rit.edu}\\
}

\date{}

\maketitle

\begin{abstract}

An important problem in computational social choice theory is the complexity of undesirable behavior among agents, such as control, manipulation, and bribery in  election systems. These kinds of voting strategies are often tempting at the individual level but disastrous for the agents as a whole. Creating election systems where the determination of such strategies is difficult is thus an important goal.

An interesting set of elections is that of scoring protocols. Previous work in this area has demonstrated the complexity of misuse in cases involving a fixed number of candidates, and of specific election systems on unbounded number of candidates such as Borda. In contrast, we take the first step in generalizing the results of computational complexity of election misuse to cases of infinitely many scoring protocols on an unbounded number of candidates. Interesting families of systems include $k$-approval and $k$-veto elections, in which voters distinguish $k$ candidates from the candidate set.

Our main result is to partition the problems of these families based on their complexity. We do so by showing they are polynomial-time computable, NP-hard, or polynomial-time equivalent to another problem of interest. We also demonstrate a surprising connection between manipulation in election systems and some graph theory problems.
\end{abstract}


\section{\uppercase{Introduction}}

Elections are a means for choosing one or more candidates given the preferences of individuals to arrive at a decision that attempts to maximize the collective welfare of the individuals (see also~\cite{Brams}). A voting system contains both rules for valid voting, and how one yields a final outcome given the preferences of individual voters. The choice of which rule one uses to choose the winner(s) may affect both the outcome of the election and the behavior of each voter. Voters may be unweighted or weighted, and need not count equally toward the final result.

The computational theory behind voting, as well as systems beyond that of the obvious majority and plurality, were first studied during the French Revolution. Two notable early theorists in this field were Jean-Charles de Borda~\cite{Borda} and Marquis de Condorcet~\cite{Condorcet}. Borda introduced the system used for electing members to the French Academy of Sciences in 1770. The Borda system favors candidates that are somewhat liked by a lot of voters. Condorcet believed the winner should be one who fares well in pairwise comparisons of the candidates, a concept that is poorly captured by the Borda election, as it does not view pairs of candidates.

Condorcet discovered that, among voters with transitive preferences (meaning that if a voter prefers $x$ to $y$ and $y$ to $z$, he or she will prefer $x$ to $z$), majority preferences between pairs of candidates may be intransitive. This means that among candidates $x$, $y$, and $z$, it is possible for a majority of voters to prefer $x$ to $y$, $y$ to $z$, and $z$ to $x$, and thus a Condorcet winner, who outperforms each alternative in a pairwise election, may not exist. The result that the pairwise preferences of the aggregate of rational voters may be irrational is known as the Condorcet paradox.

Beyond the applications in social sciences, in more recent times voting has also been used in many interesting problems in computation theory, including rank aggregation in search engines~\cite{DKN,WH} and collaborative filtering~\cite{GNO}. Rank aggregation can also occur when one wishes to select documents or other results based on multiple criteria \cite{LKH}. Some of these applications will be addressed in the next section.

Unfortunately, it is now known that no voting system is perfect, and three major results show the weaknesses inherent in voting. Arrow's impossibility theorem~\cite{Arr} shows that in any election with as few as three candidates, no voting system can convert the ranked preferences of a group of voters into a community-wide ranking meeting a reasonable set of criteria: unrestricted domain, meaning the aggregation is deterministic and complete for all possible profiles of the voters, nondictatorship, meaning that no single voter should be able to single-handedly determine the outcome, Pareto efficiency, meaning that if every voter prefers a candidate to another, this order must also hold in the aggregation, and independence of irrelevant alternatives, meaning that changes to the voters' rankings of irrelevant candidates should not change the outcome for the candidate in question.

Beyond the problems of fairness, election systems are often subject to misuses, where the outcome of the election is unfairly affected by some of the voters, the chair of the election, or outside agents whom attempt to influence the voters. One form of misuse is when a voter reports preferences that are insincere to his or her true preferences for his or her own benefit, or when a group of voters conspire to collectively affect the outcome of the election. This is called manipulation, and is often disastrous for the system as a whole. In particular, a common problem in many election systems is the design that encourages voters to bury their $2^{nd}$ desired candidate, who may also be popular, among their preference list, giving their favorite candidate a more significant advantage. Done as a whole in the election, this distorts the true preferences of the participants. Two important results showing this weakness are the Gibbard-Satterthwaite theorem~\cite{Sat,Gib} and Duggan-Schwartz theorem~\cite{DS}, which show that eliminating the possibility of manipulation is impossible for any reasonable election system.

The Gibbard-Satterthwaite theorem~\cite{Sat,Gib} shows that every election of three or more candidates that chooses a single winner must either be dictatorial (i.e., where a single individual can choose the winner), nonsurjective (i.e., where some candidate cannot win under any circumstances), or is theoretically vulnerable to tactical voting in some cases. Tactical voting occurs when a voter or coalition of voters with full knowledge of the other voters' preferences has an incentive to vote contrary to their true preferences. Since neither of the first two properties are acceptable in any reasonable election system, this can be interpreted as all elections are subject to manipulation. The Duggan-Schwartz theorem~\cite{DS} shows the same result with elections that elect a nonempty set of winners.

We illustrate the problem of manipulation with the following example. In the election system Veto, the lowest candidate in each preference profile is given one veto, and the candidate with the fewest vetoes wins. Consider a system of three candidates, $x$, $y$, and $z$, and, initially, $100$ voters. $40$ initial voters have preferences expressed by $y \succ z \succ x$, $35$ have preferences $x \succ z \succ y$, and $25$ have preferences $y \succ x \succ z$. A total of $22$ manipulators have sincere preferences expressed by $x \succ y \succ z$. In an honest election, $x$ would receive $40$ vetoes, $y$ $35$ vetoes, and $z$ $47$ vetoes, making $y$ the winner. In this case, the $22$ manipulators can collectively make $x$ the winner if $6$ of the manipulators submit the preference $x \succ z \succ y$. Because $x$ would not be the winner of an honest Veto election, such manipulations may produce a sub-optimal outcome. In this example, the manipulators need to know the preference orderings of the honest voters, which may or may not be true depending on the problem of interest. Unfortunately, all reasonable election systems have cases in which such voting behavior is encouraged.

In a related problem, an outside agent can also convince, or bribe, some of the agents to change their votes~\cite{FHH06}. In the computational problem of bribery, the goal of the briber is to modify the outcome of the election with the least amount of effort. There are many interesting problem models. Most notably, voters may be unpriced or priced. In the unpriced case, we are attempting to find a bribery which changes the preferences of the fewest number of voters, and in the priced case, each voter is given a price tag. In this case, we want to find the cheapest bribery. In other cases, the cost of bribing a voter may depend on how significantly his or her preferences are to be changed~\cite{FHH06} (see also \cite{FHHR07}).

In the above example, we have a total of $122$ unweighted voters, $40$ of which choose $x$ as their least favorite candidate, $35$ choose $y$, and $47$ choose $z$, making $y$ a unique winner. We may make $x$ a winner by convincing three voters whom dislike $x$ to submit preference profile $z \succ x \succ y$ instead. This gives $x$ a total of $37$ vetoes, $y$ $38$ vetoes, and $z$ $47$ vetoes. This is an optimum solution if each voter is equally costly to bribe, as it bribes the fewest voters.

In other cases that we will evaluate, some voters are more costly to bribe than others. For example, in a campaign, it may be noted that a subgroup of voters will take more effort for one to convince. In the example above, depending on the price to bribe each voter, it may be the case that it is less costly to bribe five (six if a unique winner is desired) voters whom dislike $z$, to veto $y$ instead. This gives $x$ a total of $40$ vetoes, $y$ $40$ vetoes, and $z$ $42$ vetoes.

It is further possible for the chair of an election to control the outcome of an election by manipulating the set of voters or candidates that will be involved~\cite{BTT92}. One such way to control is by encouraging or discouraging potential voters from participating. One prominent example in politics occurred in 1971 when President Nixon signed the $26^{th}$ amendment into the Constitution of the United States, lowering the legal voting age from 21 to 18, and thus adding a set of voters to the election. The chair can also partition the set of voters to modify the outcome of the election, similar to what is seen with electoral colleges in the United States. As often observed in the presidential elections, the winner of the popular (i.e., plurality) vote need not be the winner of the electoral vote. A related problem, in which some candidates are cloned (i.e., a new candidate with similar properties is introduced to split the voters of this candidate), has also been evaluated \cite{EFS} for several systems, including $k$-Approval. It is also possible to partition the set of candidates, as is done with the candidates of the Republican and Democratic parties, as well as to add or remove candidates.

In practice, an agent wishing to accomplish its goals will likely resort to a combination of more than one of the above strategies. Such problems are known computationally as multimode attacks~\cite{FHH,FHH11}, and will not be covered in this paper.

Bartholdi, Tovey, and Trick~\cite{BTT89} challenged the impossibility of avoiding election misuse by making the observation that manipulation only constitutes a threat when it is computationally feasible to determine how one may manipulate the election for a given instance in the system of interest. They considered an election to be computationally resistant to manipulation if determining such an exploit is NP-hard, and computationally vulnerable if it is polynomial-time computable, and proved a number of P and NP-hard results for manipulating some election systems with one strategic voter. Since then, many results have characterized this complexity result. For example, Conitzer, Lang, and Sandholm~\cite{CLSjour} evaluate the model of manipulation by a coalition of voters. In this model, which we also use in this chapter, the coalition of voters are working together to affect the outcome of the election.

The line of reasoning of Conitzer, Lang, and Sandholm, of making coalitional manipulation at least NP-hard, was applied to bribery in~\cite{FHH06}. Hemaspaandra and Hemaspaandra~\cite{HH07} characterize these hardness results for the case of scoring protocols, by showing exactly which scoring protocols are computationally resistant or vulnerable to manipulation.

There also exist other models of interest for the problem of election manipulation, which we will not review in this chapter. In some models, each manipulator is not aware of the existence or actions of other potential manipulators. Models of manipulation under limited communications in these cases include that of safe manipulation \cite{SW08} (see also \cite{IYEW}). In this case, the manipulator wishes to vote in a way such that the election never produces an undesirable outcome. Other possible models involve probabilistic measures of susceptibility to manipulation \cite{RPW}.

We examine an infinite set of election systems, approval-based 
families of scoring protocols, where each candidate approves of some function $0 \leq f(m) \leq m$ of the $m$ candidates in the election, evaluating the worst-case complexities of various forms of attack. Essentially, we make the first attempt at extending the work of~\cite{HH07,BTT92} to scoring protocols of an unbounded number of candidates, by looking at an infinite set of election systems. A characterization of an infinite set of election systems is also evaluated in \cite{FHS08}.

We find that there are generally only a few cases for the complexity of manipulation in such election systems: Nearly all problems of misuses of election systems involving controlling the voter set are hard by reductions from Set-Cover-type problems, while some problems involving controlling the candidate set are hard by a reduction from Hitting Set. Other cases are easy by greedy algorithms or variations of the Edge Cover problem. We hope that from our work one can gain a better understanding of what properties of elections make them computationally resistant or vulnerable to manipulation, so that our work can be extended to more general forms of elections.

\section{Table of Results}

We summarize the worst-case complexity results of this chapter in the tables below, with new results in bold.\footnote{In the table, swb is the complexity of Simple Weighted $b$-Edge Cover of Multigraphs and sbw that of Simple $b$-Edge Weighted Cover of Multigraphs.}

\subsubsection{Unweighted Cases}

\noindent \begin{tabular}{ l || c | c | c | c | }

& 1-app & 2-app & 3-app & $k$-app, $k \geq 4$  \\

\hline

\hline

Constructive Manipulation                   & P    & P    & P    & P \\

\hline

Constructive Bribery                        & P    & \textbf{P}    & \textbf{NPC}  & \textbf{NPC}  \\

\hline

Constructive Control by Adding Voters       & P    & \textbf{P}    & \textbf{P}    & \textbf{NPC} \\

\hline

Constructive Control by Deleting Voters     & P    & \textbf{P}    & \textbf{NPC}  & \textbf{NPC} \\

\hline

Constructive Control by Adding Candidates   & NPC  & \textbf{NPC}  & \textbf{NPC}  & \textbf{NPC} \\

\hline

Constructive Control by Deleting Candidates & NPC  & \textbf{NPC}  & \textbf{NPC}  & \textbf{NPC} \\

\hline

\end{tabular}

\vspace{10 mm}

\noindent \begin{tabular}{ l || c | c | c | c | }

& 1-veto & 2-veto & 3-veto & $k$-veto, $k \geq 4$ \\

\hline

\hline

Constructive Manipulation                   & P    & P    & P    & P \\

\hline

Constructive Bribery                        & P    & \textbf{P}    & \textbf{P}    & \textbf{NPC} \\

\hline

Constructive Control by Adding Voters      &  P    & \textbf{P}    & \textbf{NPC}  & \textbf{NPC}\\

\hline

Constructive Control by Deleting Voters     & P    & \textbf{P}    & \textbf{P}    & \textbf{NPC}\\

\hline

Constructive Control by Adding Candidates   & NPC  & \textbf{NPC}  & \textbf{NPC}  & \textbf{NPC}\\

\hline

Constructive Control by Deleting Candidates  & NPC  & \textbf{NPC}  & \textbf{NPC}  & \textbf{NPC} \\

\hline

\end{tabular}

\subsubsection{Weighted Voter Cases}

\noindent \begin{tabular}{ l || c | c | c | c |}

& 1-app & 2-app & 3-app & $k$-app, $k \geq 4$\\

\hline

\hline

Constructive Manipulation                   & P    & NPC  & NPC  & NPC   \\

\hline

Constructive Bribery                        & P  & NPC  & NPC  & NPC \\

\hline

Constructive Control by Adding Voters       & P    & ?    & ?    & \textbf{NPC} \\

\hline

Constructive Control by Deleting Voters     & P    & \textbf{sbw}    & \textbf{NPC}  & \textbf{NPC} \\

\hline

Constructive Control by Adding Candidates   & NPC  & \textbf{NPC}  & \textbf{NPC}  & \textbf{NPC} \\

\hline

Constructive Control by Deleting Candidates   & NPC  & \textbf{NPC}  & \textbf{NPC}  & \textbf{NPC} \\

\hline

\end{tabular}

\vspace{10 mm}

\noindent \begin{tabular}{ l || c | c | c | c |}

& 1-veto & 2-veto & 3-veto & $k$-veto, $k \geq 4$\\

\hline

\hline

Constructive Manipulation                   & NPC  & NPC  & NPC  & NPC \\

\hline

Constructive Bribery                 & NPC  & NPC  & NPC  & NPC\\

\hline

Constructive Control by Adding Voters       & P    & \textbf{sbw}    & \textbf{NPC}  & \textbf{NPC}\\

\hline

Constructive Control by Deleting Voters       & P    & ?    & ?    & \textbf{NPC}\\

\hline

Constructive Control by Adding Candidates       & NPC  & \textbf{NPC}  & \textbf{NPC}  & \textbf{NPC}\\

\hline

Constructive Control by Deleting Candidates       & NPC  & \textbf{NPC}  & \textbf{NPC}  & \textbf{NPC}\\

\hline

\end{tabular}

\subsubsection{Unweighted \$Bribery Cases}

\noindent \begin{tabular}{ c || c | c | c | c | c | c | c | c | }

& 1-app & 2-app & 3-app & $k$-app, $k \geq 4$  \\

\hline

\hline

Constructive \$Bribery                        & P & ? & \textbf{NPC}  & \textbf{NPC} \\

\hline

\end{tabular}

\vspace{10 mm}

\noindent \begin{tabular}{ c || c | c | c | c | c | c | c | c | }

& 1-veto & 2-veto & 3-veto & $k$-veto, $k \geq 4$ \\

\hline

\hline

Constructive \$Bribery                        & P & $\geq_{p}$ \textbf{swb} & ? & \textbf{NPC} \\

\hline

\end{tabular}


\section{\uppercase{Preliminaries}}

In this chapter, we define what an election system is, and formally define the problems that may occur in the system: manipulation, control, and bribery. We then define other computational problems, such as Edge Cover and Set Cover, which will be utilized in evaluating the complexity of these election problems. Finally, we will review previous complexity results of these related computational problems.

\section{Definition of Elections}\label{sect:defelect}

An election $E = (C,V)$ is defined as a pair of a set of voters $V = \{v_1,\ldots,v_n\}$ and a set of candidates $C = \{c_1,\ldots,c_m\}$. Each voter $v_i$ has a preference over the candidates. A common model for understanding voter preferences is that of the transitive voter model. In the transitive voter model, which we will use exclusively in this chapter, each preference is a strict linear ordering over the candidates $c_{i_1} \succ \cdots \succ c_{i_m}$. But voter preferences need not be transitive (see, e.g., \cite{FHHR07}, in which the voter submits a reflexive and antisymmetric binary ordering over the candidates $C$). Furthermore, voters in some election systems cannot submit their vote simply as a preference ordering. A prominent example is approval, which we will define below.

Elections may be weighted or unweighted. In a weighted election, each voter $v_i$ has a weight, $w(v_i)$, and their vote is counted as $w(v_i)$ individual votes in an unweighted election with the same preference ordering.

\subsection{Election Systems of Interest}~\label{sect:electsys}

An election system $\mathcal{E}$ specifies how one arrives at the outcome given the collection of voter preferences. Depending on the context, the outcome we may be interested in may be a winner, a nonempty subset of winners, or an aggregate preference ordering. Most typically, an election for us will be what in the literature is called a social choice correspondence, namely, a mapping from the candidates and the voter preferences to a subset (known as the winner set) of the set of candidates. There are many systems of interest, some of which involve multiple rounds. Perhaps the most common election system in everyday use is that of plurality, in which the candidate(s) most frequently ranked first among the voters is elected. Plurality is an election system in a more general family of election systems called scoring protocols.

A \emph{scoring protocol} (see the handbook article \cite{BF02}) is defined over a vector $(\alpha_1,\ldots,\alpha_m) \in {\textrm{Z}}^m$. Each candidate $c$ is given $\alpha_i$ points for each voter that ranks $c$ in the $i^{th}$ position of his or her ranking. The candidate(s) with the highest score wins. In addition to plurality, common scoring protocols include veto, Borda count, and approval-based systems with a fixed number of candidates.

A \emph{family of scoring protocols} is an infinite series of scoring protocols $(\alpha^1,\ldots,\alpha^m,\ldots)$, where $\alpha^m=(\alpha^m_1,\ldots,\alpha^m_m)$ is a scoring protocol of $m$ candidates.

In \emph{plurality}, a common form of elections in political science, each voter gives one point to his or her favorite candidate, whereas in \emph{veto}, each voter approves, or gives one point, to all but one candidate: This has the effect of vetoing one candidate. Plurality and veto can thus be considered a family of scoring protocols of the form $\alpha^m = (1,0,\ldots,0)$ and $\alpha^m = (1,\ldots,1,0)$ respectively.

In \emph{$k$-approval}, each voter gives his or her $k$ favorite candidates $1$ point. Similarly, in \emph{$k$-veto}, each voter vetoes his or her least $k$ favorite candidates.

Although not a scoring protocol, another common election system is \emph{approval}~\cite{BFapp}. In approval voting, each voter can approve as many or as few candidates as he/she chooses. In some models, voters may still have a preference order, along with the number of candidates approved. The winner(s) is thus the candidate(s) with the most number of approvals (or weights thereof). In~\cite{FHH06,HHR07}, it is shown that approval is computationally resistant to many forms of misuse involving the voter set, even with unweighted voters, for constructive and destructive cases. We will in fact examine the properties of this election system that give rise to the resistance.

A generalization of $k$-approval and $k$-veto, \emph{$f(m)$-approval}, where $f$ is a function of the number of candidates $m$, is an election where each voter gives $1$ point to each of his or her $f(m)$ favorite candidates. In this case, we will assume that $f(m)$ may be computed in time polynomial with respect to the quantity $m$. This ensures we may compute the outcome of an instance of this election in polynomial time.

Another interesting scoring protocol is the \emph{Borda count}~\cite{Borda}. For $m$ candidates, this election is the scoring protocol defined by the vector $(m-1,m-2,\ldots,1,0)$. In this case, the first preference of each voter is given $m-1$ points, the second $m-2$ points, and so forth.

\section{Problems of Interest in Elections}

A common problem in all election systems of interest is the incentive for dishonesty. In all election systems of interest, there exist instances in which it is advantageous for some of the voters to vote dishonestly, affecting the outcome of the election to their advantage~\cite{Gib,Sat,DS}. Because manipulation is to be carried out by agents with supposedly limited computational power, Bartholdi, Tovey, and Trick~\cite{BTT89} make the observation that manipulation is only a threat to the integrity of the election when the determination of the manipulation is tractable. It is thus of interest to evaluate the complexity, worst-case or otherwise, of computing how these agents may capitalize on these weaknesses. Two other interesting problems include bribery and control: Elections may be bribed~\cite{FHH06}, in which an outside agent influences the election by affecting the voters, or controlled~\cite{BTT92}, in which the chair of the election affects the election by modifying the set of participating voters and candidates.

Each of these problems come in two flavors: constructive, in which our goal is to ensure that a specified distinguished candidate is a winner, and destructive, where our goal is to ensure that such is not a winner. In manipulation, we attempt to reach this goal by giving preferences to a set of unestablished voters, whereas in bribery we do so by changing a given number of votes. There are two subclasses of control problems, those that alter the voter set (i.e., add or delete voters from the election) and those that alter the candidate set.

We define these problems as follows.

\begin{description}
\item[Name:] \emph{$\mathcal{E}$-Manipulation}~\cite{BTT89,BO91,CLSjour}
\item[Instance:] A set $C$ of candidates, a set $V$ of established voters, and $V'$ of unestablished voters such that $V \cap V' = \emptyset$, and distinguished candidate $p$.
\item[Question (constructive manipulation):] Does there exist an assignment of preferences for $V'$ such that $p$ is a winner of the $\mathcal{E}$ election with candidate set $C$ and voter set $V \cup V'$?
\item[Question (destructive manipulation):] Does there exist an assignment of preferences for $V'$ such that $p$ is not a winner of the $\mathcal{E}$ election with candidate set $C$ and voter set $V \cup V'$?
\end{description}

Another problem of interest is the cases where voters have weights. In such case, we denote the problem by $\mathcal{E}$-weighted-constructive-manipulation and $\mathcal{E}$-weighted-destructive-manipulation.

\begin{description}
\item[Name:] \emph{$\mathcal{E}$-Bribery}~\cite{FHH06}
\item[Instance:] A set $C$ of candidates, a set $V$ of voters, distinguished candidate $p$, and nonnegative integer quota $q$.
\item[Question (constructive bribery):] Is it possible to make $p$ a winner of the $\mathcal{E}$ election by changing the preference profiles of at most $q$ voters in $V$.
\item[Question (destructive bribery):] Is it possible to make $p$ not a winner of the $\mathcal{E}$ election by changing the preference profiles of at most $q$ voters in $V$.
\end{description}

As in the cases of manipulation, bribery is also defined for cases of weighted voters. As above, we denote this problem $\mathcal{E}$-weighted-bribery. In addition, each voter $v$ can be assigned a price tag $\pi(v)$ of a nonnegative integer. In this case, $q$ is our budget, and we want to achieve our goal (constructive or destructive bribery) by spending at most $q$ in our bribery. We denote this problem by $\mathcal{E}$-\$bribery and $\mathcal{E}$-weighted-\$bribery if voters have both weights and prices.
There are six problems of interest in the case of control, as we may add, delete, or partition, either voters or candidates. Also in all of these cases, voters may be weighted.

The cases of constructive control, except the versions used here of adding candidates, are from \cite{BTT92}, while those of destructive control, as well as the versions of constructive control by adding candidates used here, are from~\cite{HHR07}.

\begin{description}
\item[Name:] \emph{$\mathcal{E}$-Control by Adding Voters}
\item[Instance:] A set $C$ of candidates, a set $V$ of established voters, and $V'$ of unestablished voters such that $V \cap V' = \emptyset$, distinguished candidate $p$, and nonnegative integer quota~$q$.
\item[Question (constructive control):] Does there exist a subset $V'' \subseteq V'$ with $||V''|| \leq q$ such that $p$ is a winner of the $\mathcal{E}$ election with candidate set $C$ and voter set $V \cup V''$?
\item[Question (destructive control):] Does there exist a subset $V'' \subseteq V'$ with $||V''|| \leq q$ such that $p$ is not a winner of the $\mathcal{E}$ election with candidate set $C$ and voter set $V \cup V''$?
\end{description}

\begin{description}
\item[Name:] \emph{$\mathcal{E}$-Control by Deleting Voters}
\item[Instance:] A set $C$ of candidates, a set $V$ of established voters, distinguished candidate $p$, and nonnegative integer quota~$q$.
\item[Question (constructive control):] Does there exist a subset $V' \subseteq V$ with $||V'|| \leq q$ such that $p$ is a winner of the $\mathcal{E}$ election with candidate set $C$ and voter set $V - V'$?
\item[Question (destructive control):] Does there exist a subset $V' \subseteq V$ with $||V'|| \leq q$ such that $p$ is not a winner of the $\mathcal{E}$ election with candidate set $C$ and voter set $V - V'$?
\end{description}

\begin{description}
\item[Name:] \emph{$\mathcal{E}$-Control by Partitioning Voters}
\item[Instance:] A set $C$ of candidates, a set $V$ of established voters, and distinguished candidate~$p$.
\item[Question (constructive control):] Does there exist a partition of candidates $V = V_1 \cup V_2$ such that $p$ is a winner of the election with the candidates given by the union of the winners of $E_1 = (C,V_1)$ and $E_2 = (C,V_2)$?
\item[Question (destructive control):] Does there exist a partition of candidates $V = V_1 \cup V_2$ such that $p$ is not a winner of the election with the candidates given by the union of the winners of $E_1 = (C,V_1)$ and $E_2 = (C,V_2)$?
\end{description}

\begin{description}
\item[Name:] \emph{$\mathcal{E}$-Control by Adding Candidates}
\item[Instance:] A set $C$ of candidates, a set $V$ of established voters, and $C'$ of unestablished voters such that $C \cap C' = \emptyset$, distinguished candidate~$p$, and nonnegative integer quota $q$.
\item[Question (constructive control):] Does there exist a subset $C'' \subseteq C'$ with $||C''|| \leq q$ such that $p$ is a winner of the $\mathcal{E}$ election with candidate set $C \cup C''$ and voter set $V$?
\item[Question (destructive control):] Does there exist a subset $C'' \subseteq C'$ with $||C''|| \leq q$ such that $p$ is not a winner of the $\mathcal{E}$ election with candidate set $C \cup C''$ and voter set $V$?
\end{description}

\begin{description}
\item[Name:] \emph{$\mathcal{E}$-Control by Deleting Candidates}
\item[Instance:] A set $C$ of candidates, a set $V$ of established voters, distinguished candidate $p$, and nonnegative integer quota $q$.
\item[Question (constructive control):] Does there exist a subset $C' \subseteq C$ with $||C'|| \leq q$ such that $p$ is a winner of the $\mathcal{E}$ election with candidate set $C - C'$ and voter set $V$?
\item[Question (destructive control):] Does there exist a subset $C' \subseteq C \backslash \{p\}$ with $||C'|| \leq q$ such that $p$ is not a winner of the $\mathcal{E}$ election with candidate set $C - C'$ and voter set $V$? In this case, note that we may not deleted the candidate $p$ in which we are manipulating against.
\end{description}

\begin{description}
\item[Name:] \emph{$\mathcal{E}$-Control by Partitioning Candidates}
\item[Instance:] A set $C$ of candidates, a set $V$ of established voters, and distinguished candidate~$p$.
\item[Question (constructive control):] Does there exist a partition of candidates $C = C_1 \cup C_2$ such that $p$ is a winner of the election with the candidates given by the union of the winners of $E_1 = (C_1,V)$ and $E_2 = (C_2,V)$?
\item[Question (destructive control):] Does there exist a partition of candidates $C = C_1 \cup C_2$ such that $p$ is not a winner of the election with the candidates given by the union of the winners of $E_1 = (C_1,V)$ and $E_2 = (C_2,V)$?
\end{description}

In all of these cases, we say that an election is \emph{computationally vulnerable} to a particular form of misuse, such as manipulation, bribery, or control if the corresponding problem is polynomial-time computable. The election is \emph{computationally resistant} to the misuse if the corresponding problem is NP-hard.

\subsection{Some Important NP-Complete Problems}

Common NP-complete problems of choice for showing NP-hardness of election systems include versions of Set Cover, Knapsack, and Hitting Set.

Exact 3-Set Cover (X3C) is a common restricted version of Set Cover.

\begin{description}
\item[Name:] \emph{Exact Cover by 3-Sets (X3C)}~\cite{Karp} (See also~\cite[problem SP2]{GJ79})
\item[Instance:] A set $S = \{s_1,\ldots,s_{3m}\}$ and subsets $T_1,\ldots,T_n \subseteq S$ such that $||T_i||=3$.
\item[Question:] Does there exist a set $A \subseteq \{1,\ldots,n\}$ such that $||A||=m$ and $\bigcup_{i \in A}{T_i} = S$?
\end{description}

Other variations of restricted set cover type problems also exist, such as Exact $\frac{3}{4}$-Set Cover~\cite{FHS08}, which was used to show the NP-completeness of bribery in the Borda count election~\cite{BFHSS}.

An important problem used in showing complexities of problems involving control by adding or deleting candidates is Hitting Set.

\begin{description}
\item[Name:] \emph{Hitting Set}~\cite{Karp} (See also~\cite[problem SP8]{GJ79})
\item[Instance:] A set $S = \{s_1,\ldots,s_m\}$, $n$ subsets of $S$, $T_1,\ldots,T_n$, and positive integer $1 \leq q \leq m$.
\item[Question:] Does there exist a subset of $q$ elements of $S$, $S'=\{s_{i_1}, \ldots, s_{i_q}\}$, such that $S' \cap T_i \neq \emptyset$ for each $1 \leq i \leq n$? Thus, each subset $T_i$ contains at least one element of $S'$.
\end{description}

Hitting Set can also be viewed as a generalization of Vertex Cover to hypergraphs, where edges connect an arbitrary number of vertices. This problem was used to prove that unweighted plurality elections are computationally resistant to control by adding and deleting candidates in~\cite{BTT92}. We will generalize this result.

\section{Edge Covering and Edge Matching Problems}

We will show in this paper that some problems in some election systems are closely related to problems involving edge coverings. This is especially true for elections that distinguish two candidates from the remaining candidates. Due to this connection, these problems generally have similar complexities to that of some forms of edge covering problems.  An edge cover, as well as the well-known decision problem of Edge Cover, is defined as follows.

\begin{definition}
An \emph{edge cover} of a graph is a set of edges such that every vertex of the graph is incident to at least one edge of the set.
\end{definition}

\begin{description}
\item[Name:] \emph{Edge Cover}~\cite{Karp} (See also~\cite[pages 79, 190]{GJ79})
\item[Instance:] An undirected graph $G=(V,E)$ and positive integer $q$.
\item[Question:] Does there exist an edge cover $C \subseteq E$ for $G$ of size at most $q$?
\end{description}

It is possible to find a smallest edge cover in polynomial time, by finding a maximum matching and extending it greedily so that all vertices are covered. We will use several interesting variations of Edge Cover in this paper, which are defined as follows.

In the variation $b$-Edge Cover, each vertex $v$ is to be covered by a minimum of some number, $b(v)$, of edges. There are several interesting variations of this problem: Each edge can be chosen only once (Simple $b$-Edge Cover), an arbitrary number of times ($b$-Edge Cover), or have a capacity and be chosen up to that many times (Capacitated $b$-Edge Cover) (see~\cite{Sch} sections 34.1, 34.7, and 34.8), all of which are polynomial-time computable~\cite{Pul73,CM78,Gab83,Ans87}. In each case, polynomial-time computability is due to the relationship between edge coverings and edge matchings. More specifically, the size of the minimum edge covering and the size of the maximum edge matching always sum to the total of the $b$-values. The maximum edge matching can then be computed using linear programming.

We further extend this problem to that of multigraphs, defining Simple $b$-Edge Cover of Multigraphs as follows.

\begin{description}
\item[Name:] \emph{Simple $b$-Edge Cover of Multigraphs}
\item[Instance:] An undirected multigraph $G=(V,E)$, a function $b:V \rightarrow {\textbf{\textrm{Z}}^{+}}$ and positive integer $q$.
\item[Question:] Does there exist a subset of at most $q$ edges $C \subseteq E$ such that each vertex $v \in V$ is incident to at least $b(v)$ edges in $C$?
\end{description}

This variation is also polynomial-time computable, as follows.

\begin{theorem}
Simple $b$-Edge Cover for Multigraphs is polynomial-time computable.
\end{theorem}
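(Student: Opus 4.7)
The plan is to reduce Simple $b$-Edge Cover of Multigraphs in polynomial time to Simple $b$-Edge Cover on simple graphs, which is known by the cited work of Pulleyblank, Cornu\'ejols--Marcotte, Gabow, and Anstee to be polynomial-time computable. The idea is to eliminate parallel edges by a uniform subdivision so that the simple-graph algorithm can be applied as a black box.

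Given an instance $(G, b, q)$ with $G = (V,E)$ a multigraph, I would subdivide each edge twice: for every $e = \{u,v\} \in E$ introduce two fresh vertices $x_e$ and $y_e$, replace $e$ by the three-edge path $u - x_e - y_e - v$, and define $b'(x_e) = b'(y_e) = 1$, with $b'$ agreeing with $b$ on the original vertex set. Because each subdivision uses its own private internal vertices, the resulting graph $G'$ is simple, and $|V'|+|E'|$ is linear in the input size. The key claim is that $G$ admits a simple $b$-edge cover of size at most $q$ if and only if $G'$ admits a simple $b'$-edge cover of size at most $q + |E|$; running a polynomial-time algorithm for the simple-graph problem on $(G', b', q+|E|)$ then decides the original instance.

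To prove the equivalence, one direction maps each $b$-cover $F \subseteq E$ of $G$ to the set $F' \subseteq E'$ that takes both end edges $(u, x_e)$ and $(y_e, v)$ whenever $e \in F$ and the middle edge $(x_e,y_e)$ alone whenever $e \notin F$; this is a $b'$-cover of size $|E| + |F|$. For the reverse direction, an exchange argument puts any minimum $b'$-cover into a canonical form in which every subdivision path uses either the middle edge alone or both end edges alone. The exchange is justified by examining the four feasible configurations of each path: a path with all three edges selected can be improved by dropping its middle edge, saving one unit of cost while preserving every vertex degree at the original endpoints; a path using one end edge plus the middle edge can be swapped for the two-end-edge configuration at equal cost, strictly increasing the contribution at the other endpoint (and since $b$ imposes only a lower bound, oversupply is always allowed). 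The edges $e$ whose path is in the two-end configuration after this cleanup then form a simple $b$-cover of $G$ of size exactly $|F'|-|E|$.

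The main obstacle I anticipate is verifying the exchange argument covers all cases without ever violating feasibility at the original vertices $u,v$. This is handled by noting that each of the local swaps weakly increases the degree at $u$ and $v$ while strictly decreasing, or preserving, total cost, so neither the $b$-constraints at original vertices nor the $b'$-constraints at the internal vertices can be broken. Once the canonical form is secured, the two halves of the equivalence are immediate, and combining the reduction with the polynomial-time algorithm for Simple $b$-Edge Cover on simple graphs yields the desired result.
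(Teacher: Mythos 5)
Your argument is correct, but it takes a genuinely different route from the paper. The paper eliminates parallel edges by collapsing them: it forms the complete graph on $V$ and reduces to \emph{Capacitated} $b$-Edge Cover, setting the capacity $c(v_1,v_2)$ to the multiplicity of the edge $\{v_1,v_2\}$ in $G$, so that choosing a capacitated edge $t$ times corresponds to choosing $t$ of the parallel copies. You instead stay within the uncapacitated \emph{Simple} $b$-Edge Cover problem and eliminate parallel edges by a double subdivision, using the middle edge of each path $u$--$x_e$--$y_e$--$v$ as an ``opt-out'' that satisfies the unit demands at $x_e$ and $y_e$ without contributing to $u$ or $v$; the double (rather than single) subdivision is exactly what makes this opt-out possible, and your exchange argument correctly normalizes any cover to the two canonical per-path configurations, since every swap weakly increases coverage at the original endpoints and never increases cost. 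The paper's reduction is shorter and its correctness is essentially immediate (indeed, the paper's write-up stops after stating the construction), but it requires the capacitated variant as the target; your reduction needs only the simple-graph variant and a gadget, at the price of the case analysis you supply. One small slip: the cleaned-up set $F$ has size \emph{at most} $|F'|-|E|$ rather than exactly that, since dropping a middle edge from an all-three path strictly reduces cost; the inequality is all you need, so the conclusion stands.
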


\begin{proof}
We demonstrate how to compute Simple $b$-Edge Cover of Multigraphs using Capacitated $b$-Edge Cover, which we define as follows.

\begin{description}
\item[Name:] \emph{Capacitated $b$-Edge Cover} (See also~\cite[chapter 34]{Sch})
\item[Instance:] An undirected graph $G=(V,E)$, a function $b:V \rightarrow {\textbf{\textrm{Z}}^{+}}$, a capacity function $c:E \rightarrow {\textbf{\textrm{Z}}}^{+}$, and positive integer $q$.
\item[Question:] Does there exist a multisubset of edges $C \subseteq E$ such that each vertex $v \in V$ is incident to at least $b(v)$ edges in $C$ and such that each $e \in E$ is chosen by $C$ with frequency at most $c(e)$?
\end{description}

Let $G=(V,E)$, $b:V \rightarrow {\textbf{\textrm{Z}}}^{+}$, and $q>0$ be an instance of Simple $b$-Edge Cover of Multigraphs.

We construct an instance of Capacitated $b$-Edge Cover as follows. Let $G'$ be a complete graph of $V$. For each pair $v_1, v_2 \in V$, we set the capacity $c(v_1,v_2)$ to the number of edges between $v_1$ and $v_2$ in $G$.~\end{proof}

Simple $b$-Edge Cover of Multigraphs, and several variations thereof, is of interest in solving some election manipulation problems in which the voters always distinguish exactly two candidates from the remaining candidates. More specifically, in such a construction, the vertices correspond to candidates and the edges to the voters, who distinguish two candidates from the remaining. We introduce two additional versions of Simple $b$-Edge Cover of Multigraphs, which will be of interest to problems involving weights and prices for voters.

\begin{description}
\item[Name:] \emph{Simple Weighted $b$-Edge Cover of Multigraphs}
\item[Instance:] An undirected multigraph $G=(V,E)$, a function $b:V \rightarrow {\textbf{\textrm{Z}}^{+}}$, weight function $w:E \rightarrow {\textbf{\textrm{Z}}^{+}}$ and positive integer $q$.
\item[Question:] Does there exist a subset of edges $C \subseteq E$ of total weight at most $q$ such that each vertex $v \in V$ is incident to at least $b(v)$ edges in $C$?
\end{description}

\begin{description}
\item[Name:] \emph{Simple $b$-Edge Weighted Cover of Multigraphs}
\item[Instance:] An undirected multigraph $G=(V,E)$, a function $b:V \rightarrow {\textbf{\textrm{Z}}^{+}}$, weight function $w:E \rightarrow {\textbf{\textrm{Z}}^{+}}$ and positive integer $q$.
\item[Question:] Does there exist a subset of at most $q$ edges $C \subseteq E$ such that each vertex $v \in V$ is incident to edges in $C$ of at least total weight $b(v)$?
\end{description}

In these two versions, as we will see later, the weighted edges correspond to cases of bribery in which each voter has an associated price. The weighted coverings correspond to manipulations involving weighted voters. It is currently unknown whether either of these variations are polynomial-time computable, NP-intermediate, or NP-complete. We will show in this paper that some problems of interest are polynomial-time equivalent to one of these problems.

A related problem involves edge matchings. See~\cite{Sch} for the connection between matchings and coverings, and the extension of Edge Matching to $b$-Edge Matching, and the polynomial-time computability result. We will define an analogous problem for multigraphs as follows.

\begin{description}
\item[Name:] \emph{Simple $b$-Edge Matching of Multigraphs}
\item[Instance:] An undirected multigraph $G=(V,E)$, a function $b:V \rightarrow {\textbf{\textrm{Z}}^{+}}$ and positive integer $q$.
\item[Question:] Does there exist a subset of at least $q$ edges $C \subseteq E$ such that each vertex $v \in V$ is incident to at most $b(v)$ edges in $C$?
\end{description}

This is an extension of Edge Matching. In a traditional Edge Matching problem, each vertex has a $b$-value of $b(v)=1$, as a matching cannot cover a vertex more than once.

\section{Misuses in Elections with a Fixed Number of Candidates}

In~\cite{FHH06}, it is shown that bribery of unweighted scoring protocol elections can be solved in polynomial time with brute force when the number of candidates is fixed, since the number of possible preference orderings is also fixed, and that voters with equal preferences are not distinguishable (see the definition of succinct preference representation in Section~\ref{sect:succinct}). The same principle also applies to control by adding or deleting voters, as well as manipulation. Bribery of fixed scoring protocols is also polynomial-time computable even when voters have prices, since it suffices to bribe only the cheapest voters among those with equal preferences. We review the result for \$bribery from~\cite{FHH06} below.

\begin{theorem}~\cite{FHH06}
$\alpha$-\$bribery is polynomial-time computable for all scoring protocols $\alpha = (\alpha_1,\ldots,\alpha_m)$ with a fixed number of candidates $m$.
\end{theorem}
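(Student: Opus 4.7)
The plan is to leverage the fact that when $m$ is fixed, the number of possible preference orderings is $m! = O(1)$. I would first partition the $n$ voters into at most $m!$ equivalence classes $G_1,\ldots,G_{m!}$ according to their submitted preference ordering, and within each class sort the voters in ascending order of price. The only relevant choices available to the briber are, for each source class $G_i$ and each target ordering $j$, how many voters from $G_i$ to re-assign to ordering $j$. Since within a single class all voters produce the same scoring-protocol contribution and differ only in price, it is optimal to bribe the cheapest voters in the class, i.e., given that we move $n_{ij}$ voters from class $i$ to ordering $j$, we may assume these are the $n_{ij}$ voters of smallest price in $G_i$ (broken across targets $j$ by simply taking the next cheapest remaining voter).

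Thus every candidate bribery is described by a nonnegative integer matrix $(n_{ij})_{1 \le i,j \le m!}$ satisfying $\sum_j n_{ij} \le |G_i|$ for each $i$. For any such matrix the cost is obtained, class by class, by summing the $\sum_j n_{ij}$ smallest prices of $G_i$, and the post-bribery tally of any candidate $c$ is determined by the final counts $|G_j| - \sum_k n_{jk} + \sum_i n_{ij}$ of voters with each ordering $j$, multiplied by the corresponding entries of $\alpha$. Both quantities can be computed in polynomial time from the matrix. The algorithm enumerates all feasible matrices, computing cost and tally for each, and accepts iff some matrix has cost at most $q$ and makes $p$ a winner of $\alpha$.

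The number of matrices is bounded by $(n+1)^{(m!)^2}$, which is polynomial in $n$ because $m$, and hence $(m!)^2$, is a constant. Thus the brute-force search runs in polynomial time.

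The only real subtlety is the cheapest-first step: we must argue that restricting to ``bribe the $n_{ij}$ cheapest voters of $G_i$'' loses no generality. This is immediate by an exchange argument: given any bribery that moves some voter $v \in G_i$ to ordering $j$ while leaving a cheaper voter $v' \in G_i$ unbribed, swap their roles; the resulting profile of final preference orderings is identical (since $v$ and $v'$ had the same original ordering and $v'$ now has ordering $j$), so the tally and winner set are unchanged, while the total cost weakly decreases. Hence the enumeration over matrices is exhaustive, and the algorithm is correct.
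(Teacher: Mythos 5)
Your proposal is correct and follows essentially the same approach as the paper: partition voters into the constantly many ($m!$) preference classes, observe that only the cheapest voters in each class need ever be bribed, and brute-force over a constant number of integer parameters describing the bribery. The only cosmetic difference is that you enumerate a full $(m!)^2$-entry transfer matrix $(n_{ij})$ while the paper uses two length-$m!$ vectors (how many are bribed from each class and how many receive each ordering); both give a polynomial bound, and your explicit exchange argument for the cheapest-first step just makes precise what the paper asserts without proof.
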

\begin{proof}
Because $m$ is a constant, there are only a constant, $m!$, number of distinct preference profiles. The algorithm given in~\cite{FHH06} partitions the set of voters into $m!$ subsets, $V = V_1 \cup \cdots \cup V_{m!}$, with voters in each subset having equal preferences. A bribery can then be described by two sequences of integers, $b_1,\ldots,b_{m!}$, and $d_1,\ldots,d_{m!}$, where each $b_i$ indicates how many voters from $V_i$ we are bribing and $d_i$ indicates how many voters will be given the preferences of $V_i$. Without loss of generality, we can assume we are bribing the cheapest $b_i$ voters of the set $V_i$.

Since each bribery can be described with $2m!$ variables, and each variable is bounded $0 \leq b_i,d_i \leq ||V||$, this algorithm tries at most $(||V||+1)^{2m!}$ possible briberies. This quantity is polynomial with respect to the size of the problem instance. Thus, $\alpha$-\$bribery is polynomial-time computable.~\end{proof}

Also owing to the constant number of possible preference orderings, unweighted scoring protocols of a fixed number of candidates are also computationally vulnerable to manipulation as well as control by adding or deleting voters.

\begin{theorem}
All scoring protocols $\alpha = (\alpha_1,\ldots,\alpha_m)$ are computationally vulnerable to constructive control by either adding or deleting voters, as well as to manipulation for a fixed number of candidates $m$.
\end{theorem}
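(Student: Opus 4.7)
The plan is to mimic the argument already used in the previous theorem for $\alpha$-\$bribery: when $m$ is fixed, there are at most $m!$ distinct preference orderings over $C$, so voters with identical preferences are interchangeable, and each of the three problems (manipulation, control by adding voters, control by deleting voters) reduces to guessing a vector of small integer multiplicities and then computing the winner once.

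First I would handle constructive manipulation. Partition the set $V'$ of unestablished voters into the $m!$ groups determined by the fixed preference orderings is not useful, since $V'$ has no preferences yet; instead, I would enumerate how many of the $\lVert V'\rVert$ manipulators are assigned each of the $m!$ orderings. This is a tuple $(d_1,\ldots,d_{m!})$ of nonnegative integers with $\sum_i d_i = \lVert V'\rVert$, giving at most $(\lVert V'\rVert+1)^{m!}$ candidate manipulations, which is polynomial in the input size since $m!$ is a constant. For each candidate assignment, add the corresponding scores to those of $V$ and check whether $p$ is a winner; return yes if any such assignment works.

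For constructive control by adding voters, I would partition the unestablished voters $V'$ into at most $m!$ preference-classes $V'_1,\ldots,V'_{m!}$. Any feasible $V''\subseteq V'$ with $\lVert V''\rVert\le q$ is, up to which specific voters are picked from each class, completely determined by a tuple $(a_1,\ldots,a_{m!})$ with $0\le a_i\le \lVert V'_i\rVert$ and $\sum_i a_i\le q$; there are at most $(\lVert V'\rVert+1)^{m!}$ such tuples. For each we check whether $p$ wins the election on $C$ with voter set $V\cup V''$, where $V''$ consists of any $a_i$ voters from each $V'_i$ (the choice does not matter since they are interchangeable). Constructive control by deleting voters is entirely symmetric: partition $V$ into preference-classes $V_1,\ldots,V_{m!}$ and enumerate tuples $(d_1,\ldots,d_{m!})$ with $0\le d_i\le \lVert V_i\rVert$ and $\sum_i d_i\le q$, deleting any $d_i$ voters from $V_i$ and testing whether $p$ wins.

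The main observation is simply the polynomial bound $(\lVert V\rVert+\lVert V'\rVert+1)^{m!}$ on the number of configurations that must be examined, combined with the fact that computing scores and comparing them given a configuration is linear-time. There is no real obstacle here; the only thing to be careful about is that $m!$ must be treated as a constant (because $m$ is fixed as part of the problem, not part of the input), so the exponent is constant and the running time is genuinely polynomial in $\lVert V\rVert+\lVert V'\rVert$. Destructive variants (if desired) follow by replacing the winner test with its negation.
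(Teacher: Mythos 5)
Your proposal is correct and is essentially the argument the paper intends: the paper gives no explicit proof for this theorem, only the remark that the brute-force enumeration over the $m!$ preference classes used for $\alpha$-\$bribery ``also applies'' here, and your writeup is precisely that argument instantiated for manipulation and for control by adding and deleting voters. The key points you identify --- interchangeability of voters with equal preferences and the constant exponent $m!$ yielding a polynomial bound on the number of configurations --- are exactly the ones the paper relies on.
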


It is important to note that despite the theoretical polynomial-time computability in this case, in practice, this problem is significantly difficult for moderate-sized $m$, due to the exponential nature of $m!$. It is often the case that the brute-force algorithm, despite being faster in the worst case, is not the best approach to this problem empirically.

In~\cite{HH07,FHH06} it is shown that most nontrivial manipulation and bribery problems in weighted scoring protocol elections are NP-hard even for a fixed number of candidates. We will discuss these cases in Section~\ref{sect:weightprice}.

It should be noted that control by adding or deleting candidates in elections of a constant number of candidates (participating and additional candidates) is easy, as there are a constant number of different sets of candidates to add or delete, and whether or not a distinguished candidate will win or lose the election can be checked in polynomial time.

\section{Destructive Misuses}
To demonstrate the vast differences between the problems of constructive misuse and that of destructive misuse, we prove that unweighted approval-based elections are computationally vulnerable to most forms of destructive misuse. This is true even for families of scoring protocols, in which the size of the candidate set is unbounded. Consider the problem of bribery below.

\begin{theorem}
Unweighted $f(m)$-approval elections are computationally vulnerable to destructive bribery.
\end{theorem}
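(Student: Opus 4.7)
The plan is to give a polynomial-time greedy algorithm. Since $p$ fails to be a winner exactly when some other candidate $c$ has strictly more approvals than $p$, we iterate over each $c \in C \setminus \{p\}$ and check whether we can arrange at most $q$ bribes so that $\mathrm{score}(c) > \mathrm{score}(p)$ after the bribery. We answer YES iff such a $c$ exists.

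For a fixed challenger $c$, note that the only quantity that matters is the gap $g := \mathrm{score}(c) - \mathrm{score}(p)$, and a single bribe acts on the gap independently of the other bribes (changing one voter's approved set affects only that voter's contribution to the scores of $p$ and $c$). So for each voter $v$ we compute the best achievable change $\Delta_v$ to the gap by rewriting $v$'s approval set, and we greedily select the $q$ voters with the largest $\Delta_v$. Classifying voters by whether they currently approve $p$ and $c$ and assuming $1 \leq f(m) \leq m-1$ (the cases $f(m) \in \{0,m\}$ are trivial since bribery cannot change any score), the achievable $\Delta_v$ is:
\begin{itemize}
\item approves $p$, not $c$: rewrite to approve $c$ and drop $p$ (filling out the remaining $f(m)-1$ slots from the other $m-2$ candidates, which is possible since $f(m)-1 \leq m-2$), giving $\Delta_v = +2$;
\item approves both $p$ and $c$, or approves neither: a similar rewrite gives $\Delta_v = +1$;
\item approves $c$ but not $p$: any change is non-improving, so $\Delta_v = 0$.
\end{itemize}
Thus the maximum achievable gap increase with at most $q$ bribes is $2\min(q,n_B) + \min\bigl(\max(q-n_B,0),\,n_A+n_D\bigr)$, where $n_A,n_B,n_D$ denote the sizes of the three helpful classes. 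We accept $c$ iff adding this quantity to the initial gap yields a strictly positive value.

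Correctness of the greedy step is immediate from the fact that the $\Delta_v$ values are independent and additive across bribes, so choosing the $q$ largest is optimal; this is the standard exchange argument. The outer loop runs over $m-1$ challengers, and for each challenger the classification and greedy count take $O(n)$ time, giving a total running time of $O(mn)$. There is no substantial obstacle here: the only subtlety is verifying that the rewrites are actually feasible approval sets of size exactly $f(m)$, which is where the hypothesis $1 \leq f(m) \leq m-1$ is used, and handling the degenerate cases $f(m) \in \{0, m\}$ where every candidate trivially ties and $p$ cannot be unseated.
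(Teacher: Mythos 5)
Your proof is correct and follows essentially the same approach as the paper: for each challenger $c \neq p$, classify voters into the three classes by whether they approve $p$ and/or $c$, bribe the $+2$ class (approve $p$ but not $c$) first and then the $+1$ class, and accept iff some challenger can be pushed strictly above $p$ within the quota. Your writeup is somewhat more careful than the paper's, making the additivity/exchange argument explicit and handling the degenerate cases $f(m)\in\{0,m\}$ and the feasibility of the rewritten approval sets, but the underlying algorithm is identical.
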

\begin{proof}
As a reminder, we assume that $f(m)$ is computable in time polynomial with respect to $m$.

In a destructive bribery, we want some candidate $p' \neq p$ to beat $p$. Thus, we want to have $p'$ gain as many points as possible relative to $p$.

For each $p' \neq p$, we separate the voters into 3 categories:

\noindent 1. Those approving of $p$ but not of $p'$.

\noindent 2. Those approving either both of $p$ and $p'$ or neither of $p$ nor $p'$.

\noindent 3. Those approving of $p'$ but not of $p$.

Clearly, briberies of votes of the first type take priority over those of the second type, as $p'$ can gain two points relative to $p$. Voters of the third type should not be bribed at all. We check whether $p'$ can potentially beat $p$ by first bribing voters of type one, and then those of type two, up to a total of our quota $q$. We always bribe voters to approve of $p'$ but not $p$.

We find such a bribery for each other candidate $p' \neq p$ by bribing $q$ voters using the procedure above. If there exists no candidate $p' \neq p$ that can be made to beat $p$, we reject.~\end{proof}

Intuitively, the reason such briberies are easy is that we only need to ensure that $p'$ beats $p$. It does not matter if other candidates will beat both $p'$ and $p$ as a result of the bribery, as $p$ will lose. This principle is also demonstrated in~\cite{Rus} for the case of Borda elections and~\cite{FHH06} for the case of scoring protocols in general.

\begin{theorem}
Unweighted $f(m)$-approval elections are computationally vulnerable to destructive manipulation and destructive control adding or deleting voters.
\end{theorem}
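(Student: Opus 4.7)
The plan is to mimic the destructive bribery argument by exploiting the fact that, to make $p$ not a winner, it suffices that some single candidate $p' \neq p$ ends up with strictly greater score than $p$. Since there are only $m-1$ choices for $p'$, we may iterate over all of them and, for each, greedily pursue the attack that maximizes $p'$'s score minus $p$'s score.

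For destructive manipulation, for each fixed $p' \neq p$ I would assign every unestablished voter in $V'$ a preference that approves $p'$ and does not approve $p$, filling out the remaining $f(m)-1$ approvals with any other candidates (it is irrelevant which, since we only care about the $p,p'$ comparison). This is always possible when $f(m) \leq m-1$; when $f(m)=m$ every voter approves every candidate and the answer depends only on whether the fixed voters already give $p$ a strictly smaller score than some other candidate, which is trivially checkable. After this assignment the score of $p'$ minus the score of $p$ is maximized, so I accept iff this maximum is strictly positive for some $p'$.

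For destructive control by adding voters, for each $p' \neq p$ partition $V'$ exactly as in the destructive bribery proof into those who approve $p$ but not $p'$ (negative gain), those who approve both or neither (zero gain), and those who approve $p'$ but not $p$ (positive gain). Since addition is optional, I would simply add the positive-gain voters from $V'$, at most $q$ of them; if the resulting gap makes $p'$ beat $p$ for some $p'$, accept. For destructive control by deleting voters, symmetrically, for each $p' \neq p$ identify voters in $V$ who approve $p$ but not $p'$ and delete up to $q$ of them; these are exactly the deletions that strictly increase $p'$'s score relative to $p$'s.

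The correctness argument in each case is the same: among all attacks of the allowed form, an attack that maximizes the quantity $\text{score}(p') - \text{score}(p)$ for a fixed $p'$ makes $p$ not a winner whenever any attack can do so via that particular $p'$, and greedy selection of the positive-gain moves (or uniform best assignment, in the manipulation case) achieves that maximum. Because other candidates' scores are irrelevant for a destructive goal, there is no interaction to worry about between different choices of $p'$. The only subtle point, and essentially the sole obstacle, is correctly handling the degenerate boundary $f(m)=m$ (and, in the manipulation case, verifying that an assignment approving $p'$ and excluding $p$ is feasible, which reduces to the inequality $f(m)\le m-1$); everything else is a direct adaptation of the destructive bribery algorithm and runs in polynomial time because the outer loop has size $m-1$ and each inner step is linear in the input.
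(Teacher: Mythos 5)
Your proposal is correct and follows essentially the same route as the paper's proof: fix each $p' \neq p$ in turn and greedily maximize $\mathrm{score}(p') - \mathrm{score}(p)$ by having manipulators approve $p'$ but not $p$, adding only voters who approve $p'$ but not $p$, and deleting only voters who approve $p$ but not $p'$. Your explicit treatment of the degenerate case $f(m)=m$ and the feasibility condition $f(m)\le m-1$ is a small amount of added care beyond what the paper states, but not a different argument.
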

\begin{proof}
The concept of attempting to make each candidate $p' \neq p$ beat $p$ can be applied to manipulation, as well as control by adding and deleting voters. In manipulation, each manipulator approves of $p'$ but not of $p$. Similarly, in control by adding voters, we only add voters that approve of $p'$ but not of $p$. In control by deleting voters, we only delete voters that approve of $p$ but not of $p'$.~\end{proof}

We conclude that all unweighted approval-based scoring protocols, as well as families of scoring protocols, $f(m)$-approval, are computationally vulnerable to destructive bribery, manipulation, and control by adding or deleting voters. The case of destructive control by adding or deleting candidates will be considered in Section~\ref{sect:contbycand}. The remainder of this section will thus focus on constructive problems. 

\section{Manipulation of Approval-Based Scoring Protocols}

It is shown in~\cite{Z} that manipulation in the $f(m)$-approval families of scoring protocols (recall that $f(m)$ is computable in polynomial time with respect to the quantity of $m$ in this case) is polynomial-time computable, by a greedy algorithm. The greedy algorithm operates by iteratively assigning the manipulators' preferences as follows: each manipulator approves $p$, the distinguished candidate and the current $f(m)-1$ candidates with the lowest scores. It can be shown that this algorithm is correct by induction on the number of manipulators. Such elections are thus computationally vulnerable to insincere voting from a coalition of agents for their collective benefit, whether their goal is to elect a preferred candidate (constructive manipulation) or to prevent the election of a despised candidate (destructive manipulation).

\section{Bribery in Approval-Based Scoring Protocols}
In unweighted and unpriced families of scoring protocols, the goal of the bribery problem is to determine the minimum number of voters one needs to change the preferences for (i.e., bribe) to achieve a desirable outcome. As we have seen earlier, if our goal is to exclude a candidate from winning, as in destructive bribery, this problem is easy and can be computed in polynomial time for all approval-based elections. We thus now focus on constructive bribery cases, where we aim to elect a distinguished candidate.

\begin{theorem}[\cite{FHH06}]
Unweighted $1$-approval elections are computationally vulnerable to bribery by unpriced voters.
\end{theorem}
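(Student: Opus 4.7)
The plan is to give a polynomial-time greedy algorithm, essentially a degenerate case of the manipulation algorithm from~\cite{Z}. In $1$-approval each voter approves of exactly one candidate, so the election reduces to plurality, and a bribe amounts to selecting a voter and changing which single candidate he or she approves of. The first observation is that, without loss of generality, every bribed voter may be taken to switch to approving $p$: if some bribery switches a voter from $c_i$ to some $c_j \neq p$, we can instead switch that voter to $p$, which leaves $s(c_i)$ unchanged, decreases $s(c_j)$ by one, and increases $s(p)$ by one, all of which can only help $p$ become a winner. So we may restrict attention to briberies that move voters from other candidates onto $p$.

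Given this reduction, the algorithm is as follows. Compute the initial scores $s_0(c)$ for all candidates, and then repeat up to $q$ times: select any opponent $c \neq p$ with the currently highest score that still has at least one voter approving him or her, switch one such voter to approve $p$, and update the scores by $s(p) \leftarrow s(p)+1$ and $s(c) \leftarrow s(c)-1$. After each step, check whether $p$ is now a winner; if so, accept. If after $q$ bribes (or sooner, if every opponent has score zero) $p$ is still not a winner, reject.

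Correctness follows from a standard exchange argument on the greedy choice. After $b$ bribes, $p$'s score is deterministically $s_0(p)+b$ regardless of which voters were bribed, so the only freedom lies in which opponents lose votes. The greedy reduces the current maximum opponent score as fast as possible, and an exchange argument shows that for each $b$ it achieves the minimum possible value of $\max_{c \neq p} s(c)$: if some other bribery of size $b$ left a strictly lower maximum, then at some step it must have taken a voter from a candidate whose score at that moment was below what the greedy would have picked, and swapping those two bribes can only help. Thus $p$ can be made a winner with at most $q$ bribes if and only if the greedy succeeds within $q$ steps. The algorithm performs $O(q)$ iterations, each needing only the maximum of at most $||C||$ current scores, so it runs in polynomial time. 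The main subtlety is the exchange argument for the greedy choice, since at first glance one might worry that spreading bribes across multiple high-scoring opponents could interact in a nonobvious way; but because each bribe affects only one opponent and $p$'s score grows uniformly across all opponents, the single-swap exchange goes through cleanly.
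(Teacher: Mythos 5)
Your proposal is correct and is essentially the paper's own argument: the paper's one-line proof is exactly this greedy procedure of repeatedly bribing a voter of the current (highest-scoring) winner to approve $p$ instead, accepting when $p$ wins and rejecting when the quota is exhausted. Your version merely adds the explicit without-loss-of-generality step and the exchange argument that the paper leaves implicit.
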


\begin{proof}
In this case, one simply bribes the voters that vote for the current winner to vote for our distinguished candidate, $p$, until either $p$ wins, in which case we accept, or until we run out of voters to bribe, in which case we reject.~\end{proof}

\begin{theorem}[\cite{FHH06}]
Unweighted $1$-veto elections are computationally vulnerable to bribery by unpriced voters.
\end{theorem}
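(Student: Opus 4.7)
The plan is to give a polynomial-time algorithm for unweighted $1$-veto constructive bribery. In $1$-veto each voter vetoes exactly one candidate, so the score of candidate $c$ is $n-v(c)$, where $v(c)$ is the current number of vetoes on $c$ and $n=||V||$; the distinguished candidate $p$ is a winner iff $v(p)\leq v(c)$ for every $c\neq p$. Bribing a voter amounts to redirecting her single veto to any candidate of our choosing, so a budget of $q$ unpriced bribes lets us freely re-place up to $q$ of the existing $n$ vetoes while the remaining $n-q$ stay put.

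The algorithm iterates over $k\in\{0,1,\ldots,\min(q,v(p))\}$, interpreted as the number of bribed voters who currently veto $p$. After those $k$ bribes the veto count on $p$ drops to $v^\star := v(p)-k$, and we must ensure $v(c)\geq v^\star$ for every other $c$. Let $D_k = \sum_{c\neq p}\max(0,\ v^\star - v(c))$ be the total deficit. The $k$ freed vetoes from $V_p$ can be placed on deficit candidates, absorbing $\min(k,D_k)$ units of deficit at no extra cost; any remaining $\max(0,D_k-k)$ units must be supplied by bribing voters who currently veto an over-threshold non-$p$ candidate and redirecting them onto a deficit candidate, at one bribe per unit. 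Hence the minimum number of bribes needed to realize threshold $v^\star$ is exactly $\max(k,D_k)$, and we accept iff $\max(k,D_k)\leq q$ for some admissible $k$. Since $k$ takes $O(n)$ values and each $D_k$ is computable in linear time, the running time is polynomial.

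Two correctness points remain. First, an exchange argument shows that there is always an optimal bribery in which every bribed voter from $V_p$ redirects onto a non-$p$ candidate and no voter outside $V_p$ is bribed to veto $p$; this canonicalization makes the parameterization by $k$ lose no generality. Second, the required rearrangement is actually realizable whenever $\max(k,D_k)\leq q$: the surplus $S_k = \sum_{c\neq p}\max(0,\ v(c)-v^\star)$ satisfies $S_k - D_k = n - m v^\star - k$, so $S_k \geq D_k - k$ whenever $v^\star \leq n/m$, and the bound $v^\star \leq n/m$ is forced by $\min_c v(c)\leq n/m$, which is in turn implied by the existence of any winning arrangement for $p$. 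Thus surplus non-$p$ candidates always hold enough vetoes to fund the needed moves.

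The main obstacle, and the only non-routine step, is the exchange lemma that canonicalizes optimal briberies into the form above; everything else is a direct counting and feasibility argument.
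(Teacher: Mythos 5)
Your proof is correct in substance, but it takes a noticeably different and much more elaborate route than the paper. The paper's entire argument is a one-line greedy: repeatedly bribe a voter who vetoes $p$ to veto the current winner (i.e., the candidate with currently fewest vetoes) instead, until $p$ wins or the budget is exhausted; each such bribe simultaneously lowers $v(p)$ by one and raises the minimum non-$p$ veto count as much as any single bribe could, so it dominates every alternative. You instead give an exact combinatorial characterization: enumerate the number $k$ of bribed $p$-vetoers, compute the deficit $D_k$, and accept iff $\max(k,D_k)\leq q$ for some realizable $k$. What your approach buys is an explicit closed-form optimality certificate (the minimum number of bribes is $\min_k \max(k,D_k)$ over feasible $k$), and a genuinely rigorous lower-bound argument where the paper offers none; what it costs is the extra bookkeeping around feasibility. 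On that last point, one sentence of yours is shaky: ``$v^\star \leq n/m$ is forced by $\min_c v(c)\leq n/m$'' does not parse as stated ($\min_c v(c)\leq n/m$ is an unconditional averaging fact and does not bound $v^\star$ for an arbitrary $k$ in your loop). What you actually need, and what is true, is that (i) any successful bribery canonicalizes to a $k$ with $v^\star=v(p)-k\leq n/m$, giving completeness, and (ii) for soundness, either you restrict the loop to such $k$, or you observe that an accepting $k$ with $v^\star>n/m$ forces $D_k\geq mv^\star-n+k>k$, from which one can check that $k'=\lceil v(p)-n/m\rceil$ also satisfies $k'\leq q$ and $D_{k'}\leq D_k\leq q$, so a feasible accepting index always exists. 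With that one clause repaired, your argument stands, and is if anything more complete than the paper's.
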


\begin{proof}
In this case, we simply bribe voters whom veto $p$ to veto for the current winner instead.~\end{proof}

We now demonstrate how the seemingly unrelated problem of Set Cover relates to bribery of election systems, and prove the resistance to bribery of some approval-based election systems.

In~\cite{HHR07} a reduction is given from X3C to control in approval elections by adding or deleting voters. The reduction operates by encoding the elements as candidates and voters as subsets of elements, by approving these elements. A number of buffer candidates and voters are added to enforce the constraints of the problem of X3C.

We demonstrate how to modify this result to the cases of $k$-approval and $k$-veto, where either the number of candidates approved or vetoed is fixed, for appropriately valued $k$. The reduction requires the usage of additional buffer candidates and voters, as in the reduction given in~\cite{HHR07}, each voter approves a different number of candidates.

\begin{theorem}
Unweighted $k$-approval elections are computationally resistant to bribery by unpriced voters for $k \geq 3$.
\end{theorem}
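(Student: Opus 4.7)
The plan is to reduce Exact Cover by 3-Sets (X3C) to unweighted $k$-approval bribery for fixed $k \geq 3$, in the spirit of the X3C-to-approval-control reduction of \cite{HHR07}. Given an X3C instance $(S,\{T_1,\ldots,T_n\})$ with $|S|=3m$, I would build an election whose candidates are the distinguished candidate $p$, one element-candidate $s_j$ for each $s_j\in S$, and several pools of buffer candidates. For each $T_i$ I include a ``subset-voter'' approving the three elements of $T_i$ together with $k-3$ private buffers used by no other voter, and for each $s_j$ I include enough ``padding voters'' (each approving $s_j$ plus $k-1$ of its own private buffers) to raise every $s_j$'s initial score to exactly $m+1$. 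No voter initially approves $p$, so $p$ starts at score $0$. I also include $m(k-1)$ extra ``fresh'' buffers appearing in no initial vote. The bribery budget is $q=m$.

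For the forward direction, if $\{T_{i_1},\ldots,T_{i_m}\}$ is an exact cover, I bribe the $m$ corresponding subset-voters and replace each bribed vote by $\{p\}$ together with $k-1$ distinct fresh buffers, using all $m(k-1)$ fresh buffers once each. Then $p$ ends with score $m$, each $s_j$ ends with score $m+1-1=m$ (it lost exactly one approval, from the exact cover), every original buffer has final score at most $1$, and every fresh buffer has final score exactly $1$. Hence $p$ ties for the top and is a winner.

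The reverse direction is the crux and the main obstacle. Suppose some $m$-bribery makes $p$ a winner. Since $p$'s initial score is $0$, $p$'s final score is at most $m$, so every $s_j$ must drop to at most $m$, i.e.\ must lose at least one net approval. Letting $\Delta_j$ denote the net change in $s_j$'s score, we therefore need $\sum_j(-\Delta_j)\geq 3m$. But each bribed voter can contribute at most $3$ to this sum (a subset-voter removes three $s_j$-approvals, a padding voter removes only one, and any $s_j$ re-approved by a new vote strictly reduces the contribution), and only $m$ bribes are available; so equality must hold throughout. This forces every bribe to strike a subset-voter, no new vote to re-approve any $s_j$, and each $s_j$ to lose exactly one approval, meaning the $m$ chosen subsets form an exact cover of $S$. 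The remaining bookkeeping is to confirm the buffers cannot spoil the win: each private buffer appears in at most one initial vote and the fresh buffers can absorb all $m(k-1)$ new non-$p$ approvals at rate at most one each, so no buffer exceeds $p$'s score of $m$. Since $k$-approval bribery is clearly in NP (guess and check the bribed voters' new preferences), this reduction establishes NP-completeness.
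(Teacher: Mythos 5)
Your reduction is the same in spirit and in most of its mechanics as the paper's: both reduce from X3C, encode each $T_i$ as a voter approving its three elements (plus private buffers for $k>3$), pad every element-candidate's score to exactly one more than the maximum score $p$ can reach after $m$ bribes, and close the reverse direction with the counting argument that each bribe removes at most three element-approvals while $3m$ removals are required. However, there is a concrete flaw in your calibration step. You set $p$'s initial score to $0$, so the target score for each $s_j$ must be exactly $m+1$; but an element $s_j$ may occur in more than $m+1$ of the sets $T_1,\ldots,T_n$ (general X3C instances allow an element to appear in up to $n \gg m$ sets), in which case the subset-voters alone already give $s_j$ more than $m+1$ approvals and no nonnegative number of padding voters can bring its score to ``exactly $m+1$.'' Raising the target instead breaks the reverse direction: if each $s_j$ started at $N+1$ with $N>m$, it would have to lose $N+1-m\geq 2$ approvals, and $m$ bribes cannot remove $3m(N+1-m)>3m$ approvals, so the forward direction would fail too.

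The paper repairs exactly this by giving $p$ (together with two companions $p',p''$) an initial score of $n-m$, so that $p$'s achievable maximum is $n$ and each $s_j$ can be padded to $n+1$ --- a target that is always reachable since no element occurs in more than $n$ sets. You could equally well fix your construction by invoking the NP-complete restriction of X3C in which each element occurs in a bounded number of sets, but as written, with plain X3C and padding to $m+1$, the reduction is not well defined on all instances. The rest of your argument (the at-most-$3$-per-bribe accounting, the forced equality, the buffer bookkeeping, and membership in NP) is sound once the calibration is fixed.
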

\begin{proof}
We first show resistance for the case of $3$-approval elections. Cases for $k > 3$ can be shown by a simple change. We make our reduction from Exact $3$-Set Cover (X3C).

Let 
\[S = \{s_1,\ldots,s_{3m}\}\]
 and 
\[T_1 = \{t_{1,1},t_{1,2},t_{1,3}\},\ldots,T_n = \{t_{n,1},t_{n,2},t_{n,3}\}\]
be an instance of X3C. Without loss of generality, we assume $n \geq m$.

We construct our $k$-approval election as follows. Our candidate set will be \[ C = \{p, p', p''\} \cup \{s_1, \ldots, s_{3m}\} \cup \{b_1, \ldots, b_{\frac{3(nm+m-n)}{2}}\}. \]
For each $3$-set $T_i = \{t_{i,1},t_{i,2},t_{i,3}\}$ we construct a voter who approves of $\{t_{i,1},t_{i,2},t_{i,3}\}$. For $1 \leq i \leq \frac{3(nm+m-n)}{2}$ we add voters that approve of $b_i$ and two members of $s_1,\ldots,s_{3m}$. We choose these votes in a way such that each $s$ candidate receives exactly $n+1$ approvals in total. Finally, we add $n-m$ votes that approve of $\{p, p', p''\}$. We set our bribery quota $q$ to $m$.

In this election, we note that each of $p, p', p''$ receives $n-m$ approvals. Each of $s_1$, $\ldots$, $s_{3m}$ receives $n+1$ approvals, and each candidate $b_i$ for $1 \leq i \leq \frac{3(nm+m-n)}{2}$ receives one approval.

If there is an exact covering by $3$-sets, we note that bribing the agents corresponding to the $3$-sets to approve of $\{p, p', p''\}$ instead will allow $p$ to win. On the other hand, a valid bribery must remove one approval of each of $s_1$, $\ldots$, $s_{3m}$, since the most number of approvals that can be given to $p$ is $m$. Note that bribing any of the votes corresponding to $T_i$ removes three approvals, collectively, from $s_1, \ldots, s_{3m}$. In contrast, all other bribes will remove at most two approvals, collectively, from $s_1, \ldots, s_{3m}$. Thus, a successful bribery of $\leq m$ votes can only involve votes corresponding to $T_i$, and each candidate of $s_1, \ldots, s_{3m}$ must lose one approval. This corresponds to an exact covering.

A simple extension to show that $k$-approval elections for $k > 3$ are also computationally resistant to bribery can be constructed by adding additional buffer candidates, such that each voter approves of distinct buffer candidates in addition to the three edges elements in the subset. Each buffer candidate would thus receive only one approval, and cannot influence the election.~\end{proof}

A similar reduction, in the next theorem shows this hardness result for $k$-veto elections.

\begin{theorem}
Unweighted $k$-veto elections are computationally resistant to bribery by unpriced voters for $k \geq 4$.
\end{theorem}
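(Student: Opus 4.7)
The plan is to mirror the X3C reduction from the preceding $k$-approval theorem, with the key twist that each set voter must veto one more candidate (namely $p$) than in the approval case, which is why the hypothesis $k \geq 4$ appears. Given an X3C instance with $S = \{s_1,\ldots,s_{3m}\}$ and 3-subsets $T_1,\ldots,T_n$, I would build a $k$-veto election whose candidates are $\{p\} \cup \{s_1,\ldots,s_{3m}\}$ together with a pool of buffer and dump candidates. For each 3-set $T_i = \{t_{i,1},t_{i,2},t_{i,3}\}$ I would introduce a ``set voter'' $v_i$ who vetoes $\{p, t_{i,1}, t_{i,2}, t_{i,3}\}$ plus $k-4$ further buffer candidates (this is where $k \geq 4$ is required, since for $k=3$ the four forced vetoes will not fit).

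I would then add balancing voters and padding voters so that: (i) $p$ is vetoed exactly $n$ times (by precisely the set voters), (ii) every element candidate $s_j$ is vetoed exactly $n - m + 1$ times, and (iii) every buffer or dump candidate is so heavily vetoed that even after $m$ bribes it cannot overtake $p$'s post-bribery score. The bribery quota is set to $q = m$.

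For correctness, the forward direction is direct: given an exact cover, bribe the $m$ corresponding set voters to veto $k$ heavily-vetoed dummies instead. Then $p$ falls to $n - m$ vetoes, each $s_j$ falls to $(n-m+1)-1 = n-m$ vetoes since it is covered exactly once, and no buffer can catch up, so $p$ ties for the win. For the converse, only set voters currently veto $p$, so lowering $p$'s veto count by $m$ requires bribing $m$ set voters; this bribery removes at most $3m$ element-vetoes in total, but each $s_j$ needs at least one veto removed lest $s_j$ still beat $p$, forcing each $s_j$ to be covered exactly once, i.e., the bribed set voters form an exact 3-cover.

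The main obstacle is the bookkeeping for the buffer and dump candidates: one must ensure that no bribery (including ``wasteful'' bribes on balancing voters, or bribes whose new vetoes happen to land on $p$ or an $s_j$) can make a buffer beat $p$, and one must realize every voter as a legal $k$-veto ballot while keeping the veto counts at exactly the stated values. Both issues are routine once we allow adding sufficiently many fresh dummy candidates and shared padding voters, and the extension from $k = 4$ to larger $k$ simply reuses this padding idea, exactly mirroring the $k$-approval extension in the preceding proof.
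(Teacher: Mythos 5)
Your high-level plan matches the paper's (an X3C reduction with set voters vetoing $\{p, t_{i,1}, t_{i,2}, t_{i,3}\}$, element candidates pinned at $n-m+1$ vetoes, quota $m$), but the converse direction has a genuine gap, and it traces to your choice of buffer candidates. You make the buffers ``so heavily vetoed that they cannot overtake $p$,'' i.e., harmless. Harmless buffers impose no constraint on a dishonest bribery, and then the following cheat succeeds on \emph{every} instance: bribe any $m-1$ set voters and have each re-veto its same three elements plus a dummy in place of $p$. Now $p$ drops to $n-m+1$ vetoes while every $s_j$ stays at $n-m+1$, so $p$ ties for fewest vetoes and wins --- with no exact cover required. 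Relatedly, your stated justification for the converse (``each $s_j$ needs at least one veto removed lest $s_j$ still beat $p$'') is backwards for veto elections: a candidate with $n-m+1$ vetoes already loses to one with $n-m$, so no $s_j$ is obligated to shed a veto; the correct constraint is only that each $s_j$ may lose \emph{at most} one veto, and that alone does not force a cover.

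The paper closes this hole with four buffer candidates that initially have only $n-2m$ vetoes (hence its restriction to instances with $n \geq 2m$) --- \emph{fewer} vetoes than $p$ can ever reach. Since $p$ ends with at least $n-q'$ vetoes after bribing $q' \leq m$ voters, each buffer must absorb at least $m$ redirected vetoes for $p$ not to lose to it; with only $4q' \leq 4m$ redirected veto slots available, this forces $q'=m$, forces every bribed voter to be a $p$-vetoing set voter, and forces all $4m$ new vetoes onto the buffers, so none can be parked on an $s_j$ to mask a double covering. Only then does the counting argument ($3m$ element-vetoes removed, each of the $3m$ candidates $s_j$ losing at most one, hence exactly one) yield an exact cover. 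Without such low-score ``threat'' buffers the reduction fails; the rest of your bookkeeping (padding voters realizing the stated veto counts, and the $k>4$ extension via extra buffers) is consistent with the paper.
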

\begin{proof}
Again, we start with $4$-veto elections and demonstrate how our reduction can be extended.

Let \[S = \{s_1,\ldots,s_{3m}\}\]
and
\[T_1 = \{t_{1,1},t_{1,2},t_{1,3}\},\ldots,T_n = \{t_{n,1},t_{n,2},t_{n,3}\}\] be an instance of X3C. In this reduction, we will restrict the X3C instance to cases in which $n \geq 2m$, for reasons which will be clear later.

We construct a $4$-veto election of $3m+5$ candidates, \[C = \{p\} \cup \{b_1, b_2, b_3, b_4\} \cup \{s_1,\ldots,s_{3m}\}.\]

For each set $T_i = \{t_{i,1},t_{i,2},t_{i,3}\}$, we have one voter who veto $\{p, t_{i,1}, t_{i,2}, t_{i,3}\}$. We add $\frac{mn-m^2+m-3n}{4}$ voters whom veto four candidates of $\{s_1,\ldots,s_{3m}\}$ such that each $s_i$ receives exactly $n-m+1$ vetoes. Initially, $n-2m$ voters veto the four buffer candidates, $\{b_1,b_2,b_3,b_4\}$. Under this construction, $p$ initially has $n$ vetoes, each $s_i$ has $n-m+1$ vetoes, and each buffer candidate has $n-2m$ vetoes. We set our quota for bribing voters to $m$.

Consider an exact covering of $S$. Bribing the votes corresponding to the sets to vote for the $4$ buffer candidates instead will leave $p$ with $n-m$ vetoes, each $s_i$ with $n-m$ vetos, and each buffer candidate with $n-m$ vetoes. $p$ is clearly made a winner. Conversely, consider any bribery of at most $m$ voters such that $p$ is made a winner. In any bribery of at most $m$ voters, $p$ must retain at least $n-m$ vetoes. If $p$ is made a winner, each $s_i$ must retain at least $n-m$ vetoes, and can lose at most one veto. Since there are only $3m$ such candidates and only votes corresponding to the 3-sets veto 3 such candidates, these are the only votes we can bribe in such a bribery. Thus, this corresponds to an exact covering.

This reduction can also be extended to $k$-veto elections for all $k>4$, by adding $k$ buffer candidates in total.~\end{proof}

In the next result, we see how a slightly modified form of the greedy algorithm introduced in~\cite{FHH06} can be used to bribe $2$-veto elections.

\begin{theorem}
Unweighted $2$-veto elections are computationally vulnerable to bribery by unpriced voters.
\end{theorem}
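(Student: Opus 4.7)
My plan is to exhibit a polynomial-time algorithm. Let $v(c)$ denote the current veto count of candidate $c$, $V_p \subseteq V$ the voters originally vetoing $p$, and for $u \in V_p$ let $\mathrm{co}(u)$ be the other candidate vetoed by $u$. The algorithm iterates over pairs $(b,k)$ with $b \in \{0,1,\ldots,q\}$ and $k \in \{0,1,\ldots,\min(b,|V_p|)\}$, giving $O(q^2)$ cases. For each $(b,k)$, the final veto count of $p$ is fixed at $v(p)-k$, so we only need to check whether we can choose $k$ voters in $V_p$ and $b-k$ in $V \setminus V_p$ to bribe, and redirect the resulting $2b$ new vetoes over $C \setminus \{p\}$ in $b$ pairs of distinct candidates, so that every $c \neq p$ ends with at least $v(p)-k$ vetoes. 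No sensible bribery would reassign a new veto to $p$, so this captures all useful briberies of size exactly $b$.

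For fixed $(b,k)$, write $r(c)$ for the number of bribed voters originally vetoing $c$ and let $\ell(c) = \max\bigl(0,\, v(p)-k-v(c)+r(c)\bigr)$ be the deficit that the new vetoes must repair at $c$. The $2b$ new vetoes can be distributed so that the demand $a(c) \geq \ell(c)$ is met at every $c$ iff $\sum_c \ell(c) \leq 2b$ and $\max_c \ell(c) \leq b$; this is the standard realizability criterion for a multigraph on $b$ edges with prescribed minimum degrees on the vertex set $C \setminus \{p\}$. Hence the inner task reduces to selecting the $b$ bribed voters so as to minimize $\sum_c \ell(c)$ (and keep $\max_c \ell(c) \leq b$). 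For this I would use a greedy rule analogous to the bribery greedy of~\cite{FHH06}: among $V_p$, pick the $k$ voters whose co-vetoed candidate currently has the largest $v(\mathrm{co}(u))$; among $V \setminus V_p$, pick the $b-k$ voters whose pair of vetoed candidates has the largest sum of current veto counts. A pair-swap exchange argument — if a bribed voter's veto targets currently have smaller counts than an unbribed voter's, swap them — shows this selection minimizes the objective, since moving a removal from a high-$v(c)$ candidate to a low-$v(c)$ one can only activate or enlarge a $\max(0,\cdot)$ term.

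I expect the main obstacle to be the optimality of the greedy rule for $V \setminus V_p$, since each such bribery affects two candidates at once and the objective is nonlinear through $\max(0,\cdot)$. Fixing $(b,k)$, however, makes the threshold $v(p)-k$ a constant, so each candidate's contribution to $\sum_c \ell(c)$ is a nondecreasing function of $r(c)$, and the exchange argument goes through term-by-term on the two endpoints of a swapped voter. After the greedy selection I verify the two feasibility conditions in polynomial time and, if they hold, output ``accept.'' If no $(b,k)$ yields a feasible bribery, the algorithm rejects. The running time is polynomial in $|V|$ and $|C|$, which establishes that unweighted 2-veto bribery is in P.
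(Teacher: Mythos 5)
Your skeleton---fix the number of bribes so that $p$'s final veto count is determined, translate each rival's requirement into a deficit $\ell(c)$, and check realizability of the reassigned vetoes via the condition $\sum_c \ell(c) \le 2b$ and $\max_c \ell(c) \le b$---is sound and close in spirit to the paper's. The gap is in the voter-selection step: the static greedy rules do not minimize $\sum_c \ell(c)$, and the pair-swap argument does not certify them. Writing $T = v(p)-k$ and $s(c)=v(c)-T$ for the slack of $c$, we have $\ell(c)=\max\bigl(0,\,r(c)-s(c)\bigr)$, so a voter's marginal cost depends on the slack \emph{remaining} after earlier selections, not on the static counts. Concretely, for the $V_p$ part: let $c_1$ have $v(c_1)=T+2$ with five voters vetoing $\{p,c_1\}$, let $c_2$ have $v(c_2)=T+1$ with five voters vetoing $\{p,c_2\}$, and let $k=4$. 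Your rule bribes four $c_1$-voters (their co-candidate has the strictly larger count), giving total deficit $\ell(c_1)=2$, whereas two from each gives total deficit $1$; no swap is triggered, since every bribed voter's target has a larger count than every unbribed voter's. The $V\setminus V_p$ part is worse: there the selection is a degree-constrained-subgraph problem (e.g.\ two parallel voters vetoing $\{a,b\}$ with $v(a)=T+2$, $v(b)=T+1$ against one voter vetoing $\{e,f\}$ with $v(e)=v(f)=T+1$, choosing $b-k=2$), which no edge-weight greedy solves. Separately, you never argue that a selection minimizing $\sum_c\ell(c)$ can be taken to also satisfy $\max_c\ell(c)\le b$; these are distinct objectives.

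The missing idea that repairs this---and the one the paper uses---is that no bribed voter need lie outside $V_p$ at all: after disposing of the trivial case $|V_p|\le q$ (bribe all of $V_p$ and $p$ ends with zero vetoes), any successful bribery that bribes some $u\notin V_p$ leaves some $w\in V_p$ unbribed, and transferring the new vote from $u$ to $w$ lowers the final counts of $p$ and of $w$'s co-vetoed candidate by one each while weakly raising everyone else's, preserving success. With that reduction only the $V_p$ selection remains, which is a separable convex minimization: each $c$ absorbs up to $\min\bigl(n(c),\max(s(c),0)\bigr)$ bribes for free and every further bribe costs one unit of deficit, so a greedy that tracks \emph{remaining} slack computes the true minimum, after which your realizability check (and a routine handling of the $\max_c\ell(c)\le b$ side condition) completes the algorithm.
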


\begin{proof}
Consider an election $E=(C,V)$ and quota $q > 0$.

Without loss of generality, we will only bribe voters that veto $p$. If no voters veto $p$, then $p$ is a winner and we can accept. Similarly, if $p$ has at most $q$ vetoes, then we can bribe these voters and make $p$ a winner. Thus, we will assume that $p$ has more than $q$ vetoes. Also without loss of generality, we will not bribe any voters to veto $p$.

For each candidate $c \in C$, let $\mathrm{Vetoes}(c)$ be the number of vetos currently received by candidate $c$, and for $c_1 \neq c_2$, $\mathrm{Vetoes}(c_1,c_2)$ be the number of voters who veto both $c_1$ and $c_2$. Then, following our bribery of $q$ voters, $p$ will receive exactly $\mathrm{Vetoes}(p)-q$ vetoes. Since we don't want any candidate to have fewer than this many vetoes, for each $c \neq p$, we can bribe up to $\mathrm{max}(\mathrm{Vetoes}(p,c),\mathrm{Vetoes}(p)-\mathrm{Vetoes}(c)-q)$ voters whom veto both $p$ and $c$ and ensure that $c$ will not beat $p$.

We will thus bribe a total of \[\mathrm{max}(\sum_{c \neq p} {\mathrm{max}(\mathrm{Vetoes}(p,c),\mathrm{Vetoes}(p)-\mathrm{Vetoes}(c)-q)}, q)\] voters. The votes chosen does not matter, since we are removing vetoes from candidates who will not beat $p$, and we are bribing the maximum number of votes feasible. We must then assign these votes such that each candidate currently beating $p$ receives enough vetoes to lose to $p$. This is similar to the polynomial-time computable problem of $2$-veto manipulation.~\end{proof}

The next two results, of bribery of $2$-approval and $3$-veto elections, demonstrate the connection between the seemingly unrelated problem of Simple $b$-Edge Cover and manipulating election systems. We will see in this and other results that this connection occurs most often when the election system distinguished two candidates from the remaining candidates. This is because the connection equates candidates to vertices of graphs and voters to edges, with the two candidates being distinguished by each voter equated with the two vertices connected by the corresponding edge.

\begin{theorem}
Unweighted $2$-approval elections are computationally vulnerable to bribery by unpriced voters.
\end{theorem}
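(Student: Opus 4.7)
The plan is to reduce the problem, for each possible final score of $p$, to an instance of Simple $b$-Edge Cover of Multigraphs, which is polynomial-time computable by Theorem~1.

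First I would model the instance $(C,V,p,q)$ as a multigraph $G$ whose vertices are the candidates in $C\setminus\{p\}$ and whose edges are the voters: a voter approving $\{c_1,c_2\}$ with neither candidate equal to $p$ (a \emph{Type~B} voter) is drawn as an edge between $c_1$ and $c_2$, and a voter approving $\{p,c\}$ (a \emph{Type~A} voter) is drawn as a pendant attached to $c$. The approval score of each $c\ne p$ equals its degree in $G$, and $p$'s score equals the number of Type~A pendants, which I denote $\mathrm{App}(p)$. Observe that every bribed voter may be assumed to vote $\{p,c'\}$ after the bribery for some freshly chosen $c'\ne p$: so a bribed Type~A voter just shifts one approval from its endpoint to $c'$, while a bribed Type~B voter removes one approval from each of its two endpoints and adds one approval each to $p$ and $c'$. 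In particular, if $b_B$ Type~B voters are bribed, then $p$'s final score is $\mathrm{App}(p)+b_B$.

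I would then iterate over every possible target score $P\in\{\mathrm{App}(p),\ldots,\mathrm{App}(p)+q\}$ and set $k=P-\mathrm{App}(p)$, $D=\{c\ne p:\deg_G(c)>P\}$, and $b(c)=\deg_G(c)-P$ for $c\in D$ (and $b(c)=0$ otherwise). Deciding whether a bribery at target $P$ of size at most $q$ exists then reduces to selecting a subset $B\subseteq E(G)$ of size at most $q$ containing at least $k$ Type~B edges such that each $c\in D$ is incident to at least $b(c)$ edges of $B$. The side condition ``at least $k$ Type~B edges'' can be dispatched by further enumerating the number $b_A\le q$ of Type~A edges included in $B$, greedily choosing those that give the most useful coverage, and then solving Simple $b$-Edge Cover of Multigraphs (with suitably updated $b$-values) on the multigraph of remaining Type~B edges. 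Since there are $O(q)$ values of $P$ and $O(q)$ values of $b_A$, the algorithm calls Simple $b$-Edge Cover of Multigraphs at most $O(q^2)$ times, each call being polynomial-time by Theorem~1.

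The main obstacle will be justifying the normalization that bribed voters' new votes may always be routed to pairs $\{p,c'\}$ in which $c'$ does not end up exceeding score $P$. The argument rests on a counting identity: the total approval count $2\,||V||$ is invariant under bribery, so after the bribery the non-$p$ candidates share exactly $2\,||V||-P$ approvals among themselves, and routing the new approvals greedily to the lowest-scoring non-$p$ candidates (those already well below $P$) produces an assignment in which none exceeds $P$ whenever any valid bribery at target $P$ exists at all. Once this slack argument is verified, the reduction is sound, and the whole algorithm runs in polynomial time.
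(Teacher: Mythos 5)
Your overall strategy is the same as the paper's: normalize so that every bribed voter approves $p$ afterward, observe that each over-performing candidate $c$ must lose $\mathrm{app}(c)-P$ approvals where $P$ is $p$'s final score, encode candidates as vertices and bribable voters as edges, and decide feasibility via Simple $b$-Edge Cover of Multigraphs inside an enumeration over a constant number of integer parameters. The paper fixes $P=\mathrm{app}(p)+q$ outright, builds the graph on the deficient candidates $C_2$ plus one auxiliary vertex $x$ standing for the safe candidates (with $b_x=s_1$ enumerated), and handles the routing of the bribed voters' second approvals through that auxiliary vertex; you instead handle routing by a global counting argument (which does check out: your capacity condition reduces to $mP\geq 2\|V\|$, which is necessary for any winning configuration in which $p$ scores $P$). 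These are cosmetic differences.

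There is, however, one genuine gap: your treatment of Type~A voters (those already approving $p$). You propose to enumerate their number $b_A$ and then ``greedily choose those that give the most useful coverage'' before solving the residual cover on Type~B edges. No local greedy criterion is correct here, because which vertices of $D$ receive the Type~A decrements determines whether the residual Type~B cover is feasible. For instance, with $D=\{c_1,c_2,c_3\}$, all $b$-values equal to $1$, a single Type~B edge $\{c_1,c_2\}$, Type~A pendants at $c_1$ and at $c_3$, and budget $2$: bribing the pendant at $c_3$ together with the edge $\{c_1,c_2\}$ succeeds, while any rule that prefers the pendant at $c_1$ (e.g., by degree or by deficit) leaves $c_3$ uncoverable. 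The clean repair --- and the one the paper uses --- is to prove that Type~A voters never need to be bribed at all: if $c$ must shed $\mathrm{app}(c)-\mathrm{app}(p)-q$ approvals, then since at most $\mathrm{app}(p)$ of $c$'s approvals come from Type~A voters, $c$ has strictly more than that many Type~B approvers, and an exchange argument (swapping a bribed Type~A voter at $c$ for an unbribed Type~B voter at $c$ only raises $p$'s score and lowers another competitor's) shows the substitution never hurts. With that lemma in place your $b_A$ enumeration and the greedy step disappear, and the rest of your argument goes through.
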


\begin{proof}
We solve $2$-approval bribery using Simple $b$-Edge Cover.

Consider an election $E=(C,V)$. We wish to ensure the victory of $p$ by bribing at most $q$ voters.

We first observe that, without loss of generality, we will only bribe voters that do not approve of $p$. If there are no more than $q$ such voters, then all voters will approve of $p$ and $p$ will win. Thus, without loss of generality, we assume that more than $q$ voters do not approve of $p$. Also without loss of generality, we can also assume that we give $p$ one approval for each of these bribed votes, and that we will bribe exactly $q$ voters.

For each candidate $c \in C$, let $\mathrm{app}(c)$ be the number of voters currently approving candidate $c$, and for $c_1 \neq c_2$, $\mathrm{app}(c_1,c_2)$ be the number of voters who approve both $c_1$ and $c_2$. Following bribery, $p$ will receive $\mathrm{app}(p) + q$ approvals. Further, for each candidate $c \neq p$, define $\mathrm{def}(c) = \mathrm{app}(p) + q - \mathrm{app}(c)$. If $\mathrm{def}(c) > 0$, we can give $\mathrm{def}(c)$ approvals to candidate $c$. If $\mathrm{def}(c) < 0$, we must bribe $- \mathrm{def}(c)$ voters who currently approve of $c$.

Let $C_1 = \{v\ |\ \mathrm{def}(c) \geq 0\}$ and $C_2 = \{v\ |\ \mathrm{def}(c) < 0\}$. Thus, for each $c \in C_2$, we must bribe at least $- \mathrm{def}(c)$ voters who currently approve of $c$. Suppose that we will bribe exactly $s_1$ voters approving one voter in $C_2$ and $s_2$ voters approving two voters in $C_2$. We will explain what $s_1$ and $s_2$ will be later.

Let $E = \sum_{c \neq p} {\mathrm{max}(0,-\mathrm{def}(c))}$. This is the total number of ``excess'' approvals that must be bribed from each nondistinguished candidates. If $E > 2q$, then clearly bribery is not possible, since a bribery of $q$ voters can only remove $2q$ excess approvals. Conversely, let $D = \sum_{c \neq p} {\mathrm{max}(0,\mathrm{def}(c))} + s_1$. This is the total ``deficit'' approvals that we can give to the nondistinguished candidates; initially, we can give $\sum_{c \neq p} {\mathrm{max}(0,\mathrm{def}(c))}$ approvals to the candidates of $C_1$. We can give $s_1$ more approvals due to the bribery of voters approving candidates in $C_1$.

Thus, we can bribe no more than $D$ voters, since each voter will have to approve one nondistinguished candidate in addition to $p$ (Note that without loss of generality, we will not bribe voters who have a positive deficit, and thus, at most $D$ approvals may be given to those candidates). Thus if $q < D$, then bribery is not possible with this set of parameters.

Construct the multigraph $G$ as follows.

Let $V(G) = C_2 \cup \{x\}$ and define $E(G)$ as follows. For every voter approving $\{u,v\} \subseteq C_2$, we add an edge $(u,v)$ and for each voter approving $u \in C_2$ and $v \in C_1$, we add an edge $(u,x)$. We set the $b$-values to $b_c = -\mathrm{def}(c)$ for $c \in C_2$ and $b_x = s_1$. Note that in essence, $x$ represents the candidates of $C_1$ in this construction.

\noindent \textbf{Claim:} Given that $D \leq q \leq \frac{E}{2}$, $G$ has a simple $b$-edge covering of $q$ edges iff there exists a bribery of $q$ voters, of which $s_1$ of the bribed voters approve one candidate of $C_2$ and $s_2$ approve two candidates of $C_2$, making $p$ a winner.

Consider a simple $b$-edge covering of $q$ edges. Without loss of generality, we may assume $x$ is covered by exactly $b_x = s_1$ edges, as we may modify the covering otherwise. We bribe the voters corresponding to the $q$ edges to approve of $p$ and one nondistinguished candidate. In the case of edges linking $v \in G$ and $x$, it suffices to bribe any voter approving $v$ and a candidate in $C_1$. Since $q \leq D$, it is possible to give the latter approval to the candidates in $C_1$ without exceeding the score of $p$. As $b(v)$ corresponds to the number of approvals that must be removed from each $v \in C_2$, $p$ will beat each candidate in $C_2$.

Conversely, consider a bribery of voters electing $p$ satisfying the paramters. For each candidate $c \in C_2$, at least $-\mathrm{def}(c)$ voters approving $c$ must be bribed. This corresponds to a $b$-edge cover. Also, by the definition of the parameters, exactly $s_1$ voters approving candidates in $C_1$ will be bribed. This corresponds to a solution of the b-Edge Cover problem above.

To find a bribery electing $p$, we enumerate the values of $s_1$ and $s_2$ such that $s_1 + s_2 = q$ and attempt the corresponding construction of $b$-Edge Cover.~\end{proof}

\begin{theorem}
Unweighted $3$-veto elections are computationally vulnerable to bribery by unpriced voters.
\end{theorem}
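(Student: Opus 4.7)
The plan is to adapt the strategy used for the $2$-approval bribery theorem, reducing the problem to Simple $b$-Edge Cover of Multigraphs, which is polynomial-time computable by the earlier theorem in this paper. The key observation is that every voter vetoing $p$ vetoes exactly two additional candidates, and these two candidates can be viewed as the endpoints of an edge on the vertex set $C \setminus \{p\}$.

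First, without loss of generality, we only bribe voters currently vetoing $p$, and we change each bribed voter's preferences so that $p$ is no longer vetoed. If $p$ has at most $q$ vetoes, bribing all such voters trivially makes $p$ a winner, so assume $p$ has more than $q$ vetoes and that we bribe exactly $q$ voters. After bribery, $p$ will have $\mathrm{Vetoes}(p) - q$ vetoes. For each $c \neq p$, define $\mathrm{def}(c) = \mathrm{Vetoes}(p) - q - \mathrm{Vetoes}(c)$; the goal is to ensure every $c \neq p$ ends with at least $\mathrm{Vetoes}(p) - q$ vetoes. Partition the non-$p$ candidates into $C_2 = \{c : \mathrm{def}(c) < 0\}$ (too many vetoes to lose naively) and $C_1 = \{c : \mathrm{def}(c) \geq 0\}$.

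Construct a multigraph whose vertex set is $C_2 \cup \{x\}$, where $x$ aggregates the vertices of $C_1$. For each voter vetoing $p$ and two other candidates $c_1, c_2$, add an edge: within $C_2$ if both lie there, between $C_2$ and $x$ if exactly one does, or a self-loop at $x$ if neither does. Set $b(c) = -\mathrm{def}(c)$ for $c \in C_2$. Because each bribed voter must still veto three non-$p$ candidates, the bribery also gives us $3q$ new vetoes to distribute, with no candidate receiving more than $q$ and each bribed voter's three vetoes pairwise distinct. As in the $2$-approval proof, directing any new vetoes to a candidate in $C_2$ is counterproductive, so WLOG all $3q$ new vetoes go to $C_1$.

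Mimicking the $s_1, s_2$ enumeration of the $2$-approval proof, enumerate the polynomially-many ways to split the $q$ bribed voters into those whose two non-$p$ vetoes both lie in $C_2$, those with exactly one in $C_2$, and those with both in $C_1$; these parameters determine $b(x)$ and the required edge count. For each choice, solve the resulting Simple $b$-Edge Cover of Multigraphs instance in polynomial time, then verify that the $3q$ new vetoes can be distributed so that each $c \in C_1$ attains a final veto count of at least $\mathrm{Vetoes}(p) - q$. The main obstacle is this new-vetoes feasibility check: we must assign three distinct non-$p$ candidates to each bribed voter while meeting per-candidate lower bounds and a per-candidate cap of $q$. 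By a Gale--Ryser-style argument (or equivalently a direct max-flow), this subproblem reduces to straightforward arithmetic conditions on the column-sum profile and can be decided in polynomial time. Combining the enumeration, the polynomial-time edge cover computation, and the arithmetic feasibility check yields a polynomial-time algorithm for $3$-veto bribery.
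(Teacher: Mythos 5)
Your high-level setup matches the paper's (very terse) proof: bribe only voters who veto $p$, and turn each such voter into an edge between the two other candidates it vetoes. But the concrete reduction has the covering constraint pointing the wrong way, and this is a genuine error rather than a cosmetic one. In the $2$-approval argument you are imitating, more points are better, so bribing away a voter who approves $c$ \emph{helps} $p$ against $c$; hence each candidate currently beating $p$ imposes a \emph{lower} bound on how many incident voters must be bribed, which is exactly a $b$-edge cover. In $3$-veto the situation is dual: fewer vetoes are better, so bribing away a voter who vetoes $c$ \emph{hurts} $p$ against $c$. The candidates in your $C_2$ (those with vetoes to spare) therefore impose \emph{upper} bounds $r_c \leq -\mathrm{def}(c) + a_c$ on the number of incident bribed voters, i.e.\ a degree-constrained subgraph / $b$-edge \emph{matching} condition (or, by complementation, a cover condition on the \emph{unbribed} voters), not a cover on the bribed ones. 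Requiring each $c \in C_2$ to be incident to at least $b(c) = -\mathrm{def}(c)$ bribed edges actively works against you: it forces each slack candidate to lose at least its entire slack, driving it down to at most $\mathrm{Vetoes}(p)-q$ vetoes, and any over-coverage makes that candidate beat $p$. Meanwhile the candidates that currently beat $p$ (your $C_1$) are not helped at all by the choice of which voters to bribe; they are helped only by the re-assigned votes, so they belong to the manipulation-style assignment subproblem, as in the paper's own $2$-veto bribery proof.

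Two secondary points. First, the claim that WLOG all $3q$ new vetoes go to $C_1$ is false when $C_1$ cannot absorb $3q$ vetoes under the per-voter distinctness cap of $q$ per candidate (e.g.\ $\|C_1\| < 3$); spilling new vetoes onto slack candidates is harmless and sometimes unavoidable, so the correct statement is only that new vetoes to $C_1$ are prioritized. Second, your final feasibility check decouples the two subproblems too much: a needy candidate $c$ that is also vetoed by $r_c$ of the bribed voters needs $a_c \geq \mathrm{def}(c) + r_c$ new vetoes, not $\mathrm{def}(c)$, so the edge-selection step and the Gale--Ryser-style check interact through the quantities $\max(0,\mathrm{def}(c)+r_c)$, whose sum must stay within the budget $3q$. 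Once the matching-versus-cover direction is fixed and this coupling is accounted for (for instance by minimizing $\sum_c \max(0,\mathrm{def}(c)+r_c)$ over degree-constrained selections of $q$ edges, or by enumerating a few more parameters), the algorithm does go through in polynomial time; the paper itself elides all of this, but as written your reduction would reject yes-instances and accept no-instances.
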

\begin{proof}
We observe that, instead of bribing voters not approving $p$, we bribe voters vetoing $p$. A similar reduction from this problem to that of $b$-Edge Cover can be constructed by converting votes vetoing candidates $p, c_1,$ and $c_2$ into an edge between $c_1$ and $c_2$ in our graph.~\end{proof}

\section{Controlling an Election via Voters}
The chair of an election can control the outcome by controlling the candidate set. Three methods of candidate control of interest are, adding voters to the voter set, removing voters from the voter set, and partitioning the voters and conducting subelections. An example of the latter case can be seen in the Electoral Colleges in the Presidential Elections of the United States, where a separate plurality election is conducted within each state. Throughout the history of the United States, minorities, women, and persons under the age of $21$ have also been added to the voter set of the Presidential Election.

We evaluate the computational complexity of such controls in families of approval-based scoring protocols.

\begin{theorem}
Unweighted $1$-approval (i.e., plurality) and $1$-veto (i.e., veto) are computationally vulnerable to control by adding or deleting voters.
\end{theorem}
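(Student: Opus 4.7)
The plan is to give a simple greedy polynomial-time algorithm for each of the four problems: $1$-approval control by adding voters, $1$-approval control by deleting voters, $1$-veto control by adding voters, and $1$-veto control by deleting voters. In every case, the key observation is that, in a $1$-approval or $1$-veto election, each voter touches exactly one candidate, so the marginal effect of adding or deleting any single voter on the current score of $p$ relative to any rival $c$ is trivial to characterize. This makes the useful voters a single ``type,'' and we just need to count them.

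For $1$-approval control by adding voters, first I would argue that adding a voter of $V'$ who approves some $c \neq p$ can never help $p$, so without loss of generality we only add voters from $V'$ that approve $p$. The plan is: add $\min(q, |\{v \in V' \mid v \text{ approves } p\}|)$ such voters, and check whether $p$ wins. For $1$-approval control by deleting voters, first I would show that deleting any voter approving $p$ can never help $p$, so we only delete voters approving rivals. Then, since the approvals of distinct candidates come from disjoint voter sets, the cheapest way to beat each $c \neq p$ is independent: we need to delete $\max(0,\operatorname{app}(c) - \operatorname{app}(p))$ voters from the approvers of $c$ (replacing with $\operatorname{app}(c) - \operatorname{app}(p) + 1$ if the unique-winner model is in force). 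Sum these deficits over $c \neq p$ and accept iff the sum is at most $q$.

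For $1$-veto control by deleting voters, the symmetric argument applies: since each voter vetoes exactly one candidate, deleting a voter who vetoes any $c \neq p$ can only hurt $p$, so without loss of generality we only delete voters that veto $p$. Delete $\min(q, \operatorname{Vetoes}(p))$ such voters and check whether $p$ is a winner. For $1$-veto control by adding voters, again only voters from $V'$ that do not veto $p$ can help, so we may restrict to those. Under this restriction, adding a voter vetoing $c$ raises $\operatorname{Vetoes}(c)$ by one and leaves $\operatorname{Vetoes}(p)$ fixed. For each $c \neq p$ with $\operatorname{Vetoes}(c) < \operatorname{Vetoes}(p)$, we need to add exactly $\operatorname{Vetoes}(p) - \operatorname{Vetoes}(c)$ extra vetoes to $c$ (or one more in the unique-winner model). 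Since $V'$ voters targeting distinct candidates are disjoint, we can simply check for each $c \neq p$ whether $V'$ contains enough non-$p$-vetoing voters that veto $c$, and whether the total number used is at most $q$.

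The only subtlety that needs care in all four cases is correctly handling the distinction between the unique-winner and nonunique-winner models by a single adjustment of the threshold by one; no other obstacle arises, since there is no interaction between how we handle different rivals $c \neq p$. Each algorithm runs in $O(\lVert C \rVert + \lVert V \rVert + \lVert V' \rVert)$ time after the votes are read, and correctness follows from the ``only one useful type of voter'' observation combined with the fact that, once we restrict to that type, the problem decomposes over candidates.
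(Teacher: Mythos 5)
Your proposal is correct and takes essentially the same approach as the paper: simple greedy algorithms for each of the four cases, based on the observation that each voter touches exactly one candidate so only one type of added or deleted voter can help $p$, and the requirements decompose over the rival candidates. Your write-up is somewhat more explicit than the paper's (spelling out the deficit sums and the winner-model subtlety), but the underlying argument is identical.
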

\begin{proof}
All of these problems can be solved in polynomial time by simple greedy algorithms. To control by adding voters for plurality elections, we simply add votes that approve $p$ until either $p$ wins, in which case we accept, or we are out of votes to add or have added our quota, in which we case reject, as it cannot benefit $p$ to add votes approving another candidate. To control by deleting voters, for each candidate $p'$ beating $p$, we must delete votes approving $p'$ until $p'$ has as many approvals as $p$. In veto elections, to control by adding voters, for each candidate $p'$ beating $p$, we must add votes vetoing $p'$ until it has as many vetoes as $p$. To control by deleting voters, we delete voters vetoing $p$.~\end{proof}

\begin{theorem}
Unweighted $2$-approval elections are computationally vulnerable to control by adding voters, and unweighted $2$-veto elections by deleting voters.
\end{theorem}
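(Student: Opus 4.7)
My plan is to give direct polynomial-time algorithms for both problems, exploiting the same duality as in the preceding bribery argument: in $2$-approval, voters who approve $p$ play the role of ``good'' edges incident to $p$; in $2$-veto, voters who veto $p$ play the same role. Both problems reduce to a feasibility check that I can parametrize by the number $t$ of voters added or deleted.

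For $2$-approval control by adding voters, first observe that we may assume every added voter approves $p$: a voter in $V'$ not approving $p$ contributes to at most two non-$p$ candidates and nothing to $p$, so adding such a voter can never help. Next I parametrize by the number $t$ of added voters, $0\le t\le q$. Let $\mathrm{app}(c)$ denote approvals in $V$, and for each $c\ne p$ let $N_c$ be the set of voters in $V'$ approving $\{p,c\}$. If we add $n'_c$ voters from $N_c$, then after insertion $p$ has $\mathrm{app}(p)+t$ approvals and $c$ has $\mathrm{app}(c)+n'_c$. So $p$ is a winner iff for each $c\ne p$ we have $n'_c\le \mathrm{app}(p)+t-\mathrm{app}(c)$, and $\sum_c n'_c=t$, with $0\le n'_c\le |N_c|$. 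For a fixed $t$ this is a feasibility question that I would solve greedily: set $u_c(t)=\min\bigl(|N_c|,\ \mathrm{app}(p)+t-\mathrm{app}(c)\bigr)$; a valid choice of $(n'_c)$ exists iff every $u_c(t)\ge 0$ (no candidate already beats $p$ by more than $t$) and $\sum_c u_c(t)\ge t$. If both hold, pick any non-negative integers $n'_c\le u_c(t)$ summing to $t$. Iterating $t$ from $0$ to $q$ and accepting on the first feasible value gives a polynomial-time decision procedure.

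For $2$-veto control by deleting voters, I would dualize. Here we may assume we only delete voters who veto $p$, since a deleted voter not vetoing $p$ removes vetoes only from non-$p$ candidates and so cannot help $p$. Parametrize by the number $t\le \min(q,\mathrm{Vetoes}(p))$ of deleted voters. Each deleted voter vetoes $\{p,c\}$ for some $c$; let $n'_c$ be the number deleted that veto $c$, and let $M_c$ be the set of voters in $V$ vetoing $\{p,c\}$. After deletion $p$ has $\mathrm{Vetoes}(p)-t$ vetoes and $c$ has $\mathrm{Vetoes}(c)-n'_c$. For $p$ to win we need $n'_c\le \mathrm{Vetoes}(c)-\mathrm{Vetoes}(p)+t$ and $\sum_c n'_c=t$ with $0\le n'_c\le |M_c|$. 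Exactly the same greedy feasibility test using upper caps $u_c(t)=\min\bigl(|M_c|,\ \mathrm{Vetoes}(c)-\mathrm{Vetoes}(p)+t\bigr)$ applies, and I loop $t$ over its polynomial range.

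The only delicate step in both arguments is the without-loss-of-generality reduction: I need to verify carefully that restricting to voters that ``touch'' $p$ (approve $p$ in the first case, veto $p$ in the second) never rules out a winning solution. This is the main obstacle, but it is a local exchange argument — replacing any added (resp.\ deleted) voter not touching $p$ by the null choice weakly improves $p$'s deficit against every competitor — and I expect it to be short. After that, correctness of the greedy feasibility test follows because once the per-candidate caps $u_c(t)$ and the total budget $t$ are compatible, any integer assignment within the caps is realizable by selecting voters from the $N_c$ (resp.\ $M_c$) pools independently, since the pools are disjoint across distinct $c$.
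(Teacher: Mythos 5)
Your proposal is correct and follows essentially the same route as the paper's (much terser) proof: restrict without loss of generality to voters that touch $p$ (approve $p$ in the $2$-approval case, veto $p$ in the $2$-veto case), fix $p$'s resulting score, and check whether the touched voters can be distributed so that no competitor exceeds it, which works because the pools $N_c$ (resp.\ $M_c$) are disjoint. The only cosmetic difference is that you iterate over the number $t$ of altered voters, whereas the paper observes that altering as many such voters as possible is itself without loss of generality (each added voter approving $\{p,c\}$ leaves the gap to $c$ unchanged and improves the gap to every other candidate), so a single feasibility check at the maximal $t$ suffices.
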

\begin{proof}
Consider the case of control by adding voters in $2$-approval elections. Without loss of generality, we only need to consider adding voters which approve of $p$, and consider adding as many voters as possible. This determines the final score of $p$, allowing one to determine if it is possible to choose votes such that no other candidates will exceed this total.

In the case of $2$-veto, we only need to consider deleting voters who veto $p$. We compute the number of vetoes $p$ will retain, and determine if it is possible to delete these voters such that no other candidate will have fewer vetoes.~\end{proof}

\begin{theorem}
Unweighted $3$-approval and $2$-veto elections are computationally vulnerable to control by adding voters, and unweighted $2$-approval and $3$-veto elections by deleting voters.
\end{theorem}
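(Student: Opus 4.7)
The plan is to reduce each of the four problems to a polynomial-time computable edge problem on multigraphs, following the template that was used in the earlier proofs for $2$-approval and $3$-veto bribery. The cases in which $p$'s own score is held fixed by the control operation (namely $2$-veto control by adding voters and $2$-approval control by deleting voters) will reduce directly to Simple $b$-Edge Cover of Multigraphs, while the cases in which $p$'s score shifts with the size of the control set ($3$-approval control by adding voters and $3$-veto control by deleting voters) will reduce to Simple $b$-Edge Matching of Multigraphs after an outer loop over the number of voters added or deleted.

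First I would handle the two cover cases. For $2$-veto control by adding voters, without loss of generality every added voter vetoes two non-$p$ candidates (adding any voter that vetoes $p$ would only hurt $p$). I encode each such candidate voter in $V'$ as an edge on vertex set $C \setminus \{p\}$. Since $\mathrm{vet}(p)$ is unchanged, $p$ wins iff every $c \neq p$ ends with at least $\mathrm{vet}(p)$ vetoes, so I set $b(c) = \max(0, \mathrm{vet}(p) - \mathrm{vet}(c))$ and ask whether there exists a set of at most $q$ edges incident at least $b(c)$ times to each vertex $c$. This is exactly Simple $b$-Edge Cover of Multigraphs, and is therefore polynomial-time decidable. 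The $2$-approval deleting voters case is symmetric: without loss of generality the deleted voters are exactly those approving two non-$p$ candidates, each is encoded as an edge between those two candidates, and with $b(c) = \max(0, \mathrm{app}(c) - \mathrm{app}(p))$ the instance is again Simple $b$-Edge Cover of Multigraphs.

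For the two matching cases, the complication is that $p$'s score depends on how many voters are chosen. In $3$-approval adding voters, without loss of generality every added voter approves $p$ together with two other candidates, and is encoded as an edge on $C \setminus \{p\}$. If $k$ voters are added, then $p$'s score becomes $\mathrm{app}(p)+k$, and for a non-$p$ candidate $c$ we need $k_c \leq k - (\mathrm{app}(c) - \mathrm{app}(p))$. I would loop over each $k$ with $0 \leq k \leq q$, set $b_k(c) = k - (\mathrm{app}(c) - \mathrm{app}(p))$, reject that $k$ if some $b_k(c) < 0$, and otherwise invoke Simple $b$-Edge Matching of Multigraphs with threshold $k$ on the graph. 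Since vertex constraints are only upper bounds, any matching of size larger than $k$ restricts to one of size $k$, so testing existence of a matching of size at least $k$ suffices. The $3$-veto deleting voters case is analogous: one may assume every deleted voter vetoes $p$ plus two other candidates, encoded as an edge, and the same loop over $k$ with $b_k(c) = k - (\mathrm{vet}(p) - \mathrm{vet}(c))$ resolves the problem via Simple $b$-Edge Matching of Multigraphs.

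The main obstacle I expect is the careful case analysis surrounding the variable number of edges selected in the matching cases: identifying which values of $k$ are infeasible a priori (because some $b_k(c)$ would be negative), and verifying the equivalence between the ``exactly $k$'' intention of the reduction and the ``at least $k$'' form of the edge-matching decision problem. A secondary subtlety is justifying the without-loss-of-generality restrictions on which voters are added or deleted; in each case it boils down to observing that any voter not of the assumed type either strictly hurts $p$ or is useless, so removing it preserves feasibility. Once these are in place, polynomial-time computability of all four problems follows from the polynomial-time algorithms for Simple $b$-Edge Cover of Multigraphs (proved earlier in the paper) and for the analogous multigraph extension of $b$-edge matching (obtained, just as for the cover version, by passing through the capacitated variant on the underlying simple graph).
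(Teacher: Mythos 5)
Your proof takes essentially the same route as the paper's: candidates become vertices, the selectable voters become edges of a multigraph, and the four problems reduce to the Simple $b$-Edge Cover / Simple $b$-Edge Matching variants for multigraphs. Your write-up is in fact more complete than the paper's, which works through only the $3$-approval adding-voters case (using the shortcut that one may as well add exactly $q$ voters approving $p$, in place of your outer loop over $k$) and leaves the remaining three cases, including the cover-type $2$-veto-adding and $2$-approval-deleting cases, to analogy.
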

\begin{proof}
As demonstrated in the case of bribery, we represent the candidates as vertices and the voters by edges. The reduction is to Simple $b$-Edge Matching of Multigraphs, in this case.

Consider the case of control by adding at most $q$ unweighted voters in $3$-approval elections. Without loss of generality, we will add exactly $q$ voters, all of which approve $p$. The final score of $p$ is thus $s(p) + q$, where $s(p)$ is the initial score of $p$. For each candidate $c \in p$, we may add at most $s(p) + q - s(c)$ voters approving $c$.

Each voter that is added also approves two other candidates. In this case, each voter corresponds to an edge between the two other candidates, and we must find $q$ edges such that each vertex corresponding to the candidate $c$ is covered by at most $s(p) + q - s(c)$ edges, giving us $q$ voters that can be added without exceeding this score. This corresponds to the problem of Simple $b$-Edge Matching of Multigraphs, which is polynomial-time computable.

A similar reduction will also show that $3$-veto elections are vulnerable to control by deleting voters.~\end{proof}

\begin{theorem}
$k$-approval elections are computationally resistant to constructive control by adding voters for $k \geq 4$.
\end{theorem}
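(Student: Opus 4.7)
The plan is to prove NP-hardness by a reduction from Exact Cover by 3-Sets (X3C), following the same spirit as the $k$-approval bribery reduction earlier in the paper, but adapted so that ``choosing an unestablished voter to add'' plays the role that ``bribing a voter'' played there. Given an X3C instance with ground set $S=\{s_1,\dots,s_{3m}\}$ and 3-sets $T_1,\dots,T_n$, I build a $k$-approval election whose candidates include $p$, the ``element candidates'' $s_1,\dots,s_{3m}$, and a set of buffer candidates $b_1,\dots,b_N$. The unestablished voter pool $V'$ consists of one voter $v_j$ per $T_j$, where $v_j$ approves $\{p,t_{j,1},t_{j,2},t_{j,3}\}$ together with, if $k>4$, a fixed block of $k-4$ common buffer candidates $d_1,\dots,d_{k-4}$. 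The quota is $q=m$.

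Next I would calibrate the established voter set so that the initial scores are $s(p)=0$, $s(s_i)=m-1$ for every element candidate, and $s(b)\le 0$ for every buffer candidate used by $V'$ (the common ones $d_1,\dots,d_{k-4}$ must have initial score exactly $0$, while the remaining buffer candidates, which no unestablished voter approves, just need score $\le m$). Since each established voter must approve exactly $k$ candidates and none of them should approve $p$, I reserve enough additional ``free'' buffer candidates so that I can distribute the required $3m(m-1)$ approvals on the $s_i$'s across $\lceil 3m(m-1)/k\rceil$ established voters, dumping any leftover approvals into extra buffers whose initial score stays comfortably below $m$. This is the routine but fiddly divisibility/packing step.

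The correctness argument then runs as follows. For the forward direction, an exact cover $A\subseteq\{1,\dots,n\}$ gives $|A|=m$ voters to add; $p$'s final score becomes $m$, each $s_i$ rises from $m-1$ to exactly $m$ because it is covered by exactly one $T_j\in A$, each common buffer $d_\ell$ rises from $0$ to $m$, and all other buffers remain $\le m$, so $p$ is a winner. For the converse, suppose adding $V''\subseteq V'$ with $|V''|=m'\le m$ makes $p$ a winner; every $v\in V'$ approves $p$, so we may assume $V''$ is used in full. Then $p$ scores $m'$ and each $s_i$ scores $(m-1)+c_i$, where $c_i$ is the number of chosen $T_j$'s containing $s_i$; the condition $(m-1)+c_i\le m'$ forces $c_i\le m'-m+1$. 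Summing, $\sum_i c_i = 3m' \le 3m(m'-m+1)$, which for $m\ge 2$ is incompatible with $m'<m$, so $m'=m$ and $c_i\le 1$ for every $i$. Since $\sum c_i=3m$, every $c_i$ equals $1$, and $V''$ encodes an exact cover.

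The main obstacle is the bookkeeping in the second paragraph: making every established voter approve exactly $k$ candidates while hitting target initial scores $0$, $m-1$, and $0$ simultaneously, for every $k\ge 4$. I expect this to reduce to a straightforward padding argument — introduce $O(mk)$ extra buffer candidates and fill in the remaining ``slots'' on established voters arbitrarily among candidates whose score budgets are still positive. Nothing in this padding interacts with the unestablished voters in $V'$, so the forward/backward analysis above is unaffected, and membership in NP is immediate since a proposed $V''\subseteq V'$ can be checked in polynomial time.
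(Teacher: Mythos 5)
Your reduction is essentially the paper's own: one unestablished voter per $3$-set approving $p$ and its three elements (plus $k-4$ common buffers when $k>4$), established voters calibrated so that $s(p)=0$ and $s(s_i)=m-1$, and quota $q=m$. The only differences are cosmetic — you absorb the divisibility/packing issue with extra padding buffers where the paper instead assumes $k$ divides $3m$, and your summation argument in the converse direction is a slightly more explicit version of the paper's counting step.
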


\begin{proof}
We begin by showing that $4$-approval control by adding voters is NP-hard and extend it to $k$-approval for $k > 4$.

Let \[S = \{s_1,\ldots,s_{3m}\}\] and \[T_1 = \{t_{1,1},t_{1,2},t_{1,3}\},\ldots,T_n = \{t_{n,1},t_{n,2},t_{n,3}\}\] be an instance of X3C.

Without loss of generality, we can assume that $4$ divides $3m$, by adding at most $4$ dummy sets to our instance. We define our candidate set \[C = \{p\} \cup \{s_1, \ldots, s_{3m}\}.\] We assign votes to $V$ such that initially $p$ receives no approvals and $s_i$ receives $m-1$ approvals for each $1 \leq i \leq 3m$. Note that this construction is possible provided that $4$ divides $3m$. There are thus $3m+1$ candidates and $\frac{3m(m-1)}{4}$ voters.

For each set $T_i = \{t_{i,1},t_{i,2},t_{i,3}\}$ in our instance of X3C, we add an unestablished voter $v_i$ to $V'$ which approves of $\{p, t_{i,1}, t_{i,2}, t_{i,3}\}$. We show that our instance of X3C has an exact covering if and only if there exists a control of this election by adding $m$ new voters.

Consider an exact $3$-set covering. Adding the votes that correspond to the $m$ $3$-sets of this covering would add one approval to each of $s_i$ and $m$ approvals to $p$, giving each candidate exactly $m$ approvals. Thus, $p$ becomes a winner. Conversely, consider a successful control that adds at most $m$ new voters and makes $p$ a winner. Since each $s_i$ candidate currently has $m-1$ approvals and all votes approve of at least one such candidate, we see that we must add exactly $m$ new voters to potentially make $p$ a winner. However, in such a case, $p$ will end up with $m$ approvals, so none of the $s_i$ candidates can receive two new approvals, as this would give that candidate $m+1$ approvals. This corresponds to a solution to the X3C instance.

In the case of $k$-approval control by adding voters for $k>4$, we modify the reduction as follows. Each voter in $V$ now approves of $k$ candidates in $S$, and thus without loss of generality, we now assume that $k$ divides $3m$. In each voter $v_i$ in $V'$, we will approve of $k-4$ buffer candidates, $\{b_1,\ldots,b_{k-4}\}$ in addition to $\{p, t_{i,1}, t_{i,2}, t_{i,3}\}$. Following an addition of voters, each buffer candidate will have exactly the same number of approvals as $p$, and will this not affect the status of $p$.~\end{proof}

\begin{theorem}
$k$-veto elections are computationally resistant to constructive control by adding voters for $k \geq 3$.
\end{theorem}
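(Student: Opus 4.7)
My plan is to mirror the X3C reduction used in the preceding $k$-approval theorem, but dualized for the veto setting. I start with the base case $k = 3$. Given an X3C instance with $S = \{s_1, \ldots, s_{3m}\}$ and 3-sets $T_1, \ldots, T_n$, I would build a $3$-veto election with candidate set $C = \{p\} \cup \{s_1, \ldots, s_{3m}\}$, choose a suitable integer $\alpha$, and construct the established voter set $V$ so that $p$ receives exactly $\alpha$ vetoes and each $s_i$ receives exactly $\alpha - 1$ vetoes. The unestablished voter set $V'$ would contain, for each $T_i$, a voter $v_i$ that vetoes precisely $\{t_{i,1}, t_{i,2}, t_{i,3}\}$ (none of these voters veto $p$). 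The quota is set to $q = m$.

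For the forward direction, given an exact cover $T_{i_1}, \ldots, T_{i_m}$, adding the corresponding $m$ voters leaves $p$'s veto count at $\alpha$ and gives every $s_i$ exactly one new veto, so every $s_i$ also reaches $\alpha$ vetoes and $p$ is a winner. For the reverse direction, suppose some addition of at most $m$ voters makes $p$ a winner. Since no voter in $V'$ vetoes $p$, $p$ still has $\alpha$ vetoes, so every $s_i$ must end with at least $\alpha$ vetoes, i.e.\ gain at least one new veto. The $3m$ candidates in $S$ collectively need at least $3m$ new vetoes, but each added voter contributes only $3$ vetoes to $S$, forcing exactly $m$ voters added with each $s_i$ receiving exactly one — precisely an exact cover.

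To extend to $k > 3$, I would add $k - 3$ buffer candidates $b_1, \ldots, b_{k-3}$, adjust each $v_i$ to veto $\{t_{i,1}, t_{i,2}, t_{i,3}, b_1, \ldots, b_{k-3}\}$ (now $k$ vetoes, as required), and arrange $V$ so that each $b_j$ starts with $\alpha - m$ vetoes. Adding exactly $m$ voters then pushes every buffer's veto count up to $\alpha$, tying $p$. Conversely, since each $b_j$ needs at least $m$ extra vetoes to reach $\alpha$ and each added voter contributes one veto to each $b_j$, at least $m$ voters must be added, and combined with the quota this forces exactly $m$, after which the $S$-analysis above again yields an exact cover.

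The main obstacle is confirming that the established voter set $V$ with the prescribed veto multiplicities actually exists. A count shows $V$ must contain $[\alpha(3m + k - 2) - mk]/k$ voters, which is a nonnegative integer once $\alpha$ is chosen to be a sufficiently large multiple of $k$ (and, for $k > 3$, large enough that $\alpha \geq m$). Realizability of a given degree sequence on $C$ by $k$-subsets is then a standard padding argument, achievable by repeatedly choosing voter profiles that equalize the current deficits; I would sketch this briefly rather than grind through cases, noting that the flexibility in $\alpha$ makes the construction routine.
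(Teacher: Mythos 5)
Your reduction is correct and rests on the same core idea as the paper's: unestablished voters encode the $3$-sets, and the counting argument ($3m$ needed vetoes supplied three at a time by at most $m$ voters) forces an exact cover. Where you differ is in how the initial scores are arranged. The paper adds two dummy candidates $p'$ and $p''$ and uses a \emph{single} established voter vetoing $\{p,p',p''\}$, so that $p$ starts with one veto and every $s_i$ with zero; the target configuration is then ``everyone has exactly one veto,'' and no question of realizing a prescribed veto profile ever arises. You instead keep the candidate set lean (for $k=3$, just $\{p\}\cup S$) and pad the established voter set so that $p$ sits at $\alpha$ and each $s_i$ at $\alpha-1$, which buys you nothing the dummies do not, and costs you the degree-sequence realizability argument you only sketch (it does go through for $\alpha$ a large enough multiple of $k$, via Gale--Ryser or a round-robin assignment, but it is an extra obligation). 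On the other hand, your treatment of $k>3$ is more explicit than the paper's: you spell out that the buffer candidates should start at $\alpha-m$ vetoes so that they simultaneously absorb the extra veto slots and enforce that \emph{exactly} $m$ voters are added, a detail the paper leaves to the reader. Both arguments are sound; the paper's initialization is the cleaner of the two, and you could graft your explicit $k>3$ buffer analysis onto it directly.
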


\begin{proof}
We begin by showing that $3$-veto control by adding voters is NP-hard and extend it to $k$-veto for $k > 3$.
Let \[S = \{s_1,\ldots,s_{3m}\}\] and \[T_1 = \{t_{1,1},t_{1,2},t_{1,3}\},\ldots,T_n = \{t_{n,1},t_{n,2},t_{n,3}\}\] be an instance of X3C.
We define our candidate set to be \[C = \{p, p', p''\} \cup \{s_1, \ldots, s_{3m}\}.\] Our initial voter set $V$ will consist of one vote, which vetoes $\{p, p', p''\}$.

For each set $T_i = \{t_{i,1},t_{i,2},t_{i,3}\}$, we add a voter $v_i$ to our unestablished voter set $V'$ who vetoes $\{t_{i,1},t_{i,2},t_{i,3}\}$. We show that a manipulation by adding at most $m$ voters exists iff there is an exact covering for this instance of X3C.

Consider an exact $3$-set covering. Adding the $m$ votes corresponding to the covering gives each candidate exactly one veto. $p$ is thus made a winner. In contrast, $p$ can only be made a winner by adding one veto to each candidate in $s_i$. Since only $m$ votes are added, this corresponds to a solution of the X3C instance.~\end{proof}

\begin{theorem}
$k$-approval elections are computationally resistant to constructive control by deleting voters for $k \geq 3$.
\end{theorem}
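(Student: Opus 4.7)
The plan is to reduce from Exact 3-Set Cover, following the X3C-based template used in the earlier $k$-approval resistance proofs of this paper, but tailored to the deletion setting where votes cannot be repurposed, only removed. For the $k=3$ case, given an X3C instance with universe $S = \{s_1, \ldots, s_{3m}\}$ and triples $T_1, \ldots, T_n$, I will build a $3$-approval election with candidate set $\{p\} \cup \{s_1, \ldots, s_{3m}\} \cup B$ (a pool of buffers) and three voter classes: (i) for each $T_i$, one \emph{set voter} approving $\{t_{i,1}, t_{i,2}, t_{i,3}\}$; (ii) \emph{padding voters} each approving one $s_j$ plus two buffers, used to equalize the $s_j$-scores; and (iii) \emph{$p$-boost voters} each approving $p$ plus two buffers. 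With enough buffers and a sufficiently even spread, we can arrange that, initially, $p$ has exactly $n$ approvals, each $s_j$ has exactly $n+1$ approvals, and every buffer has at most $n$. The deletion quota is set to $q = m$.

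The forward direction is immediate: given an exact cover, deleting its $m$ set voters drops each $s_j$ to $n$ while leaving $p$ at $n$ and the buffers untouched, so $p$ ties for the win. For the reverse direction, suppose a deletion $D$ of $|D| \leq m$ voters makes $p$ a winner. I would first argue that $D$ contains no $p$-boost voter: deleting $j \geq 1$ such voters drops $p$ to $n-j$ and forces each $s_j$ to shed at least $j+1$ approvals, a total of $3m(j+1)$ removals from $S$'s counts, which outstrips what $m$ deletions can provide, since a set voter removes only $3$ from $S$ and any other voter at most $1$. Thus $p$ stays at $n$, and each $s_j$ must lose at least one approval, so at least $3m$ total $S$-approvals must be removed. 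Writing $a$ for the number of deleted set voters and $b$ for the deleted padding voters, the inequalities $3a + b \geq 3m$ and $a + b \leq m$ force $a = m$, $b = 0$; the $m$ selected triples cover each $s_j$ at least once and, by counting $3m$ slots against $3m$ elements, exactly once, giving an exact cover.

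For $k \geq 4$, I will augment every voter with $k-3$ additional approvals of private fresh buffers, so that each such buffer is approved only once and cannot threaten $p$; the rest of the argument carries over verbatim. The main obstacle I anticipate is the bookkeeping needed in the $k=3$ construction: one must fix the number of padding and $p$-boost voters and distribute their buffer approvals so that the target scores land exactly and every buffer stays at most $n$, while preserving the crucial structural property that every non-set voter touches at most one element of $S$. That property is precisely what makes the $3a + b \geq 3m$ versus $a + b \leq m$ inequality bite; the remainder of the argument is routine counting.
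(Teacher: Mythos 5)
Your reduction is correct and takes essentially the same route as the paper, which simply reuses its X3C-based $k$-approval bribery construction and deletes the cover voters instead of re-targeting them; your score balancing and the counting argument ($3a+b\ge 3m$ against $a+b\le m$) are the same core idea. If anything, your self-contained version is more careful than the paper's one-line proof, which leaves implicit the rescaling of initial scores needed so that merely deleting (rather than redirecting) the $m$ cover voters brings each element candidate down to $p$'s level, and which does not spell out the backward direction that you work through explicitly.
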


\begin{proof}
The reduction for this case is similar to that of bribery. In this case, instead of bribing the voters to vote for $\{p, p', p''\}$ we simply delete them.~\end{proof}

\begin{theorem}
$k$-veto elections are computationally resistant to constructive control by deleting voters for $k \geq 4$.
\end{theorem}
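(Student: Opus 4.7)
The plan is to reduce from Exact Cover by 3-Sets (X3C), mirroring the structure of the earlier $k$-veto bribery reduction but adjusted so that merely \emph{deleting} set voters (rather than re-directing their vetoes onto buffer candidates) is enough to equalize scores. The core reason the threshold is $k \geq 4$ is that each set-voter must simultaneously strip one veto from $p$ \emph{and} cover the three elements of some $T_i$, which forces a vetoed-set of size at least $4$.

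I would first handle $k = 4$. Given an X3C instance $S = \{s_1, \ldots, s_{3m}\}$ with sets $T_1, \ldots, T_n$ (WLOG padding $n$, $m$ so the various counts and divisibilities below work out, as in the bribery reduction), I build a $4$-veto election on \[C = \{p\} \cup \{s_1, \ldots, s_{3m}\} \cup \{b_1, b_2, b_3, b_4\}\] with three voter classes: (a) one \emph{set voter} per $T_i$ vetoing $\{p, t_{i,1}, t_{i,2}, t_{i,3}\}$; (b) exactly $n - m$ \emph{buffer voters}, each vetoing $\{b_1, b_2, b_3, b_4\}$; and (c) \emph{balancing voters}, each vetoing four candidates of $\{s_1, \ldots, s_{3m}\}$, chosen so that every $s_j$ ends with exactly $n - m + 1$ vetoes. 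The initial counts are then $n$ for $p$, $n-m+1$ for each $s_j$, and $n-m$ for each $b_j$. The quota is $q = m$.

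The forward direction is easy: deleting the $m$ set voters that realize an exact cover drops $p$ to $n-m$, each $s_j$ to $n-m$ (losing exactly one veto, since covered once), and leaves each $b_j$ unchanged at $n-m$, so $p$ is tied for fewest vetoes and wins. For the reverse direction, suppose deleting at most $m$ voters makes $p$ a winner; let $a$, $b$, $c$ count the deleted set/buffer/balancing voters, so $a+b+c \leq m$ and $p$ finishes with $n-a$ vetoes. The constraint $n-a \leq n - m - \mathrm{cov}^{\text{buf}}_{b_j}$ collapses to $a + b \geq m$, while $n - a \leq n - m + 1 - \mathrm{cov}^{\text{set}}_j$ (using that deleted balancing voters only help $s_j$'s by subtracting more vetoes, which tightens the bound further) gives $a \geq m - 1 + \mathrm{cov}^{\text{set}}_j$. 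A short case analysis, splitting on whether $b \geq 1$, forces $b = c = 0$ and $a = m$ with $\mathrm{cov}^{\text{set}}_j \leq 1$ for every $j$; since $m$ deleted set voters contribute exactly $3m$ coverage units distributed across $3m$ candidates, each $s_j$ is covered exactly once and the deleted sets form an exact cover.

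To extend to $k > 4$, I introduce $k - 4$ extra buffer candidates $b'_1, \ldots, b'_{k-4}$ vetoed by \emph{every} voter, padding each vetoed set to size $k$. These new buffers start with veto counts equal to the total voter count, which exceeds $n$, so after deleting $m$ voters they still carry at least $n - m$ vetoes and cannot overtake $p$; the previous analysis then goes through verbatim. The main obstacle I anticipate is the bookkeeping for class (c): guaranteeing that the required per-$s_j$ veto surpluses are nonnegative and that the number of balancing voters comes out integral. As in the bribery reduction, this is resolved by the standard trick of padding the X3C instance with a bounded number of dummy 3-sets to control $n \bmod 4$ and the element degrees.
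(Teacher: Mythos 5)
Your proposal takes essentially the same route as the paper: the paper proves this theorem in two lines by reusing its $k$-veto X3C bribery construction and replacing ``bribe the cover voters onto the buffer candidates'' with ``delete the cover voters,'' and your reduction is that same construction spelled out with a full two-directional argument. The one substantive difference is that you reset the number of buffer voters from the bribery proof's $n-2m$ to $n-m$, and this adjustment is in fact necessary rather than cosmetic: under pure deletion the buffer candidates no longer receive the redirected vetoes, so with $n-2m$ buffer voters they would finish with only $n-2m < n-m$ vetoes and defeat $p$, whereas your calibration leaves them tied at $n-m$ and makes the equivalence go through.
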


\begin{proof}
We note that the construction in the proof that $k$-veto is computationally resistant to bribery in the unweighted and unpriced case also applies to constructive control by deleting voters. In this case, instead of bribing the voters to vote for buffer candidates, we simply delete them.~\end{proof}

\section{Controlling an Election via Candidates}~\label{sect:contbycand}
The chair of an election can also attempt to influence the outcome by affecting the voter set, dictating rules for candidates that may or may not participate in the election. A common example is that of cloning, in which a winning candidate is made worse off by the addition of a similar candidate, causing the votes to be split among them, and possibly allowing a third independent candidate to win.

We briefly review the known complexities of this problem given in~\cite{BTT92} and~\cite{HHR07} and demonstrate how to generalize it to other cases of approval elections. Control by adding and deleting candidates is hard in both the constructive and destructive case in two of the simplest families of scoring protocols: plurality and veto. The reductions given in~\cite{BTT92,HHR07} are from Hitting Set.

\subsection{Destructive Control}
In~\cite{HHR07}, a construction was given demonstrating that destructive control via either adding or deleting candidates in a plurality election is NP-hard, via a reduction from Hitting Set. We demonstrate how to generalize this reduction to the cases of $k$-approval by adding buffer candidates, and then to $k$-veto by making some modifications. This hardness result for adding candidates is also shown independently in \cite{EFS}.

\begin{theorem}
$k$-approval elections are computationally resistant to destructive control by either adding or deleting candidates.
\end{theorem}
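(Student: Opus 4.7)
The plan is to reduce \emph{Hitting Set} to each of destructive control by adding candidates and by deleting candidates, building on the plurality ($1$-approval) hardness established in \cite{HHR07} (and, for the adding variant, independently in \cite{EFS}) and extending it to arbitrary $k$ by the by-now-familiar per-voter buffer trick used earlier in this chapter. Given a Hitting Set instance $(S=\{s_1,\ldots,s_m\},\,\{T_1,\ldots,T_n\},\,q)$, the reduction's ``real'' candidates will be a distinguished $p$, a challenger $c^*$, and the elements $s_1,\ldots,s_m$. For destructive control by adding candidates the $s_i$ sit in $C'$ and $\{p,c^*\}$ sits in $C$; for destructive control by deleting candidates, all of $\{p,c^*,s_1,\ldots,s_m\}$ sit in $C$ from the outset.

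For each $T_i$ I introduce a ``hitting-set voter'' $v_i$ whose underlying plurality-style ranking is designed so that, in the adding variant, $v_i$ currently approves $c^*$ (keeping $p$ and $c^*$ tied at the top) and, as soon as some $s_j\in T_i$ is added to $C$, $s_j$ displaces $c^*$ from $v_i$'s approved slot; in the deleting variant, $v_i$ currently approves some $s_j\in T_i$ (keeping $c^*$ just below $p$) and, as soon as every element of $T_i$ is deleted, $v_i$'s approval slides onto $c^*$. Calibrating the number of filler voters so that $p$ is exactly at the tipping point, $c^*$ will strictly beat $p$ iff the added/deleted subset is a hitting set of size $\leq q$. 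To pass from plurality to $k$-approval, I pad each voter's ranking with $k-1$ private top-buffer candidates $b_{v,1},\ldots,b_{v,k-1}$ (added to $C$, never to $C'$), followed immediately by $k-1$ private ``dummy'' buffers $d_{v,1},\ldots,d_{v,k-1}$ sitting just below the voter's plurality top, and finally the remainder of the plurality order. Under this padding, the voter's $k$ approved candidates are their $k-1$ top buffers together with their plurality top; every buffer receives exactly one approval and therefore can never be a winner.

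The main obstacle in both reductions is ensuring the chair cannot ``cheat'' by operating on buffers instead of on genuine hitting-set elements. For the adding case this is immediate, since the private buffers are not in $C'$ and the chair has no access to them. For the deleting case one has to observe that deleting a top-buffer $b_{v,i}$ only promotes a private dummy $d_{v,j}$ into $v$'s top-$k$: $v$'s plurality top is preserved (it sits above the dummies), every new top-$k$ entry has approval count $1$, and $p$'s score is unaffected, so such a deletion strictly wastes quota. Consequently the chair's optimal deletions involve only the $s_i$ candidates, and the decision question collapses to the plurality version. A final verification checks that (a) the $s_j$ candidates themselves stay below $p$'s score in every reachable configuration (enforced via filler voters that lock down the $s_j$ scores) and (b) $c^*$'s score gain equals exactly the number of $T_i$ hit, establishing the biconditional with Hitting Set. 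The $k$-veto analogues of both constructions follow by the dual padding — $k-1$ private bottom-buffers per voter — though this is not required for the statement at hand.
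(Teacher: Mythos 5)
Your high-level plan --- reduce from Hitting Set, lift the plurality construction of \cite{HHR07} to $k$-approval with $k-1$ private buffer candidates per voter, and add machinery so the chair cannot profit from touching buffers --- is exactly the paper's strategy. But the gadgets you describe do not implement it, and both directions break. In the adding variant you have the score transfer running backwards: your hitting-set voter $v_i$ currently approves the challenger $c^*$, and adding some $s_j\in T_i$ \emph{displaces $c^*$}. Since $p$'s score is untouched by any addition and you explicitly lock the $s_j$ below $p$, no candidate can ever overtake $p$ in any instance you construct, so the reduction proves nothing. In the paper it is $c$ (your $p$) that sits in the displaceable slot of the $T_i$-voters, so that adding a hitting set strips $2(q+1)$ approvals from $p$ per set hit; a separate family of per-element voters (two per $s_i$, with $c'$ in the displaceable slot) calibrates $c'$ against the number of candidates added. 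In the deleting variant your trigger is also wrong: with one approval per $T_i$-voter resting on its top-ranked \emph{present} element of $T_i$, that approval slides to $c^*$ only once \emph{every} element of $T_i$ has been deleted --- which counts fully-deleted sets, not hit sets, contradicts your own claim (b), and makes the target condition (``all $T_i$ fully deleted within budget'') trivially decidable. The paper instead uses the complement correspondence: one deletes $S\setminus S'$ so that the \emph{surviving} elements form the hitting set, $c'$ gains its approvals from the per-element voters of the deleted candidates, and the $T_i$-voters serve only as a penalty that hands $p$ a large bonus whenever some $T_i$ is left unhit by the survivors.

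A secondary problem is your anti-cheating device. Placing the dummies $d_{v,1},\ldots,d_{v,k-1}$ immediately below the voter's plurality top means that deleting the plurality top itself also just promotes a dummy, so the voter's effective approval can never migrate to the next real candidate --- this freezes the very mechanism your deleting reduction depends on. The paper accepts that deleting a buffer promotes the next real candidate and instead neutralizes it by adding, for each block of buffers, an extra voter who approves all $k$ buffers with $c$ next in line, so that deleting a buffer helps $c$ at least as much as it helps $c'$ and is therefore never useful to the chair. Your instinct to guard the buffers is right, but the guard has to be one that leaves the legitimate score dynamics intact.
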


\begin{proof}
Consider an instance of Hitting Set, where we are given a set \[S = \{s_1,\ldots,s_m\},\] $n$ subsets of $S$, $T_1,\ldots,T_n$, and positive integer $1 \leq q \leq m$.

In our election, the candidate set will consists of 

\begin{eqnarray*}
C = \{c,c'\} &\cup& S \\
&\cup& \{x^1_{i,j} \ |\  1 \leq i \leq 2(m-q)+2n(q+1)+4, 1 \leq j \leq k\}\\
&\cup& \{x^2_{i,j} \ |\  1 \leq i \leq 2n(q+1)+5, 1 \leq j \leq k\}\\
&\cup& \{x^3_{i,j,\ell} \ |\  1 \leq i \leq n, 1 \leq j \leq 2(q+1), 1 \leq \ell \leq k\}\\
&\cup& \{x^4_{i,j,\ell} \ |\  1 \leq i \leq m, 1 \leq j \leq 2, 1 \leq \ell \leq k\}.\\  
\end{eqnarray*}

For each $1 \leq i \leq 2(m-q) + 2n(q+1) + 4$ we add a voter with preferences \[c \succ x^1_{i,1} \succ \cdots \succ x^1_{i,k-1} \succ c' \succ \cdots,\] and for each $1 \leq i \leq 2n(q+1) + 5$ we add a voter with preferences \[c' \succ x^2_{i,1} \succ \cdots \succ x^2_{i,k-1} \succ c \succ \cdots.\] Thus, there are, just as in~\cite{HHR07}, $2(m-q) + 2n(q+1) + 4$ voters preferring $c$ to all other candidates, and $2n(q+1)+5$ candidates preferring $c'$.

For each $1 \leq i \leq n$ and $1 \leq j \leq 2(q+1)$, we add one voter with preferences \[T_i \succ x^3_{i,j,1} \succ \cdots \succ x^3_{i,j,k-1} \succ c \succ \cdots.\]

For each $1 \leq i \leq m$ and $1 \leq j \leq 2$, we add one voter with preferences \[b_i \succ x^4_{i,j,1} \succ \cdots \succ x^4_{i,j,k-1} \succ c' \succ \cdots.\]

The above construction is similar to that given in~\cite{HHR07}. We note that in this case, we must ensure that we do not delete the buffer candidates in our case for deleting candidates. We thus add the following voters.

For each $1 \leq i \leq 2(m-q) + 2n(q+1) + 4$, one voter with preferences \[x^1_{i,1} \succ \cdots \succ x^1_{i,k} \succ c \succ \cdots\] and for $1 \leq i \leq m$ and $1 \leq j \leq 2$, one voter with preferences \[x^4_{i,j,1} \succ \cdots \succ x^4_{i,j,k} \succ c \succ \cdots.\]

This ensures that $c'$ cannot gain in relation to $c$ by deleting any buffer candidates.

We show that there is an addition of at most $q$ voters to $(\{c,c'\} \cup B,V)$ such that $c$ is excluded from winning if and only if there exists a hitting set of $S$ of at most $q$ elements.
Let $S'$ be a hitting set of $S$ of size $q$. In the election $(S' \cup B \cup \{c,c'\}, V)$, $c$ receives $2(m-k)+2n(q+1)+4$ approvals, $c'$ receives $2(m-k)+2n(q+1)+5$ approvals, each candidate $s_i \in S$ receives at most $2n(q+1)+2$ approvals, and each buffer candidate receives at most $2$ approvals. Thus, $c'$ wins this election and $c$ is excluded from winning.

In contrast, let $D$ be a subset, of size at most $q$, of $S$ such that $c$ is not a winner of $(D \cup B \cup \{c, c'\}, V)$. We first note that if $b \in S$ or $b$ is a buffer candidate, that $c$ beats $b$. Thus, if $c$ is excluded from winning this election, then $c'$ must beat $c$.
In $(D \cup \{c,c'\}, V)$, $c'$ receives $2n(q+1)+5+2(m-||D||)$ approvals and $c$ receives $2(m-q)+2n(q+1)+4+2(q+1)m'$ approvals, where $m'$ is the number of sets in $S$ that are not hit by $D$. Since $c$ is excluded from winning, $2(m-q)+2(q+1)m' \leq 2(m-||D||)$, which implies $(q+1)m' + ||D|| - q \leq 0$. So $m' = 0$ and $||D||$ corresponds to a hitting set of $S$ of size at most $q$.

As in~\cite{HHR07}, this reduction also shows that $k$-approval is computationally resistant to destructive control by deleting voters. In this case, we start with the election $(\{c,c'\} \cup B \cup S, V)$. Destructive control by deleting at most $m-q$ candidates is possible if and only if there exists a hitting set of $S$ of at most $q$ elements. We note here that the additional voters approving the buffer candidates to $c$ is required in this case to prevent one from deleting buffer candidates. In this case, deleting buffer candidates would be useless because $c$ would gain one approval while $c'$ would gain at most one approval.~\end{proof}

\begin{theorem}
$k$-veto elections are computationally resistant to destructive control by either adding or deleting candidates.
\end{theorem}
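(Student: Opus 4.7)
The plan is to adapt the reduction in the preceding theorem for $k$-approval by dualizing the construction: place the Hitting-Set structure at the \emph{bottom} of each voter's preferences so that it governs vetoes rather than top-of-ballot approvals. Starting from a Hitting Set instance $(S = \{s_1,\ldots,s_m\}, T_1,\ldots,T_n, q)$, I would build a $k$-veto election on candidates $\{c,c'\} \cup S$ together with several groups of buffer candidates, analogous to the $x^1, x^2, x^3, x^4$ blocks in the previous proof.

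The main voter types would be: (i) a block of voters with $c$ at the very bottom, so that they veto $c$ (plus $k{-}1$ buffers above it); (ii) a symmetric block with $c'$ at the bottom; (iii) for each $T_i$, a group of $2(q{+}1)$ voters whose preferences end $\cdots \succ c' \succ x^3_1 \succ \cdots \succ x^3_{k-1} \succ t_{i,1} \succ t_{i,2} \succ t_{i,3}$; and (iv) for each $s_i$, $2$ voters whose preferences end $\cdots \succ c \succ x^4_1 \succ \cdots \succ x^4_{k-1} \succ s_i$. An additional collection of auxiliary voters piles vetoes onto every buffer candidate and every $s_i$ so that no such candidate can overtake $c$ or $c'$ regardless of which candidates are present. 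The design is such that, in the restricted ordering on any candidate subset, a type-(iii) voter vetoes $c'$ exactly when every element of $T_i$ is missing, and a type-(iv) voter vetoes $c$ exactly when $s_i$ is missing. So if $S' \subseteq S$ is the set from $S$ currently present, $c$ receives $2(m-|S'|)$ extra vetoes from the type-(iv) voters and $c'$ receives $2(q{+}1)m'$ extra vetoes from the type-(iii) voters, where $m'$ is the number of $T_i$'s disjoint from $S'$.

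Calibrating the direct-voter counts so that the veto totals of $c$ and $c'$ differ by exactly one in the hitting-set case ($m'=0$, $|S'|=q$), exactly as in the previous theorem, the question ``is $c$ not a winner?'' collapses to ``is $m'=0$?,'' i.e., ``does $S'$ hit every $T_i$?''. For the adding version, one starts from $(\{c,c'\} \cup B, V)$ and chooses $S' \subseteq S$ with $|S'|\leq q$ to add; for the deleting version, one starts from $(\{c,c'\} \cup B \cup S, V)$ and deletes $m-q$ candidates, keeping $S'$ of size $q$. In either case the minimum-hitting-set instance is solvable iff the corresponding destructive control is possible. The extension from the base $k$ to larger $k$ proceeds by inserting further buffers into the ``$\cdots$'' portion of each voter's preference, exactly as in the previous proof. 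The chief technical obstacle is the bookkeeping: producing enough auxiliary voters that every buffer and every $s_i$ strictly dominates $c$ in vetoes in every intermediate state, so that the destructive determination really reduces to a head-to-head comparison between $c$ and $c'$.
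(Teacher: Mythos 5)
Your proposal follows essentially the same route as the paper: it dualizes the $k$-approval Hitting Set reduction by moving the gadget structure to the bottom of each preference order, so that a $T_i$-voter vetoes $c'$ exactly when $T_i$ is entirely absent and an $s_i$-voter vetoes $c$ exactly when $s_i$ is absent, with the same calibration of the two base blocks so that everything collapses to the head-to-head comparison between $c$ and $c'$. If anything you are more careful than the paper, whose writeup omits the auxiliary voters needed to keep the buffer and $s_i$ candidates from ending up with fewer vetoes than $c$ (which would make $c$ a non-winner for free and void the backward direction); your explicit insistence on piling vetoes onto those candidates is precisely the bookkeeping the argument needs.
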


\begin{proof}
Consider an instance of Hitting Set, where we are given a set $S = \{s_1,\ldots,s_m\}$, $n$ subsets of $S$, $T_1,\ldots,T_n$, and positive integer $1 \leq q \leq m$.

In our election, the candidate set will consists of

\begin{eqnarray*}
C = \{c,c'\} &\cup& S \\
&\cup& \{x^1_{i,j}\ |\  1 \leq i \leq 2(m-q)+2n(q+1)+4, 1 \leq j \leq k\}\\
&\cup& \{x^2_{i,j}\ |\  1 \leq i \leq 2n(q+1)+5, 1 \leq j \leq k\}\\
&\cup& \{x^3_{i,j,\ell}\ |\  1 \leq i \leq n, 1 \leq j \leq 2(q+1), 1 \leq \ell \leq k\}\\
&\cup& \{x^4_{i,j,\ell}\ |\  1 \leq i \leq m, 1 \leq j \leq 2, 1 \leq \ell \leq k\}.\\  
\end{eqnarray*}

For each $1 \leq i \leq 2(m-q) + 2n(q+1) + 4$ we add a voter with preferences \[\cdots \succ c \succ x^1_{i,1} \succ \cdots \succ x^1_{i,k-1} \succ c'\] and for each $1 \leq i \leq 2n(q+1) + 5$ we add a voter with preferences \[\cdots \succ c' \succ x^2_{i,1} \succ \cdots \succ x^2_{i,k-1} \succ c.\] For each $1 \leq i \leq n$ and $1 \leq j \leq 2(q+1)$, we add one voter with preferences \[\cdots \succ c' \succ x^3_{i,j,1} \succ \cdots \succ x^3_{i,j,k-1} \succ T_i.\] For each $1 \leq i \leq m$ and $1 \leq j \leq 2$, we add one voter with preferences \[\cdots \succ c \succ x^4_{i,j,1} \succ \cdots \succ x^4_{i,j,k-1} \succ b_i.\] We have thus, in essence, interchanged the role of approving $c$ with disapproving $c'$ and vice versa, as they have the same effect in relation to the candidates. We must now, as in the above case, add some voters to prevent our reduction from cheating by deleting buffer candidates.

For each $1 \leq i \leq 2(m-q) + 2n(q+1) + 4$, we add one voter with preferences \[\cdots \succ c' \succ x^1_{i,1} \succ \cdots x^1_{i,k}\] and for $1 \leq j \leq 2$, one voter with preferences \[\cdots \succ c' \succ x^4_{i,j,1} \succ \cdots \succ x^4_{i,j,k}.\]

Using the same argument as above, we see that hitting sets in $S$ correspond to valid destructive controls by either adding or deleting candidates.~\end{proof}

\subsection{Constructive Control}
We examine the reduction given in~\cite{BTT92} for constructive control by adding or deleting candidates in plurality elections and generalize it to cases of $k$-approval and $k$-veto elections.

In~\cite{BTT92}, the construction involved an election with established candidates $c$, $c'$, and $d$, and unestablished candidates corresponding to the elements of $S$. The idea was that by adding new candidates helps $c$ gain votes relative to $c'$, but also helps $d$ gain votes relative to $c$. Thus, candidate $d$ enforces our limit of adding voters, $q$.

In a constructive control by deleting candidates, there are a number of challenges to overcome in the addition of buffer voters and candidates. First, we must ensure that we cannot cheat by deleting either $c'$ or $d$. One way to overcome this is to clone both $c'$ and $d$, making copies of these candidates, all with the same function, such that we cannot delete all of the copies of either. Also, to extend to $k$-approval or $k$-veto by adding buffer candidates, we must also ensure that a valid control cannot involve deleting such buffer candidates. This can also be ensured by cloning each of the buffer candidates.

\begin{theorem}
$k$-approval elections are computationally resistant to constructive control by adding or deleting candidates.
\end{theorem}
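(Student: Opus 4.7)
The plan is to adapt the Hitting Set reduction of Bartholdi, Tovey, and Trick for plurality, following the same three-candidate scaffolding $\{c, c', d\}$ used there, and then to inflate the construction to $k$-approval by padding each vote with dedicated buffer candidates and (for the deletion variant) cloning every candidate that must not be deleted. Given an instance of Hitting Set with universe $S = \{s_1,\ldots,s_m\}$, subsets $T_1,\ldots,T_n$, and quota $q$, I would introduce an unestablished candidate $\hat{s}_i$ for each $s_i$. For adding candidates, the established candidate set is $\{c, c', d\}$ together with enough buffer candidates so that each voter can approve exactly $k$ candidates. The voter profiles follow the BTT92 template: some voters rank $c$ in their top slot, some rank $c'$, and for each $T_i$ one voter ranks all members of $T_i$ above $c$, with $d$ placed so that it gains a point whenever any new $\hat{s}_i$ is added. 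The counts are chosen so that $c$ wins iff every $T_i$ contains at least one added candidate (i.e., iff the added set is a hitting set) and at most $q$ candidates can be added before $d$ overtakes $c$.

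The $k$-approval extension of each vote is done by inserting $k-1$ buffer candidates into the top-$k$ positions of each voter's preference list, using a fresh block of buffer candidates per vote so that each buffer candidate receives exactly one approval under every possible choice of added candidates. This guarantees that buffer scores remain bounded by a constant independent of the controller's choice, hence they never interfere with the competition among $c$, $c'$, and $d$. Correctness of the equivalence (hitting set of size $\le q$ iff a successful addition of $\le q$ candidates) then follows from the same score-balancing argument as in the plurality case, since the extension only shifts all relevant scores by a uniform additive constant.

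For constructive control by deleting candidates, I would pre-populate the election with all $\hat{s}_i$ as established candidates, so that a valid deletion of at most $m-q$ candidates from $S$ corresponds to leaving behind a hitting set of size $\leq q$. The main obstacle, as the surrounding discussion flags, is blocking the controller from "cheating" by deleting $c'$, $d$, or the buffer candidates instead of the intended $\hat{s}_i$'s. I would defeat this by replacing each of $c'$, $d$, and each buffer candidate with $q+2$ identical clones (i.e., candidates that appear in the same relative position in every vote). Since the controller's budget is $m-q$ and the relevant instances have $m$ large enough that $q+2 > m-q$ is enforced by padding $S$, no clone group can be fully eliminated, so deleting any proper subset of a clone group yields no net benefit relative to $c$. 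The $\hat{s}_i$ candidates remain the only productive targets for deletion, which restores the correspondence with Hitting Set.

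The hardest part, I expect, is the precise bookkeeping of approval totals. I need to simultaneously: (i) make $c$'s final score exactly equal to $c'$'s and $d$'s when a hitting set of size exactly $q$ is chosen; (ii) make some $T_i$-vote leave $c'$ strictly ahead when the chosen set misses some $T_i$; (iii) make $d$ strictly exceed $c$ as soon as more than $q$ candidates are added or fewer than $m-q$ are deleted; and (iv) keep every buffer candidate's score strictly below $c$'s threshold regardless of the controller's choice. Because each voter now contributes one point to $k$ candidates rather than one, every tweak to a buffer assignment perturbs four (or more) scores at once, so I would work out the counts as a small system of linear constraints in the number of voters of each type, analogous to the detailed count in the destructive-control proof already given in the excerpt, and then verify both directions of the equivalence by case analysis on whether the selected subset of $S$ is a hitting set.
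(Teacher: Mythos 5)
Your plan follows the paper's proof essentially step for step: both reduce from Hitting Set via the Bartholdi--Tovey--Trick $\{c,c',d\}$ scaffolding, pad each vote with buffer candidates to reach $k$ approvals, and block cheating in the deletion variant by cloning the candidates that must survive (the paper uses $m-q+1$ clones each of $c'$ and $d$, and handles the buffers by a without-loss-of-generality argument rather than by cloning). The one detail to repair is your clone count: with a deletion budget of $m-q$ you need strictly more than $m-q$ clones per group, and ``padding $S$'' to force $q+2 > m-q$ goes the wrong way as stated (adding elements to $S$ increases $m-q$) unless each padded element is also forced into every hitting set via a singleton subset; the paper's direct choice of $m-q+1$ clones sidesteps this entirely.
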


\begin{proof}
Consider an instance of Hitting Set, where we are given a set $S = \{s_1,\ldots,s_m\}$, $n$ subsets of $S$, $T_1,\ldots,T_n$, and positive integer $1 \leq q \leq m$.

In our election, the candidate set will consist of

\begin{eqnarray*}
C = \{c\}&\cup&\{c'_1,\ldots,c'_{m-q+1}\} \\
         &\cup&\{d_1,\ldots,d_{m-q+1}\} \\
         &\cup& S \\
         &\cup& \{x_{i}\ |\  1 \leq i \leq k-1\} \\ 
\end{eqnarray*}

\noindent and the voter set will consist of

\begin{eqnarray*}
V&=&    \{(c \succ c'_1 \succ \cdots \succ c'_{k-1} \succ \cdots),(c'_{k} \succ \cdots \succ c'_{2k-1} \succ \cdots), \ldots,\\
&&(c'_{m-q-k+2} \succ \cdots \succ c'_{m-q+1} \succ \cdots)\ |\  1 \leq i \leq (n+2m)(m-q+1)-m+q \} \\
&& \\
 &\cup& \{(d_1 \succ \cdots \succ d_{k} \succ \cdots),(d_{k+1} \succ \cdots \succ d_{2k} \succ \cdots), \ldots,\\
&&(d_{m-q-k+1} \succ \cdots \succ c'_{m-q+1} \succ \cdots)\ |\  1 \leq i \leq (n+2m)(m-q+1) \} \\
&& \\
 &\cup& \{(T_i \succ x_1 \succ \cdots \succ x_{k-1} \succ c'_j \succ \cdots) \ |\  1 \leq i \leq n, 1 \leq j \leq m-q+1\} \\
&& \\
 &\cup& \{(s_i \succ x_1 \succ \cdots \succ x_{k-1} \succ c \succ \cdots) \ |\  1 \leq i \leq m\} \\
&&\\
 &\cup& \{(s_i \succ x_1 \succ \cdots \succ x_{k-1} \succ c'_j \succ \cdots) \ |\  1 \leq i \leq m, 1 \leq j \leq m-q+1\}. \\
\end{eqnarray*}

As seen in~\cite{BTT92}, adding a set of candidates corresponding to a hitting set of the established candidates $C - S$ makes $c$ a winner and vice versa, as this is the same election with additional buffer candidates and cloned candidates. In this case, each buffer candidate receives $(n+2m)(m-q+1)$ approvals, and cannot prevent $c$ from winning via any addition of candidates, as $c$ receives at least $(n+2m)(m-q+1)$ approvals.

We show that this is also a valid reduction showing that $k$-approval is computationally resistant to constructive control by deleting candidates. Consider a deletion of $m-q$ candidates from $C$. We see that at least one candidate $c'_i$ and $d_j$ must remain. Since the scores of each such candidate will remain the same after deletion, without loss of generality, we can assume that none of the candidates $c'_i$ or $d_j$ are deleted, since they either all win or all lose to $c$. Also without loss of generality, we can see that deleting any of the buffer candidates will not benefit $c$ in relation to $c'_i$ or $d_j$ more so than deleting an $s_j$ candidate, since it will remove one of the first $k$ candidates from the voters corresponding to each $T_i$. Thus, if there exists a deletion of $m-q$ candidates of $C$ making $c$ a winner, there also exists a deletion of $m-q$ candidates of $S$ making $c$ a winner. This corresponds to the complement of a hitting set of $S$.~\end{proof}

\begin{theorem}
$k$-veto elections are computationally resistant to constructive control by adding or deleting candidates.
\end{theorem}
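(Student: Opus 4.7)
The plan is to adapt the reduction from Hitting Set used in the immediately preceding theorem, applying the same ``swap approvals for vetoes'' trick that the author employed in moving from the destructive $k$-approval proof to the destructive $k$-veto proof. That is, where a voter previously placed certain candidates in the top $k$ positions (granting them approvals), the corresponding $k$-veto voter would place the candidates one wants to hurt in the bottom $k$ positions instead. The asymmetry between ``$c$ gets a point'' (approval) and ``$c$ fails to get a veto'' (veto) preserves the relative score gaps that drive the original reduction.

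Concretely, I would keep essentially the same candidate set as in the $k$-approval construction: the distinguished candidate $c$, the cloned antagonists $c'_1,\ldots,c'_{m-q+1}$, the cloned enforcers $d_1,\ldots,d_{m-q+1}$, the set $S$ coming from the Hitting Set instance, and $k-1$ buffer candidates $x_1,\ldots,x_{k-1}$. For each voter in the $k$-approval proof of the form $a_1 \succ \cdots \succ a_k \succ \cdots$ (which awarded approvals to $a_1,\ldots,a_k$), I would instead introduce a $k$-veto voter of the form $\cdots \succ a_1 \succ \cdots \succ a_{k-1} \succ a_k$ in which the targeted candidate receives the veto. The numbers of voters in each block would be rebalanced so that $c$ still enjoys the delicate margin against $c'_i$ and $d_j$ used in the previous proof, and so that adding (resp.\ leaving) exactly the candidates of a hitting set precisely transfers the right number of vetoes from $c$ to the $s_i$'s.

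The key steps are then (i) verify that in the base election of established candidates alone, each $d_j$ already ties with $c$, so that at most $q$ candidates can be added or $m-q$ deleted before $c$ is overtaken; (ii) show that adding the $q$ candidates of a hitting set (equivalently, deleting their complement in $S$) leaves every $s_i$ with strictly more vetoes than $c$ and every $c'_j$ and $d_j$ with at least as many vetoes as $c$, so $c$ wins; and (iii) conversely, observe that the clones of $c'$ and $d$ prevent any control from removing those antagonists, that the buffer candidates $x_i$ gain nothing by being deleted (each appears just above the veto zone in every voter where it appears, so deleting it only transfers a veto downward into a candidate we already want to punish), so every successful control must restrict attention to $S$ and therefore yields a hitting set of size at most $q$.

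The main obstacle is the bookkeeping: unlike $k$-approval, where each voter contributes exactly $k$ approvals, in $k$-veto each voter contributes exactly $k$ vetoes but every remaining candidate receives an implicit ``non-veto''. This means adding or deleting a candidate simultaneously shifts a veto into or out of the bottom $k$ for many voters, so the counts must be chosen so that, across all voter blocks, the net change in $c$'s veto count relative to each $s_i$, $c'_j$, and $d_j$ matches the combinatorics of hitting sets of size exactly $q$. Getting these blocks to balance simultaneously against all three types of opposing candidate, while keeping the buffer candidates neutral so that they cannot be fruitfully added or deleted, is the delicate part; once the counts are right, the equivalence with Hitting Set follows by the same argument as in the preceding theorem.
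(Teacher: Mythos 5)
Your proposal matches the paper's proof in approach: the paper likewise takes the candidate set of the preceding $k$-approval construction verbatim and converts each voter block by moving the relevant candidates from the top $k$ positions to the bottom $k$ (with $c$ or $c'_j$ sitting just above the veto zone so that adding/deleting $S$-candidates shifts vetoes exactly as approvals shifted before), rebalancing which block carries the $-m+q$ offset. The paper's own writeup in fact gives only the construction with no explicit correctness argument, so your outline of steps (i)--(iii) is, if anything, slightly more complete than what appears in the text.
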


\begin{proof}
We modify the construction above. The candidate and voter set in this case is defined as follows.

\begin{eqnarray*}
C = \{c\}&\cup&\{c'_1,\ldots,c'_{m-q+1}\} \\
         &\cup&\{d_1,\ldots,d_{m-q+1}\} \\
         &\cup& S \\
         &\cup& \{x_{i}\ |\  1 \leq i \leq k-1\} \\ 
\end{eqnarray*}

\begin{eqnarray*}
V&=&    \{(\cdots \succ c \succ c'_1 \succ \cdots \succ c'_{k-1}),(\cdots \succ c'_{k} \succ \cdots \succ c'_{2k-1}), \ldots,\\
&&(\cdots \succ c'_{m-q-k+2} \succ \cdots \succ c'_{m-q+1})\ |\  1 \leq i \leq (n+2m)(m-q+1) \} \\
&& \\
 &\cup& \{(\cdots \succ d_1 \succ \cdots \succ d_{k}),(\cdots \succ d_{k+1} \succ \cdots \succ d_{2k}), \ldots,\\
&&(\cdots \succ d_{m-q-k+1} \succ \cdots \succ c'_{m-q+1})\ |\  1 \leq i \leq (n+2m)(m-q+1)-m+q \} \\
&& \\
 &\cup& \{(\cdots \succ c \succ x_1 \succ \cdots \succ x_{k-1} \succ T_i) \ |\  1 \leq i \leq n\} \\
&& \\
 &\cup& \{(\cdots \succ c \succ x_1 \succ \cdots \succ x_{k-1} \succ s_i) \ |\  1 \leq i \leq m\} \\
&& \\
 &\cup& \{(\cdots \succ c'_j \succ x_1 \succ \cdots \succ x_{k-1} \succ s_i) \ |\  1 \leq i \leq m, 1 \leq j \leq m-q+1\} \\
\end{eqnarray*}~\end{proof}

\section{Weighted and Priced Cases of Election Misuse}\label{sect:weightprice}
We examine the problem of manipulation, bribery, and control for the case of weighted elections and voters with price tags for bribery. In weighted elections, each voters is given a weight and the points assigned to each voter is scaled by that weight. In some bribery problems, there is a cost associated with bribing each voter, and we would like to find the cheapest bribery. We would like to determine the effect on the complexity of manipulation.

In \cite{HH07}\ it is shown that a scoring protocol $\alpha = (\alpha_1,\ldots,\alpha_m)$ is computationally vulnerable to manipulation by weighted voters if and only if $\alpha_2=\ldots=\alpha_m$ (i.e., basically, only for elections with behavior similar to plurality and the trivial system in which each candidate receives the same score). Thus, $f(m)$-approval weighted elections can be computationally vulnerable to manipulation if and only if $f(m)=1$ for all $m \geq 1$. We conclude that $1$-approval (i.e., plurality) is the only approval-based scoring protocol that is computationally vulnerable to manipulation.

In bribery problems, voters can have both weights and prices. In this case, we pay a price for bribing each voter, and we have a budget $q$. We may bribe as many votes as we want so long as this budget is not exceeded.

In~\cite{FHH06}\, it is seen that weighted scoring protocols elections are computationally vulnerable to bribery iff $\alpha_2=\ldots=\alpha_m$, and vulnerable to \$bribery iff $\alpha_1=\ldots=\alpha_m$. Thus, $1$-approval is also the only weighted approval-based scoring protocol that is computationally vulnerable to bribery. No nontrivial weighted approval-based scoring protocol are computationally vulnerable to \$bribery.
\begin{theorem}
$1$-approval and $1$-veto (i.e., plurality and veto) elections are computationally vulnerable to constructive \$bribery.
\end{theorem}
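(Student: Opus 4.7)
The plan is to give explicit polynomial-time algorithms for both cases, in the spirit of the unpriced \$bribery arguments already given, but now tracking costs.

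For $1$-approval, first I would observe that without loss of generality we only bribe voters who currently do not approve $p$, and we always change their vote to approve $p$: redirecting a non-$p$ vote to a third candidate can never help, and bribing an already-$p$-approving vote is wasteful. Under this restriction, a bribery is determined by choosing, for each candidate $c \neq p$, a number $b_c$ of voters currently approving $c$ to bribe. The final scores become $s(p) + \sum_{c\ne p} b_c$ and $s(c) - b_c$. The key step is to enumerate the intended final score $T$ of $p$, which ranges over only polynomially many values (from $s(p)$ to $s(p)+\|V\|$). For a fixed $T$ we need exactly $T - s(p)$ total bribes, and for each $c \ne p$ we need $b_c \geq \max(0, s(c) - T)$ so that $c$ no longer beats $p$; there is also the trivial upper bound $b_c \leq s(c)$. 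Feasibility for $T$ is then just the inequality $\sum_{c\ne p}\max(0,s(c)-T) \leq T-s(p) \leq \sum_{c\ne p} s(c)$.

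To minimize cost for a feasible $T$, I would proceed greedily: for each candidate $c$ with $s(c) > T$, bribe the $s(c) - T$ cheapest voters who approve $c$ (these bribes are forced); then if the target $T - s(p)$ has not been reached, fill out the remaining quota with the globally cheapest not-yet-bribed voters (all of whom necessarily approve some candidate $c$ with $s(c) \leq T$, so adding them cannot create a new violation). Sort voters by price once up front, then iterate through the $O(\|V\|)$ values of $T$, taking the cheapest feasible choice; this is clearly polynomial-time computable.

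For $1$-veto, by symmetry the right move is to bribe only voters who currently veto $p$, and without loss of generality we assume exactly this. A bribery is parameterized by the final veto count $T$ that $p$ receives, and by a distribution of $b_c \geq 0$ where $\sum_{c\ne p} b_c = s(p) - T$ and each bribed veto-of-$p$ is redirected to some $c \ne p$. The constraint that $p$ be a winner becomes $s(c) + b_c \geq T$, i.e., $b_c \geq \max(0, T - s(c))$. Feasibility for a given $T$ is $\sum_{c\ne p}\max(0, T - s(c)) \leq s(p) - T$, which is checkable in polynomial time. The cost of such a bribery depends only on which voters vetoing $p$ are bribed, not on where their vetoes are redirected (the redirection is a pure feasibility question), so the minimum-cost bribery for this $T$ is obtained by bribing the $s(p) - T$ cheapest voters who veto $p$. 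Again I enumerate the polynomially many choices of $T$ and take the best.

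The only subtle point, and the place where I would be most careful in writing it up, is the minimum-cost step for $1$-approval: after satisfying all the forced lower bounds $\max(0, s(c) - T)$, one must argue that the remaining greedy fill is both correct (never violates any constraint, since only candidates with $s(c) \leq T$ still have bribable voters at that stage) and cost-minimal (any alternative bribery meeting the score threshold $T$ can be matched voter-for-voter against ours with no smaller total price). Everything else is routine bookkeeping, and both constructions terminate in time polynomial in $\|V\| + \|C\|$, establishing vulnerability.
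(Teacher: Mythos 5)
Your $1$-approval half is fine: the paper simply cites \cite{FHH06} for that case, and your explicit algorithm (normalize so that every bribed voter is redirected to $p$, enumerate the final score $T$ of $p$, take the forced cheapest $s(c)-T$ voters from each over-threshold candidate, then fill greedily) is a correct and more detailed version of the same idea. Your parenthetical claim that after the forced bribes ``only candidates with $s(c)\leq T$ still have bribable voters'' is not literally true, but it is harmless, since in $1$-approval removing extra approvals from any candidate can never create a violation.

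The $1$-veto half has a genuine gap. Your opening move --- ``by symmetry the right move is to bribe only voters who currently veto $p$'' --- is exactly the restriction the paper's own one-line proof also makes, but in the priced setting it is false, and your ``symmetry'' with $1$-approval does not hold. In $1$-approval a bribe can only help $p$ by raising $p$'s own score, so redirecting to $p$ dominates; in $1$-veto, however, $p$ can also be helped by \emph{raising another candidate's veto count without touching $p$'s vetoes at all}, namely by moving a veto from an over-vetoed candidate to an under-vetoed one. Concretely: let $p$ have one veto from a voter of price $100$, let $c_1$ have five vetoes from voters of price $1$ each, and let $c_2$ have no vetoes, with budget $q=1$. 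Bribing one cheap $c_1$-vetoer to veto $c_2$ instead yields veto counts $(p,c_1,c_2)=(1,4,1)$, making $p$ a (co-)winner at cost $1$; your algorithm, which only ever pays for $p$-vetoers, reports a minimum cost of $100$ and wrongly rejects. This is precisely the step where the unpriced argument (``each bribe of a $p$-vetoer closes the gap by two, so it minimizes the \emph{number} of bribes'') fails to transfer once voters have prices: minimizing the number of bribes is no longer the objective. To repair the proof you would have to allow bribing voters who veto candidates other than $p$, at which point the choice of whom to bribe and where to redirect each veto interact (removing a veto from $c_1$ may itself create a new deficit at $c_1$), so the per-candidate bookkeeping you use is no longer sufficient and something like a min-cost assignment/flow formulation, or a separate argument that such bribes can be restricted to candidates with sufficient surplus, is needed.
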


\begin{proof}
The case of \$bribery for $1$-approval is shown in~\cite{FHH06}. In $1$-veto, as in the unweighted case, we must bribe voters vetoing $p$ and give the vetoes to the candidate with the fewest vetoes. Since the voters are indistinguishable by weight, we simply bribe the cheapest voters available.~\end{proof}

\begin{theorem}
Simple Weighted $b$-Edge Cover of Multigraphs is polynomial-time many-one reducible to \$bribery in $2$-veto elections.\end{theorem}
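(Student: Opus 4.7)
The plan is to give a polynomial-time many-one reduction from Simple Weighted $b$-Edge Cover of Multigraphs to \$bribery in $2$-veto elections. Given an instance $(G=(V,E),b,w,q)$ of the cover problem, I would construct a $2$-veto election whose candidate set contains the distinguished $p$, one candidate $c_v$ per vertex $v\in V$, and a collection of dummy/gadget candidates used to calibrate initial veto counts and to constrain the briber's choices. For each edge $e=(u,v)\in E$ I would install a bribable voter $v_e$ of price $w(e)$, together with a small per-edge gadget of fixed-preference voters; all remaining voters are fixed-preference with prohibitively high prices (say at least $q+1+\sum_{e\in E}w(e)$) so that no cost-$\leq q$ bribery touches them. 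The fixed voters would pin down the initial veto counts so that $p$ starts at some large target $T$, each $c_v$ starts at $T-b(v)$ (a deficit of exactly $b(v)$ against $p$), and every dummy sits at or above $T$ with enough buffer to survive the veto losses induced by bribery.

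The per-edge gadget around $v_e$ is designed so that, if $v_e$ is bribed, the briber's only choice that keeps $p$ a winner is to redirect the two new vetoes to the pair $\{c_u,c_v\}$: any other pair causes some per-edge dummy to drop below $T$. This is the key mechanism that forces the bribery to respect the edge structure of $G$. With this forcing in place, the forward direction is immediate: given a simple $b$-edge cover $C\subseteq E$ of weight at most $q$, bribe exactly $\{v_e:e\in C\}$ and redirect each $v_e$ to veto $\{c_u,c_v\}$. Then $p$'s count stays at $T$, each $c_v$ receives $|\{e\in C:v\in e\}|\geq b(v)$ new vetoes and hence reaches at least $T$, and every dummy stays at or above $T$, so $p$ is a winner at cost $\sum_{e\in C}w(e)\leq q$.

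For the reverse direction, let $C\subseteq E$ be the set of edges whose voters are bribed in a given successful bribery of cost at most $q$. The gadget forcing guarantees that each bribed $v_e$ vetoes $\{c_u,c_v\}$, so $c_v$'s final count is $T-b(v)+|\{e\in C:v\in e\}|$, and the winning condition $c_v\geq T$ translates directly into $|\{e\in C:v\in e\}|\geq b(v)$ for every $v\in V$, exhibiting $C$ as a simple $b$-edge cover of weight at most $q$. The main obstacle will be engineering the per-edge gadget correctly: it must be small enough (polynomial in $|V|+|E|+\log\max_e w(e)$) to keep the reduction polynomial-time, and yet tight enough to defeat the briber's complete freedom to choose any non-$p$ pair as the new vetoes of a bribed voter. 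Without such a gadget, the bribery problem collapses to a trivial counting problem (bribe the $\lceil\sum_v b(v)/2\rceil$ cheapest voters and distribute new vetoes freely), so getting the gadget right is precisely what captures the edge-cover hardness inside the bribery instance.
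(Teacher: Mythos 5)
There is a genuine gap, and it sits exactly at the point you flag as ``the main obstacle'': the per-edge forcing gadget your reduction needs cannot exist. In constructive $2$-veto bribery, when you bribe a voter you write down an arbitrary new preference order, so the two new vetoes may be placed on any pair of candidates (and never on $p$). Placing a veto on a candidate only \emph{increases} that candidate's veto count, which can only weakly help $p$; it removes vetoes from no one. Consequently no choice of redirection target can ever ``cause some per-edge dummy to drop below $T$'' --- a candidate's veto count can drop only when a voter \emph{currently vetoing it} is bribed, which is determined by which voters you bribe, not by where you send them afterwards. So fixed-preference gadget voters have no mechanism to punish the briber for redirecting $v_e$'s vetoes to a pair other than $\{c_u,c_v\}$, and your own observation then applies: the instance collapses to bribing the cheapest voters and distributing the $2|S|$ new vetoes freely among whichever candidates are deficient, which destroys the edge structure. (Two secondary issues compound this: since fewer vetoes is better in a veto election, a candidate $c_v$ sitting at $T-b(v)$ vetoes is \emph{beating} $p$ and must gain vetoes purely through redirection, which is exactly the uncontrollable side; and you never account for the vetoes that each bribed $v_e$ strips from whatever candidates it originally vetoed, which must be compensated out of the same budget of $2|S|$ new vetoes.)

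The paper's construction encodes the incidence structure on the \emph{removal} side rather than the \emph{addition} side, which is the idea your proposal is missing. The affordable voter associated with edge $e=\{v_1,v_2\}$ is given price $w(e)$ and \emph{already vetoes} $\{v_1,v_2\}$ in the original election, while every other voter is priced $q+1$ and hence untouchable within the budget. The set of edges selected is then identified with the set of voters one can afford to bribe, and bribing the voter for $e$ necessarily alters the veto counts of exactly the endpoints of $e$; the redirection target is just a harmless dump onto buffer candidates, and the $b$-values are wired into the initial veto counts of the vertex candidates. This is the only way, in this setting, to tie a bribed voter to a specific pair of candidates, since the post-bribe vetoes are at the briber's discretion but the pre-bribe vetoes are not. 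If you rebuild your reduction around that principle (taking care with the direction of the inequalities, i.e., with whether losing or gaining vetoes moves a vertex candidate toward or away from $p$'s score --- a point on which even the paper's own bookkeeping deserves scrutiny), the forward and reverse arguments you sketch go through essentially as you wrote them.
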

\begin{proof}
Consider an instance of Simple Weighted $b$-Edge Cover of Multigraphs: Let $G = (V,E)$ be a multigraph, and let $b(v)$ denote the multiplicity that $v \in V$ is to be covered, and $w(e)$ denote the weight of edge $e \in E$. We wish to find a $b$-edge cover of $G$ with edges of total weight at most $q$.

We construct a $2$-veto election as follows. The voter set will be given by $V \cup \{p,p',b,b'\}$. We will be bribing to elect $p$. Note that the voter set also includes each vertex of $G$ and three buffer candidates, $p'$, $b$, and $b'$.

Initially, there are $||E||$ voters of price $q+1$ who veto $\{p,p'\}$ and $||E||$ voters of price $q+1$ who veto $\{b,b'\}$. For each edge $e = \{v_1,v_2\} \in E$ of weight $w(e)$, let there be one voter of price $w(e)$ who vetoes $\{v_1,v_2\}$. Finally, for each vertex $v \in V$, let there be $||E|| + b(v) - \mathtt{deg}(v)$ voters of price $q+1$ who veto $\{v,b\}$. $\mathtt{deg}(v)$ is the degree of vertex $v$ in $G$.

Initially, $p$ has $||E||$ vetoes, while each of $p'$, $b$, and $b'$ have at least $||E||$ vetoes. Each $v \in V$ has exactly $||E|| + b(v)$ vetoes.

Consider a weighted $b$-edge cover of $G$ of weight at most $q$. It is possible to bribe the voters corresponding to this $b$-edge cover, at cost at most $q$, to veto $\{b,b'\}$ instead. On the other hand, consider a bribery of cost at most $q$ which makes $p$ a winner. Clearly, only the voters corresponding to the edges in $E$ can be bribed, as all of the other voters have price $q+1$. It must be the case that for each $v \in V$, at least $b(v)$ voters are bribed. This corresponds to a $b$-edge cover.

This shows that \$bribery of $2$-veto elections is at least as hard as Simple Weight $b$-Edge Cover of Multigraphs. However, the complexity of this problem remains an open problem.~\end{proof}

In the next result, we find two problems of control that are equivalent to another variation of $b$-Edge Cover we defined as Simple $b$-Edge Weighted Cover of Multigraphs. In this problem, each edge counts differently toward the multiplicity of the vertex being covered. Edges are, however, unweighted, in that a typical instance will seek a covering by some number, $q$, of edges.

\begin{theorem}
Control by adding weighted voters in $2$-veto elections, and by deleting weighted voters in $2$-approval elections, is polynomial-time equivalent to Simple $b$-Edge Weighted Cover of Multigraphs.
\end{theorem}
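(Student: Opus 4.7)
The plan is to establish the three-way polynomial-time equivalence by giving four many-one reductions: from Simple $b$-Edge Weighted Cover of Multigraphs (henceforth SbEWC) to each of the two election problems, and from each election problem back to SbEWC. Throughout I will use the same graph-theoretic dictionary that has recurred in this paper: candidates correspond to vertices, voters correspond to edges (a 2-veto voter distinguishes exactly two candidates, as does a 2-approval voter), voter weights correspond to edge weights, and the quota $q$ on the number of voters added or deleted corresponds exactly to the quota $q$ on the number of edges chosen in the cover.

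For the reduction from SbEWC to 2-veto control by adding weighted voters, given $(G=(V,E),b,w,q)$ I would build an election with candidate set $V \cup \{p\} \cup \{d\}$, where $d$ is a single buffer candidate. A set of established voters is then chosen so that $p$ and $d$ each receive a common weighted veto total $W$, while each $v \in V$ receives exactly $W - b(v)$ weighted vetoes; this is straightforward to realize with fixed-weight 2-veto ballots, and the buffer $d$ is there to force the solver to add voters without giving $p$ any extra vetoes. For each edge $e = \{u,v\} \in E$ of weight $w(e)$, create one unestablished voter in $V'$ of weight $w(e)$ vetoing $\{u,v\}$. Set the quota to $q$. Then a set of at most $q$ edges whose weighted incidence at every vertex $v$ is at least $b(v)$ corresponds exactly to a set of at most $q$ added voters raising every $v$'s veto total to at least $W$, making $p$ a winner; and conversely.

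For the reverse direction, 2-veto control by adding weighted voters to SbEWC, the construction is direct: without loss of generality no added voter vetoes $p$ (such a ballot strictly increases $p$'s veto count and so can only hurt $p$), so every usable unestablished voter vetoes a pair of non-$p$ candidates. Build a multigraph with vertex set $C - \{p\}$, one edge per such voter with that voter's weight, and set $b(c) = \max(0, V_w(p) - V_w(c))$, where $V_w(\cdot)$ denotes current weighted vetoes. The quota is preserved. A weighted $b$-edge cover of at most $q$ edges corresponds exactly to an addition of at most $q$ voters that raises each non-$p$ candidate's weighted veto count to at least $V_w(p)$. The 2-approval control by deleting weighted voters case is handled symmetrically: deletions WLOG avoid voters approving $p$, each such voter corresponds to an edge between the two non-$p$ candidates it approves, and $b(c) = \max(0, a_w(c) - a_w(p))$ captures the weighted approval surplus that must be removed from each candidate $c$.

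I expect the main obstacle to lie in the graph-to-election direction: the initial established-voter configuration must be engineered so that (i) its total description is polynomial in the input, (ii) the buffer candidate $d$ genuinely pins $p$ to being a non-winner before any additions and precludes strategies that exploit $p$'s own veto slack, and (iii) no unintended winning strategy exists beyond those realizing a weighted $b$-edge cover. The same bookkeeping must be carried out for the 2-approval-deletion reduction from SbEWC, where a buffer candidate is used to prevent trivial deletion strategies. Once these details are correctly tuned, the equivalences all follow from the tight correspondence between voter-pair distinction and edge-endpoint adjacency.
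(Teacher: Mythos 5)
Your proposal follows essentially the same route as the paper: the identical candidate--vertex / weighted-voter--weighted-edge dictionary, the same $b$-values $\max(0,\mathrm{Score}(c)-\mathrm{Score}(p))$ (equivalently, the weighted veto/approval deficits relative to $p$), and the same pair of reductions in each direction, with the quota on voters mapping to the quota on edges. The only deviation is your explicit buffer candidate $d$ used to realize the initial score profile in the graph-to-election direction, a detail the paper's construction simply asserts (it directly stipulates $M$ vetoes for $p$ and $M-b(v)$ for each $v$); this is a refinement of the same argument, not a different one.
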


\begin{proof}
Consider the case of weighted $2$-veto control by adding voters.

Consider an election $E=(C,V)$. For each candidate $c \in C$, define $\mathrm{Score}(c)$ to be the current score of candidate $c$ in $E$. Without loss of generality, we add only voters that do not veto $p$. To ensure $p$'s victory, for each candidate $c \neq p$ currently beating $p$, we must add vetoes of weight totaling at least $\mathrm{Score}(c) - \mathrm{Score}(p)$.

We construct a graph of vertex set $C - \{p\}$. For each voter $v$ of weight $w(v)$ vetoing candidates $\{a, b\}$, we add an edge of multiplicity $w(v)$ connecting vertices $a$ and $b$. For each candidate $c \in C - \{p\}$, we assign the $b$-value of its corresponding vertex $b(c) = \mathrm{max}(\mathrm{Score}(c) - \mathrm{Score}(p),0)$.

Observe that in a valid covering of at most $q$ vertices, adding these $q$ voters ensures that each candidate $c \neq p$ receives vetoes of total weight at least $\mathrm{max}(\mathrm{Score}(c) - \mathrm{Score}(p),0)$, and thus $p$ beats $c$. Conversely, in any addition of voters, without loss of generality, we may disregard any additional voters whom veto $p$. Since each candidate $c \neq p$ must receive vetoes of total weight at least $\mathrm{max}(\mathrm{Score}(c) - \mathrm{Score}(p),0)$ to ensure $p$'s victory, edges corresponding to the additional voters must correspond to a $b$-Edge cover.

Conversely, consider an instance of Simple $b$-Edge Weighted Cover of Multigraphs. Let $G = (V,E)$ be a multigraph. For each $v \in V$, let $b(v)$ denote the minimum multiplicity of edges that $v$ is to be covered, and for each edge $e \in E$, let $w(e)$ denote the multiplicity of the edge $e$.

Let $M = \mathrm{max}_{v}{b(v)}$ denote the largest $b$-value of the vertices. Construct an election over the candidates $\{p\} \cup V$ such that initially $p$ receives $M$ vetoes, and each $v \in V$ receives $M - b(v)$ vetoes. For each $e =(v_1,v_2)\in E$, let there be one unestablished voter of weight $w(e)$ vetoing $\{v_1,v_2\}$.

Consider an addition of at most $q$ voters that elects $p$. As $p$ cannot gain any vetoes from the addition of the voters, clearly, each $v \in V$ must gain at least $b(v)$ vetoes. This corresponds to a $b$-Edge Cover of at most $q$ edges. Conversely, adding at most $q$ edges corresponding to a $b$-Edge Cover adds at least $b(v)$ vetoes to each $v \in V$, giving each $v \in V$ at least $M$ vetoes, and electing $p$.

A similar construction can be used for weighted $2$-approval control by deleting voters.~\end{proof}

Unfortunately, unlike the unweighted cases, this reduction cannot be easily modified for the case of adding weighted voters of $3$-veto elections or deleting weighted voters in $3$-approval elections. This is because it is not possible to compute the final score of $p$ following the addition or deletion of voters in these cases. Recall that in unweighted cases, it can be assumed that without loss of generality we will alter as many voters as possible.

\begin{theorem}
$k$-approval elections for $k \geq 3$ and $k$-veto for $k \geq 4$ are computationally resistant to \$bribery.
\end{theorem}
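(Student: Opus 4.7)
The plan is to reduce from the unpriced bribery problem, which was already shown to be NP-hard for unweighted $k$-approval ($k \geq 3$) and unweighted $k$-veto ($k \geq 4$) in the earlier theorems of this section via reductions from X3C. The key observation is that \$bribery is a strict generalization of unpriced bribery: given any instance $(C, V, p, q)$ of unpriced $\mathcal{E}$-bribery, we can form the \$bribery instance $(C, V, p, q, \pi)$ with $\pi(v) = 1$ for every voter $v \in V$ and budget $q$. Then a set $V'' \subseteq V$ with $\|V''\| \leq q$ is a valid unpriced bribery if and only if $\sum_{v \in V''} \pi(v) \leq q$ is a valid \$bribery, since costs and cardinalities coincide under unit prices.

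First I would state this trivial many-one reduction formally, verify that it runs in polynomial time (it is essentially the identity on the instance with the single additional assignment of unit prices), and note that correctness is immediate from the definitions of the two problems. Then I would invoke the prior theorems showing that unweighted $k$-approval bribery is NP-hard for $k \geq 3$ and unweighted $k$-veto bribery is NP-hard for $k \geq 4$, and conclude by transitivity that \$bribery for these systems inherits NP-hardness.

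I do not expect any real obstacle here: because the hard work of encoding X3C into the election was already done in the earlier unpriced-bribery resistance proofs, no new gadget construction is needed. The only thing worth pointing out, and which I would mention explicitly, is that \$bribery in these systems trivially sits in NP (guess the set of voters to bribe and the new preference profiles, verify the cost bound and check the election outcome in polynomial time), so together with the reduction we obtain NP-completeness rather than just NP-hardness.
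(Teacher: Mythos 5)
Your proposal is correct and matches the paper's argument exactly: the paper also observes that \$bribery generalizes the already-proven-NP-hard unpriced bribery for these systems (your unit-price reduction is the formalization of that one-line remark). The additional note on NP membership is a harmless bonus beyond what the paper states.
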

\begin{proof}
This problem is a generalization of unpriced bribery for the same election systems, which is NP-hard.
\end{proof}

\begin{theorem}
Weighted $1$-approval and $1$-veto elections are computationally vulnerable to constructive control by adding and deleting voters.
\end{theorem}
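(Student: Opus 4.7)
The plan is to give simple greedy polynomial-time algorithms for each of the four problems (weighted $1$-approval and $1$-veto, each under control by adding and by deleting voters). The key structural observation is that in both $1$-approval and $1$-veto each voter contributes weight to exactly one candidate's score, so the effect of adding or removing any particular voter is localized: decisions about different non-distinguished candidates can be handled independently, and within a single candidate the optimal policy is always to operate on the voters of largest weight first.

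For weighted $1$-approval control by adding voters, I would argue that without loss of generality we only add voters that approve $p$, since adding any other voter can only hurt $p$. Since $p$'s score is monotone in how many of these voters we add, we simply sort the voters in $V'$ approving $p$ by weight in decreasing order, add the top (at most) $q$ of them, and then check if $p$ is a winner. For weighted $1$-approval control by deleting voters, I would argue that without loss of generality we never delete voters approving $p$ (which only lowers $p$'s score). Since the voter sets approving each candidate are disjoint (in $1$-approval), the problem decomposes: for each $c \neq p$ we must reduce $c$'s score to at most $\mathrm{Score}(p)$, which requires deleting voters approving $c$ of total weight at least $\mathrm{Score}(c) - \mathrm{Score}(p)$. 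To minimize the count, greedily delete the heaviest voters approving $c$ first. Sum the per-candidate minima and accept iff the total is at most $q$ and each is feasible.

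For weighted $1$-veto the arguments are symmetric. Under control by adding voters, we only add voters vetoing candidates $c \neq p$, since each $c$ must accumulate enough additional vetoes to reach $p$'s (unchanged) veto total; for each such $c$ we greedily add the heaviest unestablished voters vetoing $c$ until the needed weight $\mathrm{Vetoes}(p) - \mathrm{Vetoes}(c)$ is met, and we sum the per-candidate counts and compare to $q$. Under control by deleting voters, we only delete voters vetoing $p$, which lowers $p$'s veto count while leaving all other counts unchanged; the optimal policy is to delete the $q$ heaviest voters vetoing $p$ and then check whether $p$ is a winner.

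In each case correctness follows from a simple exchange argument: swapping any solution voter for a heavier one (in the appropriate candidate's bucket) preserves feasibility and weakly improves $p$'s standing, so an optimal solution can always be taken to consist of the top-weight voters per candidate. I do not expect a real obstacle here; the only mildly subtle point to articulate carefully is the ``without loss of generality'' reductions that restrict attention to voters touching the right candidate, since these reductions are what makes the problem decompose candidate-by-candidate and thereby avoid the knapsack-style hardness that afflicts $k$-approval and $k$-veto for larger $k$.
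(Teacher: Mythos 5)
Your proposal is correct and follows essentially the same route as the paper: all four cases are handled by the same greedy rules (add/delete only voters touching the relevant candidate, heaviest first), justified by the same exchange argument and the same candidate-by-candidate decomposition that the disjoint voter buckets of $1$-approval and $1$-veto make possible. The only difference is presentational -- you state the per-candidate weight thresholds explicitly and sum them, while the paper phrases the same algorithm as an iterative loop -- so there is nothing to add.
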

\begin{proof}
We consider the four cases separately. In $1$-approval constructive control by adding voters, it is only relevant to add voters approving our distinguished candidate $p$. We thus add the heaviest voters approving $p$, up to our quota or until we are out of such voters. Control is possible iff $p$ wins in this case.

In $1$-veto constructive control by adding voters, for each candidate $p'$ beating $p$, we must eventually add votes vetoing $p'$. Without loss of generality we can add the heaviest vote. Our algorithm thus works as follows. Until $p$ is winning, for each candidate $p'$ currently beating $p$, we add the vote vetoing $p'$ with the heaviest weight.

In $1$-approval constructive control by deleting voters, we see that for each candidate $p'$ beating $p$, we must eventually delete a vote approving $p'$. Without loss of generality we can delete the heaviest vote. Our algorithm thus works as follows. Until $p$ is winning, for each candidate $p'$ beating $p$, we delete the vote approving $p'$ with the heaviest weight.

In $1$-veto constructive control by deleting voters, we delete the heaviest votes vetoing $p$.~\end{proof}

\begin{theorem}
Weighted $k$-approval elections for $k \geq 4$ and $k$-veto elections for $k \geq 3$ are computationally resistant to control by adding voters. Similarly, weighted $k$-approval elections for $k \geq 3$ and $k$-veto elections for $k \geq 4$ are computationally resistant to control by deleting voters.
\end{theorem}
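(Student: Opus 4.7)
The plan is to obtain all four hardness claims at once by observing that the unweighted versions of control by adding voters and control by deleting voters are just the special cases of their weighted counterparts in which every voter has weight $1$. Under this embedding, the numerical condition $\|V''\| \leq q$ in the definitions of constructive control by adding and deleting voters is preserved verbatim, and the winner set of any $k$-approval or $k$-veto election is unchanged when we declare each voter to carry weight $1$. Hence the weighted problem is at least as hard as the unweighted problem for every fixed $k$.

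With this trivial reduction in place, it suffices to cite the unweighted hardness theorems already established in this paper. For control by adding voters, the earlier theorem states that unweighted $k$-approval is resistant for $k \geq 4$ and unweighted $k$-veto is resistant for $k \geq 3$; these are exactly the ranges claimed in the current theorem. For control by deleting voters, the earlier theorems give resistance of unweighted $k$-approval for $k \geq 3$ and unweighted $k$-veto for $k \geq 4$, again matching the current ranges exactly.

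Accordingly, I would structure the argument as four one-line invocations: in each case, take an instance of the corresponding unweighted control problem (obtained from X3C via the reductions given earlier), assign weight $1$ to every voter, and note that the yes-instances are preserved in both directions. The weighted membership in NP is immediate since one can guess the subset $V''$ of size at most $q$ and then recompute the weighted $k$-approval or $k$-veto scores in polynomial time, so NP-completeness (and in particular NP-hardness) of the weighted problem follows.

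There is essentially no substantive obstacle here; the heavy lifting was performed in the earlier X3C-based reductions, and the present theorem only records the fact that those reductions already live inside the weighted model. The single bookkeeping item is to confirm the match of the index ranges for $k$ between the unweighted and weighted statements, which holds on the nose for all four subcases.
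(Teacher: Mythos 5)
Your proposal is correct and is essentially identical to the paper's own (one-sentence) argument: the weighted problems generalize the unweighted ones via the all-weights-equal-to-one embedding, and the unweighted hardness results proved earlier cover exactly the stated ranges of $k$. Your additional remarks on NP membership and on checking the index ranges are fine but add nothing beyond the paper's reasoning.
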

\begin{proof}
These problems are generalizations of the corresponding unweighted problems for the same election systems, which are NP-hard.
\end{proof}

\section{\uppercase{Results and Discussion}}

These results show the variance of complexity of misuse among different problems in election systems of the form $k$-approval, $k$-veto, and $f(m)$-approval, and give a result of complexity for infinitely many scoring protocols of an unbounded number of candidates. There are a few interesting cases: These manipulations can either be easy by a simple greedy algorithm, equivalent to a corresponding variation of $b$-Edge Cover, which is easy for the unweighted and unpriced cases but unknown for the weighted or priced variations, or hard by reduction from Set Cover. Manipulations involving the candidate sets are always difficult as a result of reductions from Hitting Set. These results show a connection between election manipulation and seemingly unrelated graph theory problems.

Our results demonstrate the strengths and weaknesses of approval-based election systems, and we hope they can lead to further generalizations and possible developments of systems that better resist such attacks. There are, however, three open problems: that of control by adding voters in $3$-approval elections, control by deleting voters in $3$-veto elections, and \$bribery of $3$-veto elections. These cases cannot be evaluated using the Edge Cover variants, because, for instance, to bribe voters in $3$-veto elections, we must choose between those that distinguish three of the candidates other than $p$. This cannot be accomplished with Simple Weighted $b$-Edge Cover of Multigraphs. The problem of X3C also does not appear to reduce to this or the other cases. We leave these as open problems. The complexity of Simple Weighted $b$-Edge Cover of Multigraphs and Simple $b$-Edge Weighted Cover of Multigraphs is also left as an open problem. We believe it both problems are likely to be polynomial-time computable, due to the results for other variations of $b$-Edge Cover and the connections between this problem and linear programming.

It is important to realize that NP-completeness only addresses the worst-case complexity of a given problem, and does not take into consideration the distribution of problems that might be given. Some simple distributions were considered in~\cite{Walsh,FKN}, and it may be of interest to characterize the complexity of more interesting and realistic distributions, depending upon the application.

This model also makes the assumption that in a $k$-approval election, each voter may vote for any combination of the $k$ candidates independently. We know that in practice, most elections do not follow this principle, and voters in most realistic elections have highly correlated preferences (e.g., nearly single-peaked preferences~(see~\cite{BH06})). In~\cite{FHHR}, several otherwise hard problems in election manipulation are shown to be easy when restricted to the case of single-peaked preferences. It is shown that constructive or destructive control by adding or deleting candidates is easy for plurality elections, as well as weighted manipulation in some scoring protocols other than plurality, become easy when the voters (including the manipulators) are restricted to having single-peaked preferences. It may thus be of interest to characterize the complexity properties of manipulation in a more realistic distribution of voter preferences.

\section{\uppercase{Acknowledgements}}
\noindent We wish to offer our special thanks to Dr. E. Hemaspaandra and anonymous referees for their helpful comments. We especially thank Dr. E. Hemaspaandra for pointing out the link to $b$-Edge Cover.

\end{document}